\documentclass{article}

 \usepackage[preprint]{neurips_2026}

\usepackage[utf8]{inputenc} 
\usepackage[T1]{fontenc}    
\usepackage{url}            
\usepackage{booktabs}       
\usepackage{amsfonts}       
\usepackage{nicefrac}       
\usepackage{microtype}      
\usepackage{xcolor}         
\usepackage{wrapfig}
\usepackage{tabularx}

\usepackage{tikz}
\usetikzlibrary{arrows.meta,positioning,calc}

\definecolor{methodred}{HTML}{FF0902}
\definecolor{methodblue}{HTML}{0000FF}
\definecolor{methodgreen}{HTML}{01C226}

\definecolor{PWOcolor}{HTML}{2F2C9B}

\usepackage{amsmath}
\usepackage{amsthm}
\usepackage{longtable}
\usepackage[parfill]{parskip}
\usepackage{physics}
\usepackage{mathrsfs}
\usepackage{bbm}
\usepackage{thmtools}

\usepackage{graphicx}
\usepackage{subcaption}
\usepackage{adjustbox}

\usepackage[ruled,vlined]{algorithm2e}
\SetAlgoHangIndent{0pt}
\usepackage[dvipsnames]{xcolor}

\newcommand{\I}{\mathbbm{1}}

\newcommand{\R}{\mathbb{R}}

\newcommand{\E}{\mathbb{E}}

\DeclareMathOperator*{\argmin}{argmin}
\DeclareMathOperator{\atanTwo}{\atan_2}

\newtheorem{theorem}{Theorem}[section]

\newtheorem{lemma}[theorem]{Lemma}

\usepackage[
    colorlinks=true,
    linkcolor=blue,
    citecolor=blue,
    urlcolor=blue
]{hyperref}

\usepackage[toc,page,header]{appendix}
\usepackage[nohints]{minitoc}

\usepackage[normalem]{ulem} 

\title{One More Time: Revisiting Neural Quantum States from a Reinforcement Learning Perspective}

\author{%
Juan Agustín Duque$^{1,2,\ast}$ \quad
Sergio García-Heredia$^{2,\ast}$ \quad
Vinicius Hernandes$^{3}$ \quad
Eliška Greplová$^{3}$ \\
\textbf{Thomas Spriggs}$^{3,\dagger}$ \quad
\textbf{Aaron Courville}$^{1, 4, \dagger}$ \quad
\textbf{Anna Dawid}$^{2,\dagger}$ \\
$^{1}$Mila Quebec AI Institute, Université de Montréal, Montréal, Canada \\
$^{2}$Applied Quantum Algorithms $\langle \mathrm{aQa}^L\rangle$, LIACS \& LION, Leiden University, Leiden, Netherlands \\
$^{3}$QuTech and Kavli Institute of Nanoscience, Delft University of Technology, Delft, Netherlands \\
$^{4}$CIFAR AI Chair\\
\texttt{juanduquevan@gmail.com} \\
{\small $^{\ast}$Equal contribution. $^{\dagger}$Equal supervision.}
}

\begin{document}

\maketitle

\begin{abstract}
Neural quantum states (NQS) provide a flexible and scalable framework for approximating quantum many-body wavefunctions. Among NQS parameterizations, autoregressive models are especially attractive because they enable exact, independent sampling from the Born distribution, avoiding the autocorrelation and mixing issues of Markov chain methods. Yet their optimization remains comparatively underexplored: Adam is a scalable method but ignores function space geometry, while stochastic reconfiguration is principled but costly and numerically fragile in large models. To address this gap, we show that variational energy minimization can be viewed as an advantage policy-gradient problem over the Born distribution, motivating trust-region optimization for NQS training. We introduce \emph{Proximal Wavefunction Optimization} (PWO), a principled trust-region algorithm that clips probability-ratio changes in the amplitude channel and phase increments in the phase channel. PWO avoids explicit matrix inversion, reuses samples across multiple updates, and combines the scalability of first-order optimization with theoretical guarantees. Across Ising and frustrated $J_1$--$J_2$ one- and two-dimensional spin systems, PWO improves stability and wall-clock convergence over Adam, minSR, and SPRING. Finally, we fine-tune a $1.5$B-parameter RWKV-7 model, demonstrating NQS optimization at a scale over three orders of magnitude beyond prior work.
\end{abstract}

\section{Introduction}

Quantum physics seeks to predict and understand the behavior of many interacting quantum particles. 
However, determining the exact ground (lowest-energy) state of a system of $N$ quantum bits generally requires diagonalizing a $2^N \times 2^N$ Hamiltonian matrix, quickly rendering exact methods intractable at scale. Variational approaches address this exponential complexity by optimizing a parameterized wavefunction ansatz to approximate the ground state. Neural quantum states (NQS) \citep{Carleo_2017, lange2024architecturesapplicationsreviewneural} build upon this by using neural networks as expressive wavefunction representations, capable of capturing complex correlations \citep{nomura2021dirac-type-d78} and high entanglement \citep{gauvin-ndiaye2025mott-e75}, as well as scaling to high-dimensional systems \citep{pescia2024message-passing-11d}, where traditional methods struggle.

The practical success of NQS depends not only on expressivity but also on the ability to sample from and optimize the variational distribution. State-of-the-art NQS methods \citep{wuVariationalBenchmarksQuantum2024} primarily rely on Markov Chain Monte Carlo (MCMC) to estimate energies and gradients, which can introduce autocorrelation, slow mixing, and unreliable exploration in difficult regimes \citep{wolff1990critical-079, debbio2004critical-247}. Autoregressive NQS remove this sampling bottleneck by factorizing the Born distribution and enabling exact, independent sampling \citep{sharir2020, hibat2020recurrent, moss2025leveraging-475}. As a result, they provide a clean setting for variational optimization: samples can be drawn directly from the current wavefunction, and training dynamics are no longer confounded by Markov chain limitations. Despite these advantages, their adoption remains limited, as stable optimization is a central bottleneck for training accurate autoregressive models.

Existing optimization methods expose a sharp trade-off. First-order optimizers such as Adam \citep{kingma2017adammethodstochasticoptimization} are computationally efficient and scale naturally to large neural networks, but they ignore the geometry of the variational wavefunction and can converge unstably or inaccurately in NQS applications \citep{pfau2020ab, liu2025efficient-31b}. Stochastic reconfiguration (SR), and scalable variants such as minSR \citep{Chen_2024}, are more geometrically principled because they approximate natural-gradient descent in wavefunction space. However, they require solving large and often ill-conditioned linear systems, with costs that become prohibitive for large networks or large sample regimes. Thus, autoregressive NQS resolve an important sampling problem, but stable first-order optimization of large autoregressive wavefunction models remains a central obstacle, which has been approached recently with transfer learning \citep{merali2026parallelscanrecurrentneural}.

\textbf{Contributions.}
This gap points to a missing optimization principle in NQS. In this work, we observe that the variational objective in NQS is mathematically equivalent to a policy-gradient objective in reinforcement learning (RL) under certain assumptions. While this connection has been implicitly present since early work \citep{Carleo_2017}, it has not been formalized or leveraged to design modern optimizers. 
In this work, we show that the
\emph{RL-style trust-region optimization improves NQS training scalability and stability.} Our contributions are as follows:
\begin{itemize}
    \item We formally connect variational energy minimization and policy-gradient RL, showing that the NQS gradient admits an advantage-weighted form over the Born distribution.
    \item We introduce \emph{Proximal Wavefunction Optimization (PWO)}, an algorithm for NQS training inspired by Proximal Policy Optimization (PPO) \citep{schulman2017proximal}, and prove that the PWO surrogate satisfies a trust-region improvement bound, allowing sample reuse.
    \item We show that PWO improves the stability and convergence speed of autoregressive NQS compared to Adam and minSR on standard benchmarks (1D, 2D Ising and $J_1$--$J_2$).
    \item Finally, we demonstrate the scalability of PWO by fine-tuning a 1.5 billion parameter RWKV-7 LLM \citep{peng2025rwkv7gooseexpressivedynamic} on the 1-D Ising model.
\end{itemize}

\section{Background}\label{s:background}

\subsection{Reinforcement Learning}

Reinforcement Learning (RL) is a machine learning paradigm in which an agent learns to make decisions by interacting with an environment. At each time step \( t \), the agent observes a state \( s_t \), selects an action \( a_t \), and then receives a reward \( r_{t+1} \) and a new state \( s_{t+1} \) from the transition function $P(\cdot| s_t, a_t)$. The goal of the agent is to learn a \emph{policy} that maximizes the expected return over time. Let \( \pi_{\boldsymbol{\theta}} \) denote a policy parameterized by \( \boldsymbol{\theta}\). Formally, the return is defined over a trajectory \( \tau := (s_0, a_0, r_1, s_1, a_1, r_2, \dots) \), which is a sequence of states, actions, and rewards generated by the agent-environment interaction. Following the notation of \cite{agarwal2021reinforcement}, the probability of a trajectory \( \tau \) under \( \pi_{\boldsymbol{\theta}} \) is given by
\begin{equation}
    \mathcal{P}_{\boldsymbol{\theta}}(\tau) = \mu(s_0) \pi_{\boldsymbol{\theta}}(a_0 | s_0) P(s_1 | s_0, a_0)\pi_{\boldsymbol{\theta}}(a_1 | s_1) P(s_2 | s_1, a_1)\hdots,
\end{equation}

where $\mu(s_0)$ denotes the initial distribution over states. The objective of RL is to find a policy that maximizes the expected \emph{discounted return}, which is captured by the state-value and action-value functions of a policy $\pi_{\boldsymbol{\theta}}$:
\begin{equation}
    V^{\pi_{\boldsymbol{\theta}}}(s)
    :=
    \mathbb{E}_{\pi_{\boldsymbol{\theta}}}\!\left[\sum_{t=0}^{\infty}\gamma^t r_{t}\,\middle|\, s_0=s\right],
    \quad
    Q^{\pi_{\boldsymbol{\theta}}}(s,a)
    :=
    \mathbb{E}_{\pi_{\boldsymbol{\theta}}}\!\left[\sum_{t=0}^{\infty}\gamma^t r_{t}\,\middle|\, s_0=s,\ a_0=a\right],
\end{equation}
respectively. In policy optimization, an agent maximizes its expected return by performing gradient ascent with a \emph{policy gradient}  estimator \citep{williams1992simple} of the form:
\begin{equation}
    \label{eq:REINFORCE}
    \grad_{\boldsymbol{\theta}}V^{\pi_{\boldsymbol{\theta}}}(\mu) = \mathbb{E}_{\tau \sim  \mathcal{P}_{\boldsymbol{\theta}}} \left[\sum_{t=0}^\infty \gamma^t A^{\pi_{\boldsymbol{\theta}}}(s_t, a_t) \grad_{\boldsymbol{\theta}} \log \pi_{\boldsymbol{\theta}} (a_t | s_t)\right].
\end{equation}
Here $A^{\pi_{\boldsymbol{\theta}}}(s, a) := Q^{\pi_{\boldsymbol{\theta}}}(s, a) - V^{\pi_{\boldsymbol{\theta}}}(s)$ denotes the advantage of taking action $a$ while in state $s$, \textit{i.e.}, the expected return of taking action $a$ relative to the policy average while in state $s$.

\clearpage
\subsection{Neural Quantum States}

One of the key tasks in theoretical quantum many-body physics is to approximate the ground-state wavefunction of a system with many interacting particles. 
For a system of $N$ spin-$1/2$ particles, 
a configuration is a binary vector $\vb{s}\in\{\pm 1\}^N$, and the wavefunction can be viewed as a complex vector $\ket{\psi}$ in a Hilbert space $\mathcal{H}$ (see Appendix \ref{app:dirac})
indexed by all $2^N$ configurations. Equivalently, it defines a function
$\vb{s}\mapsto \psi(\vb{s})$, where $\psi(\vb{s})\in\mathbb{C}$ is the
amplitude assigned to configuration $\vb{s}$:
\begin{equation}
    \ket{\psi} = \sum_{\vb{s}\in\{\pm1\}^N} \psi(\vb{s}) \ket{\vb{s}},
\end{equation}
where we use the Dirac, bra-ket, notation (see Appendix \ref{app:dirac}). The exponential size of this vector makes explicit representations intractable for large $N$. Neural quantum states (NQS) address this by parameterizing the amplitude function with a neural network, $f_{\boldsymbol{\theta}}$, so that $f_{\boldsymbol{\theta}}(\vb{s})$ provides a compact approximation to $\psi(\vb{s})$ \citep{Carleo_2017,Dawid_2025}. In practice, NQS models output the logarithm of the complex amplitude,
\begin{equation}
   f_{\boldsymbol{\theta}}(\vb{s})
   = \log \psi_{\boldsymbol{\theta}}(\vb{s})
   = \log \abs{\psi_{\boldsymbol{\theta}}(\vb{s})}
     + i\,\arg\psi_{\boldsymbol{\theta}}(\vb{s}) ,
\end{equation}
which separates the wavefunction into a log-modulus and a phase, usually handled by separate networks (channels). This
representation is numerically convenient because amplitudes can vary over many
orders of magnitude. By the Born rule, normalized wavefunctions induce a
probability distribution over configurations,
$P_{\boldsymbol{\theta}}(\vb{s}) \propto
\abs{\psi_{\boldsymbol{\theta}}(\vb{s})}^2$. 
In this work, we focus on autoregressive NQS, which explicitly factorize $P_{\boldsymbol{\theta}}$ and enable exact independent sampling from the Born distribution.

\subsection{Finding the Ground State}
The Hamiltonian $\hat H$ is the energy operator of a quantum system: it encodes the interactions, external fields, and kinetic terms that determine which wavefunctions have low or high energy. A popular application of the NQS framework is to find the ground state of quantum systems. Given a description of a physics system within a Hamiltonian, $\hat{H}$, the ground state is the eigenstate $\ket{\psi_0}$ corresponding to the lowest eigenvalue $E_0$. 
This can be cast as a minimization problem:
\begin{equation}
    E_0\leq E[\psi] = \frac{\mel{\psi}{\hat{H}}{\psi}}{\braket{\psi}},\quad\forall\ket{\psi}\in\mathcal{H},
\end{equation}
where the equality holds if and only if $\ket{\psi}$ belongs to the ground-state eigenspace -- this is known as the \textit{variational principle}. Taking this variational formulation of the ground state as the starting point, so-called variational methods define a parametric family of states $\ket{\psi_{\boldsymbol{\theta}}}$ and approximate the ground state by solving 
\begin{equation}
    \label{eq:vmc_objective}
    \boldsymbol{\theta}^* = \argmin_{\boldsymbol{\theta}} E[\psi_{\boldsymbol{\theta}}] = \argmin_{\boldsymbol{\theta}} \frac{\langle \psi_{\boldsymbol{\theta}} | \hat{H} | \psi_{\boldsymbol{\theta}} \rangle}{\langle \psi_{\boldsymbol{\theta}} | \psi_{\boldsymbol{\theta}} \rangle}.
\end{equation}
In the case of NQS, this family of functions is given by a neural network, as already discussed. 

\subsection{Variational Monte Carlo}

To efficiently compute a minimizer of Eq.~\eqref{eq:vmc_objective}, NQS uses Monte Carlo estimates. This approach is known as variational Monte Carlo (VMC). In particular, the energy can be written as
\begin{equation}
    E[\psi_{\boldsymbol{\theta}}]
    = \frac{\mel{\psi_{\boldsymbol{\theta}}}{\hat{H}}{\psi_{\boldsymbol{\theta}}}}{\braket{\psi_{\boldsymbol{\theta}}}{\psi_{\boldsymbol{\theta}}}}
    = \sum_{\vb{s}} \mathcal{P}_{\boldsymbol{\theta}}(\vb{s})\, E^{\mathrm{loc}}_{\boldsymbol{\theta}}(\vb{s})
    = \E_{\vb{s}\sim \mathcal{P}_{\boldsymbol{\theta}}}\!\left[E^{\mathrm{loc}}_{\boldsymbol{\theta}}(\vb{s})\right],
    \label{eq:energy_expectation}
\end{equation}
where we define
\begin{equation}
    \mathcal{P}_{\boldsymbol{\theta}}(\vb{s})=\frac{\abs{\psi_{\boldsymbol{\theta}}(\vb{s})}^2}{\sum_{\vb{s}'}\abs{\psi_{\boldsymbol{\theta}}(\vb{s}')}^2}, \quad E^{\mathrm{loc}}_{\boldsymbol{\theta}}(\vb{s}) := \sum_{\vb{s}'}\frac{\psi_{\boldsymbol{\theta}}(\vb{s}')}{\psi_{\boldsymbol{\theta}}(\vb{s})}\mel{\vb{s}}{\hat{H}}{\vb{s}'}.
\end{equation}
With this formulation, NQS estimates the loss function by sampling a set of spin configurations $\{\vb{s}_i\}_{i=1}^M$ from the parameterized distribution $\mathcal{P}_{\boldsymbol{\theta}}$ and computing the sample mean of $E^{\mathrm{loc}}_{\boldsymbol{\theta}}(\mathbf{s}_i)$:
\begin{equation}
    L(\boldsymbol{\theta}) \approx \bar{E}_{\boldsymbol{\theta}}^{\mathrm{loc}} \equiv \frac{1}{M} \sum_{i=1}^M E^{\mathrm{loc}}_{\boldsymbol{\theta}}(\mathbf{s}_i).
\end{equation}
By differentiating Eq.~\eqref{eq:energy_expectation} (see Appendix \ref{app:variational_gradient}) it is possible to show that its gradient with respect to $\boldsymbol{\theta}$ 
can be expressed as an expected value over $\mathcal{P}_{\boldsymbol{\theta}}$ and estimated using either MCMC sampling or direct sampling of autoregressive models:
\begin{equation}
    \label{eq:vmc}
    \partial_{\theta_i}L(\boldsymbol{\theta}) = \E_{\vb{s}\sim\mathcal{P}_{\boldsymbol{\theta}}}\bqty{2\Re\Bqty{
    \pqty{E^{\mathrm{loc}}_{\boldsymbol{\theta}}(\mathbf{s}) - \mathbb{E}_{\mathbf{s}\sim\mathcal{P}_{\boldsymbol{\theta}}}[E^{\mathrm{loc}}_{\boldsymbol{\theta}}(\mathbf{s})]}O_i(\vb{s})^\ast
    }},
\end{equation}
for $i=1,\ldots, P$, where $O_i(\vb{s}) = \partial_{\theta_i}\log\psi_{\boldsymbol{\theta}}$ are the so-called score functions.


\subsection{Stochastic Reconfiguration}

Stochastic Gradient Descent (SGD) \citep{Robbins1951} is a simple and effective optimization method in many scenarios. However, in the context of VMC, it can suffer from slow convergence and instability due to the complex geometry of the parameter space. The key issue is that small changes in the parameters, $\boldsymbol{\theta}$, do not necessarily correspond to small changes in the quantum state $\ket{\psi_{\boldsymbol{\theta}}}$.

Stochastic reconfiguration (SR) \citep{sorella_green_1998}, which generalizes natural gradient descent \citep{amariNaturalGradientWorks1998} to VMC, preconditions the gradient $\grad_{\boldsymbol{\theta}}L(\boldsymbol{\theta})$ to ensure that the distance between $\ket{\psi_{\boldsymbol{\theta}}}$ at one training iteration and the next is small. This distance is given by the infidelity between the two states and approximated to second order by the Fubini-Study metric,
\begin{equation}
    S_{i j}= \operatorname{Re}\left\{\operatorname{Cov}_{\mathbf{s}\sim \mathcal{P}_{\boldsymbol{\theta}}}\left[O_{i}^{*}(\mathbf{s}), O_{j}(\mathbf{s})\right]\right\},\qquad i, j = 1,\ldots, P,
\end{equation}
which encodes the local geometry of the Hilbert space at the current parameter configuration. In mathematical terms, SR modifies the update rule in the following way:
\begin{equation}
    \boldsymbol{\theta}\leftarrow \boldsymbol{\theta}-\eta \mathbf{S}^{-1} \grad_{\boldsymbol{\theta}} L(\boldsymbol{\theta}),
\end{equation}
where $\eta\in\R$ is the learning rate. We discuss the high computational costs of SR and its more recent improvements in Appendix \ref{app:sr_analysis}, motivating the use of first-order optimization methods.

\section{Proximal Wavefunction Optimization}

\begin{algorithm}[t]
\DontPrintSemicolon
\caption{Proximal Wavefunction Optimization (PWO)}
\label{alg:PWO}
\KwIn{Hamiltonian $\hat H$, NQS $\psi_{\vb*{\theta}}$, batch size $M$, inner epochs $K$, amplitude clip $\epsilon$, phase clip $\delta$}
Initialize ${\boldsymbol{\theta}}$\;
\While{not converged}{
${\boldsymbol{\theta}}_{\mathrm{old}}\gets{\boldsymbol{\theta}}$\; sample $\{\vb{s}_i\}_{i=1}^M\sim\mathcal P_{{\boldsymbol{\theta}}_{\mathrm{old}}}$\;
Cache reference log-probabilities, $\log \mathcal P_{\boldsymbol{\theta}_{\mathrm{old}}}(\vb{s}_i)$; phases, $\arg\psi_{\boldsymbol{\theta}_{\mathrm{old}}}(\vb{s}_i)$; and normalized real and imaginary advantages, $A_{\boldsymbol{\theta}_{\mathrm{old}}}^{\mathrm{R}}(\vb{s}_i)$ and $A_{\boldsymbol{\theta}_{\mathrm{old}}}^{\mathrm{I}}(\vb{s}_i)$ respectively, of $\{\vb{s}_i\}_{i=1}^M$.\;
\For{$k=1,\dots,K$}{
Compute current log-probabilities $\log \mathcal P_{\boldsymbol{\theta}}(\vb{s}_i)$ and phases $\arg\psi_{\boldsymbol{\theta}}(\vb{s}_i)$ for $\{\vb{s}_i\}_{i=1}^M$\;
$r_i \gets \exp(\log \mathcal P_{\boldsymbol{\theta}}(\vb{s}_i)-\log \mathcal P_{\boldsymbol{\theta}_{\mathrm{old}}}(\vb{s}_i))$\;
$\phi_i \gets 2\cdot \operatorname{atan}_2\!\big(\sin(\arg\psi_{\boldsymbol{\theta}}(\vb{s}_i)-\arg\psi_{\boldsymbol{\theta}_{\mathrm{old}}}(\vb{s}_i)),\cos(\arg\psi_{\boldsymbol{\theta}}(\vb{s}_i)-\arg\psi_{\boldsymbol{\theta}_{\mathrm{old}}}(\vb{s}_i))\big)$\;
$\ell_i^R \gets \max\!\big(r_iA_{\boldsymbol{\theta}_{\mathrm{old}}}^{\mathrm{R}}(\vb{s}_i),\;\mathrm{clip}(r_i,1-\epsilon,1+\epsilon)A_{\boldsymbol{\theta}_{\mathrm{old}}}^{\mathrm{R}}(\vb{s}_i)\big)$\;
$\ell_i^I \gets \text{stop\_gradient}(r_i)\cdot\max\!\big(\phi_iA_{\boldsymbol{\theta}_{\mathrm{old}}}^{\mathrm{I}}(\vb{s}_i),\;\mathrm{clip}(\phi_i,-\delta,\delta)A_{\boldsymbol{\theta}_{\mathrm{old}}}^{\mathrm{I}}(\vb{s}_i)\big)$\;
$
{\boldsymbol{\theta}} \gets {\boldsymbol{\theta}} - \eta \grad_{\boldsymbol{\theta}} \frac1M\sum_{i=1}^M\big(\ell_i^R+\ell_i^I\big).
$
}
}
\end{algorithm}
Here, we introduce a new optimization procedure for training NQS. We first draw an equivalence between the current NQS optimization paradigm and the policy gradient update in RL, and then use this to motivate applying Proximal Policy Optimization to NQS.
\\
\begin{restatable}[Policy-gradient form of variational energy minimization]{proposition}{pgnqsprop}
\label{prop:pg_nqs}

Assume the Hamiltonian is stoquastic, \textit{i.e.} its matrix representation in a chosen computational basis has non-positive off-diagonal elements. Then we can assume $f_{\boldsymbol{\theta}} = \log\psi_{\boldsymbol{\theta}} = \log\abs{\psi_{\boldsymbol{\theta}}}$,
and the gradient of the variational energy can be written in policy-gradient form as
\begin{equation}
    \label{eq:pg_nqs}
    \grad_{\boldsymbol{\theta}} E[\psi_{\boldsymbol{\theta}}]
    = \E_{\vb{s}\sim\mathcal{P}_{\boldsymbol{\theta}}}\!\bqty{\pqty{E^{\mathrm{loc}}_{\boldsymbol{\theta}}(\mathbf{s}) - \mathbb{E}_{\mathbf{s}\sim\mathcal{P}_{\boldsymbol{\theta}}}[E^{\mathrm{loc}}_{\boldsymbol{\theta}}(\mathbf{s})]}\,\grad_{\boldsymbol{\theta}}\log \mathcal{P}_{\boldsymbol{\theta}}(\vb{s})}.
\end{equation}
In particular, variational energy minimization is equivalent to an advantage policy-gradient update over configurations. For a proof, see Appendix \ref{app:proof_of_pg_nqs}.
\end{restatable}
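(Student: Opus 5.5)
The plan has three parts: justify a real, positive ansatz; specialise the general VMC gradient to it; then read off the policy-gradient form.

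\textbf{Step 1 (positivity).} For a stoquastic $\hat H$, Perron--Frobenius theory lets us choose a ground state with nonnegative amplitudes. Shifting by a constant, $-\hat H + c\,\mathbb{1}$ is an entrywise nonnegative matrix. Its top eigenvector, which is the ground state of $\hat H$, can therefore be taken entrywise nonnegative. A second argument gives the same conclusion variationally: replacing $\psi$ by $|\psi|$ leaves the diagonal contribution $\sum_{\vb{s}}|\psi(\vb{s})|^2 H_{\vb{s}\vb{s}}$ unchanged and can only lower the off-diagonal contribution, because $H_{\vb{s}\vb{s}'}\le 0$ gives $\sum_{\vb{s}\neq\vb{s}'} H_{\vb{s}\vb{s}'}\psi^*(\vb{s})\psi(\vb{s}') \ge \sum_{\vb{s}\neq\vb{s}'} H_{\vb{s}\vb{s}'}|\psi(\vb{s})||\psi(\vb{s}')|$. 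Hence $E[|\psi|]\le E[\psi]$, and restricting the ansatz to $f_{\boldsymbol\theta}=\log|\psi_{\boldsymbol\theta}|$ is without loss of optimality. Generic ground states have no exact zeros, so this logarithm is well defined; where zeros occur we simply work on the support.

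\textbf{Step 2 (gradient identity).} With a real positive $\psi_{\boldsymbol\theta}$, both the score $O_i(\vb{s})=\partial_{\theta_i}\log\psi_{\boldsymbol\theta}(\vb{s})$ and the local energy $E^{\mathrm{loc}}_{\boldsymbol\theta}(\vb{s})$ are real. The local energy is real because it is a ratio of real amplitudes weighted by real Hamiltonian entries. Substituting into Eq.~\eqref{eq:vmc}, the real part and the conjugation drop out, and
\[
\partial_{\theta_i}E=\E_{\vb{s}\sim\mathcal P_{\boldsymbol\theta}}\big[(E^{\mathrm{loc}}_{\boldsymbol\theta}(\vb{s})-\E[E^{\mathrm{loc}}_{\boldsymbol\theta}])\,2\,\partial_{\theta_i}\log\psi_{\boldsymbol\theta}(\vb{s})\big].
\]

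\textbf{Step 3 (conversion to a score on $\mathcal P_{\boldsymbol\theta}$).} By definition,
\[
\log\mathcal P_{\boldsymbol\theta}(\vb{s}) = 2\log\psi_{\boldsymbol\theta}(\vb{s}) - \log Z_{\boldsymbol\theta}, \qquad Z_{\boldsymbol\theta}=\sum_{\vb{s}'}\psi_{\boldsymbol\theta}(\vb{s}')^2,
\]
so $2\,\partial_{\theta_i}\log\psi_{\boldsymbol\theta}=\partial_{\theta_i}\log\mathcal P_{\boldsymbol\theta}+\partial_{\theta_i}\log Z_{\boldsymbol\theta}$. The second term does not depend on $\vb{s}$, and it multiplies the centred quantity $E^{\mathrm{loc}}-\E[E^{\mathrm{loc}}]$, whose expectation under $\mathcal P_{\boldsymbol\theta}$ is zero. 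That term therefore vanishes, which yields Eq.~\eqref{eq:pg_nqs}. For autoregressive models $Z_{\boldsymbol\theta}=1$, so the correction is trivially absent.

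\textbf{Interpretation.} Map the RL quantities as follows: each configuration $\vb{s}$ is a one-step trajectory, or equivalently a sequence of conditional actions in the autoregressive factorisation. The reward is $-E^{\mathrm{loc}}_{\boldsymbol\theta}$ and the baseline is $V=-E[\psi_{\boldsymbol\theta}]$. The bracket in Eq.~\eqref{eq:pg_nqs} is then minus the advantage, and the gradient matches Eq.~\eqref{eq:REINFORCE} with $\gamma$ irrelevant, so descent on $E$ is ascent on that objective.

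One subtlety remains. Unlike a fixed RL reward, $E^{\mathrm{loc}}_{\boldsymbol\theta}$ itself depends on $\boldsymbol\theta$. The identity still holds because Eq.~\eqref{eq:vmc} already accounts for this dependence: the contribution from $\nabla E^{\mathrm{loc}}_{\boldsymbol\theta}$ has zero mean, by Hermiticity of $\hat H$. The advantage must therefore be treated as evaluated at the current $\boldsymbol\theta$ and held fixed under differentiation.

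\textbf{Main obstacle.} The only non-routine step is Step 1, i.e.\ justifying that the phase can be dropped. I would rely on the $|\psi|$ variational inequality above rather than on irreducibility conditions, since that inequality needs only the sign condition on the off-diagonal entries.
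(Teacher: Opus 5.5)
Your proof is correct, but it is organised differently from the paper's.

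The paper does not start from Eq.~\eqref{eq:vmc}. It differentiates $\sum_{\vb{s}}\mathcal P_{\boldsymbol{\theta}}(\vb{s})E^{\mathrm{loc}}_{\boldsymbol{\theta}}(\vb{s})$ by the product rule, as in the proof of the policy-gradient theorem. This gives a score-function term $\E[E^{\mathrm{loc}}_{\boldsymbol{\theta}}\nabla_{\boldsymbol{\theta}}\log\mathcal P_{\boldsymbol{\theta}}]$ plus a pathwise term $\E[\nabla_{\boldsymbol{\theta}}E^{\mathrm{loc}}_{\boldsymbol{\theta}}]$. The paper rewrites the pathwise term as $Z_{\boldsymbol{\theta}}^{-1}\sum_{\vb{s},\vb{s}'}\psi_{\boldsymbol{\theta}}(\vb{s})\psi_{\boldsymbol{\theta}}(\vb{s}')\hat H_{\vb{s},\vb{s}'}\,(\vb{O}(\vb{s}')-\vb{O}(\vb{s}))$ and kills it by pairing a symmetric factor with an antisymmetric one. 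It then subtracts the constant baseline.

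You instead take the covariance form of the VMC gradient as given; Hermiticity was already used there, in the Rayleigh-quotient derivative of Appendix~\ref{app:variational_gradient}. You then reduce the proposition to the identity $2\nabla_{\boldsymbol{\theta}}\log\psi_{\boldsymbol{\theta}}=\nabla_{\boldsymbol{\theta}}\log\mathcal P_{\boldsymbol{\theta}}+\nabla_{\boldsymbol{\theta}}\log Z_{\boldsymbol{\theta}}$, whose $\vb{s}$-independent part is annihilated by the centred local energy.

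What your route buys:
\begin{itemize}
\item It is shorter.
\item It makes plain that the identity holds for any real, nonvanishing ansatz once $\hat H$ is real symmetric. Stoquasticity, in both proofs, only serves to justify that restricting to a positive ansatz costs nothing.
\item Your $\abs{\psi}$ variational inequality justifies that restriction more cleanly than the paper's appeal to Perron--Frobenius. The paper's claim of strictly positive components tacitly requires irreducibility.
\end{itemize}

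What the paper's route buys is the explicit RL reading. It isolates the contribution coming from the $\boldsymbol{\theta}$-dependence of the ``reward'' $E^{\mathrm{loc}}_{\boldsymbol{\theta}}$ and shows why that contribution vanishes; you only state this as a remark.

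On that remark: the vanishing of $\E[\nabla_{\boldsymbol{\theta}}E^{\mathrm{loc}}_{\boldsymbol{\theta}}]$ needs $\psi_{\boldsymbol{\theta}}$ and $\hat H$ to be real, not Hermiticity alone. For complex amplitudes its real part is $2\E[E_I O_I]$, which is exactly the phase term in Eq.~\eqref{eq:complex-vmc-gradient}.
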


This proposition establishes a direct link between the NQS and RL frameworks (see also Tab.~\ref{tab:analogy}) by relating Eqs.~\eqref{eq:vmc} and~\eqref{eq:pg_nqs} which motivates our algorithm, \emph{Proximal Wavefunction Optimization} (PWO). The key observation is that the SR gradient is the result of solving the approximate constraint optimization problem of minimizing the expected energy while keeping the change in infidelity smaller than some quantity, $\delta$. An analogous problem has been studied in the RL literature for a long time, but referred to as \emph{trust region optimization} \citep{10.5555/645531.656005, schulman2017trust, schulman2017proximal}. The fundamental idea is similar to that of SR: maximizing the expected reward while keeping the change in the total variation distance smaller than some quantity, $\delta$.

\emph{Proximal Policy Optimization} (PPO) \citep{schulman2017proximal}, introduces a simple heuristic that empirically shows monotonic improvements by constraining updates to keep the updated policy close to the previous one. Applied to NQS, PPO would minimize the following surrogate loss:
\begin{equation}
    \label{eq:PPO}
    L_{\mathrm{mod}}^{\mathrm{clip}}(\boldsymbol{\theta}) = \mathbb{E}_{\vb{s} \sim  \mathcal{P}_{{\boldsymbol{\theta}}_{\mathrm{old}}}} \left[ \max\left(r_{\boldsymbol{\theta}}(\vb{s}) A_{\boldsymbol{\theta}_{\mathrm{old}}}^{\mathrm{R}}(\vb{s}), \text{clip}(r_{\boldsymbol{\theta}}(\vb{s}), 1-\epsilon, 1+\epsilon)A_{\boldsymbol{\theta}_{\mathrm{old}}}^{\mathrm{R}}(\vb{s}) \right) \right],
\end{equation}
where $r_{\boldsymbol{\theta}} = \mathcal P_{{\boldsymbol{\theta}}}/\mathcal P_{{\boldsymbol{\theta}}_{\mathrm{old}}}$ is the importance sampling ratio between the updated and previous distributions, $\epsilon$ is the amplitude clip, the max comes from solving a minimization problem, and
\begin{equation}
    A_{\boldsymbol{\theta}_{\mathrm{old}}}^{\mathrm{R}}(\vb{s}) := \Re\Bqty{E^{\mathrm{loc}}_{\boldsymbol{\theta}_{\mathrm{old}}}(\vb{s})} - \E_{\vb{s}\sim \mathcal{P}_{\boldsymbol{\theta}_{\mathrm{old}}}} \left[\Re\Bqty{E^{\mathrm{loc}}_{\boldsymbol{\theta}_{\mathrm{old}}}(\vb{s})}\right],
\end{equation}
is the equivalent of the advantage estimate, yielding the local energy of configuration $\vb{s}$ relative to the expected energy under the current NQS. In the case of RL, clipping ensures that policy updates remain conservative, thereby maintaining stability during optimization. 

While PPO is a natural fit for the amplitude channel, it does not by itself account for the full VMC gradient. As shown in Appendix~\ref{app:vmc_decomposition}, the variational gradient also contains an imaginary component, which governs how the phase of the wavefunction should evolve. We, therefore, introduce a second proximal objective that controls phase updates directly by constraining the wrapped phase increment between the current and reference models. Concretely, letting $\phi_{\boldsymbol{\theta}}(\vb{s})$ denote the wrapped phase difference (see Appendix~\ref{app:exact_complex_decomposition}), we minimize the clipped phase surrogate loss
\begin{equation}
    \label{eq:phase_clip}
    L_{\mathrm{arg}}^{\mathrm{clip}}(\boldsymbol{\theta})
    =
    \E_{\vb{s}\sim \mathcal{P}_{\boldsymbol{\theta}_{\mathrm{old}}}}
    \left[
    \operatorname{sg}\!\big(r_{\boldsymbol{\theta}}(\vb{s})\big)\,
    \max\!\left(
    \phi_{\boldsymbol{\theta}}(\vb{s})A_{\boldsymbol{\theta}_{\mathrm{old}}}^{\mathrm{I}}(\vb{s}),
    \mathrm{clip}\big(\phi_{\boldsymbol{\theta}}(\vb{s}),-\delta,\delta\big)A_{\boldsymbol{\theta}_{\mathrm{old}}}^{\mathrm{I}}(\vb{s})
    \right)
    \right],
\end{equation}
where $\operatorname{sg}(\cdot)$ denotes the stop-gradient operator (see Theorem~\ref{thm:pwo-first-order-consistency-main}), and
\begin{equation}
    A_{\boldsymbol{\theta}_{\mathrm{old}}}^{\mathrm{I}}(\vb{s}) := \Im\Bqty{E^{\mathrm{loc}}_{\boldsymbol{\theta}_{\mathrm{old}}}(\vb{s})} - \E_{\vb{s}\sim \mathcal{P}_{\boldsymbol{\theta}_{\mathrm{old}}}} \left[\Im\Bqty{E^{\mathrm{loc}}_{\boldsymbol{\theta}_{\mathrm{old}}}(\vb{s})}\right].
\end{equation}
The imaginary part of the local energy has zero expectation, so the centering is retained only to mirror the RL advantage. Equation~\eqref{eq:phase_clip} is the phase analog of PPO: instead of clipping a probability ratio, it clips the phase increment itself, preventing abrupt rotations of the wavefunction while still following the imaginary part of the VMC gradient. Using the phase increment instead of the ratio of phases ensures that the gradient of the combined loss (Eqs.~\eqref{eq:PPO}$+$\eqref{eq:phase_clip}) matches the original one (Eq.~\eqref{eq:vmc}) when we are on-policy, \textit{i.e.}, when the importance sampling ratio $r_{\boldsymbol{\theta}}$ is one (no sample reuse).

\begin{table}[t]
\vspace{-0.75em}
\centering
\normalsize
\setlength{\tabcolsep}{4pt}
\renewcommand{\arraystretch}{0.9}
\begin{tabularx}{\linewidth}{@{}
    >{\raggedright\arraybackslash}p{0.16\linewidth}
    >{\centering\arraybackslash}p{0.22\linewidth}
    >{\raggedright\arraybackslash}p{0.22\linewidth}
    >{\centering\arraybackslash}X
@{}}
\toprule
\multicolumn{2}{c}{\textbf{Reinforcement Learning}}
&
\multicolumn{2}{c}{\textbf{Variational Monte Carlo}} \\
\cmidrule(r){1-2}
\cmidrule(l){3-4}

Policy
&
$\pi_{\boldsymbol{\theta}}(a\mid s)$
&
Born distribution
&
$\mathcal{P}_{\boldsymbol{\theta}}(\vb{s})
\propto |\psi_{\boldsymbol{\theta}}(\vb{s})|^2$
\\

Advantage
&
$A^{\pi_{\boldsymbol{\theta}}}(s_t,a_t)$
&
Centered local energy
&
$\Delta E(\vb{s})
=
E^{\mathrm{loc}}_{\boldsymbol{\theta}}(\vb{s})
-
E[\psi_{\boldsymbol{\theta}}]$
\\

Policy gradient
&
$A^{\pi_{\boldsymbol{\theta}}}
\grad_{\boldsymbol{\theta}}
\log \pi_{\boldsymbol{\theta}}$
&
VMC force
&
$2\Re\!\left[
\Delta E(\vb{s})
\grad_{\boldsymbol{\theta}}
\log\psi_{\boldsymbol{\theta}}^\ast(\vb{s})
\right]$
\\

KL trust region
&
$D_{\mathrm{KL}}
(\pi_{\boldsymbol{\theta}_{\mathrm{old}}}
\|
\pi_{\boldsymbol{\theta}})$
&
Infidelity
&
$\mathcal{I}
(\psi_{\boldsymbol{\theta}},
\psi_{\boldsymbol{\theta}+\delta\boldsymbol{\theta}})$
\\

Fisher matrix
&
$\mathbf{F}$
&
Fubini--Study metric
&
$\mathbf{S}$
\\

\bottomrule
\end{tabularx}
\vspace{0.25em}
\caption{Correspondence between RL policy-gradient methods and VMC methods for NQS.}
\label{tab:analogy}
\vspace{-1em}
\end{table}

PWO combines the amplitude and phase losses into an \emph{efficient} (see Appendix \ref{app:pwo_speed}) first-order optimization procedure. For the amplitude channel, Proposition~\ref{prop:pg_nqs} shows that, when the phase is fixed, variational energy minimization takes the form of an advantage-weighted policy-gradient update over configurations, which naturally motivates a PPO-style clipped surrogate based on importance ratios. For the phase channel, Appendix~\ref{app:pwo_surrogate_bounds} shows that the imaginary part of the VMC gradient can be conservatively optimized through a surrogate that constrains the phase increment. In practice, PWO performs $K$ inner updates on a frozen batch of configurations sampled from the reference Born distribution $\mathcal P_{\boldsymbol{\theta}_{\mathrm{old}}}$, clipping probability-ratio changes for the amplitude and wrapped phase increments for the phase. Because the same batch is reused across inner steps, the phase objective includes detached importance weights to correct for the mismatch between the reference and current sampling distributions (see Theorem~\ref{thm:pwo-first-order-consistency-main}). Algorithm~\ref{alg:PWO} summarizes the resulting procedure.

\section{Theoretical Analysis}
\begin{figure}[t]
    \centering
    \begin{tikzpicture}[x=\linewidth,y=0.5625\linewidth]
        \node[anchor=south west,inner sep=0] at (0,0)
        {\includegraphics[width=\linewidth]{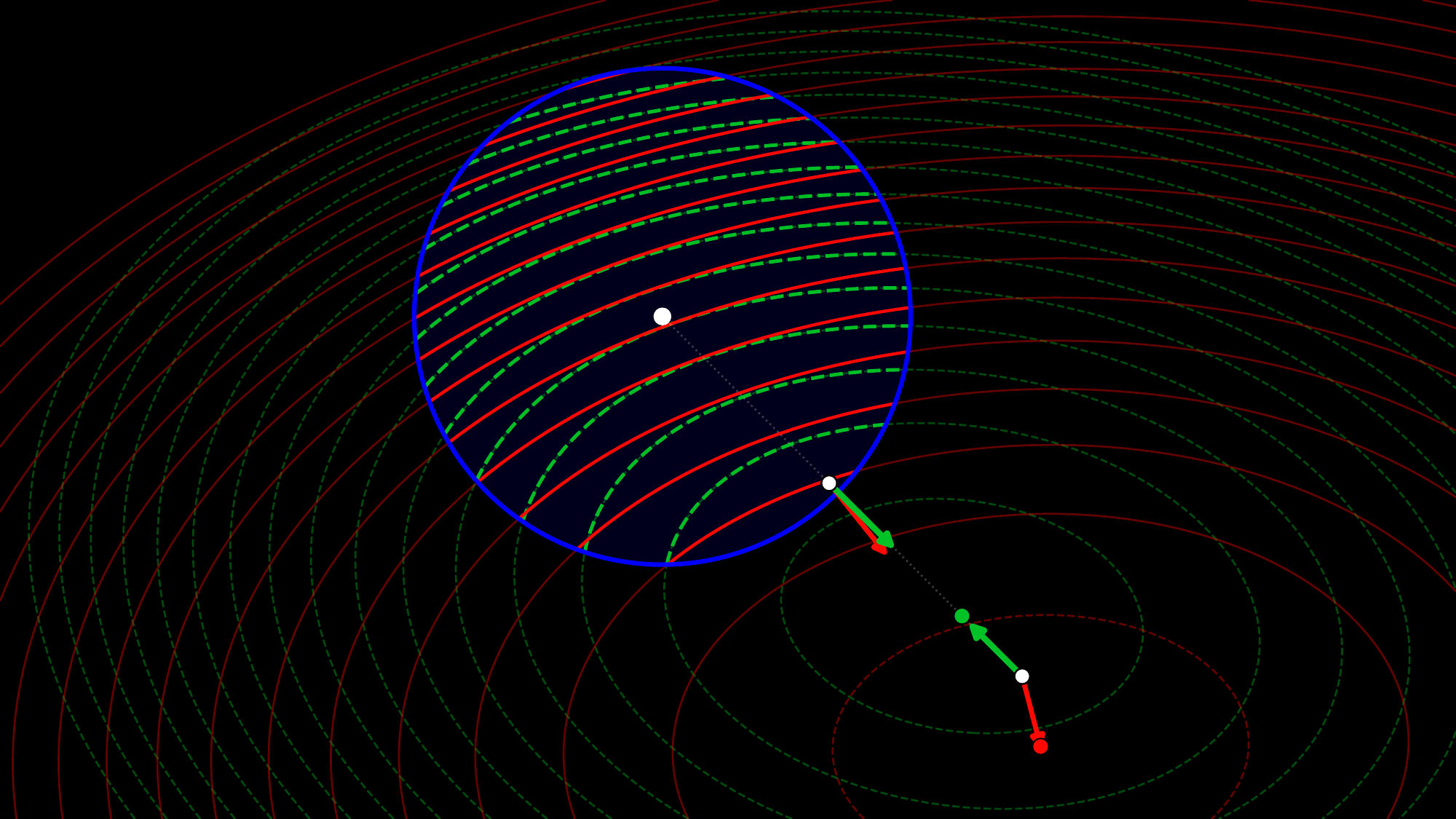}};


        \node[
            text=white,
            font=\sffamily\small\bfseries,
            align=center
        ] at (0.20,0.7)
        {old\\wavefunction\\ $\psi_{\boldsymbol{\theta}_{\rm old}}$};

        \node[
            text=white,
            font=\sffamily\small\bfseries,
            align=center
        ] at (0.5,0.16)
        {current\\wavefunction\\$\psi_{\theta}$};

        \draw[
            -{Latex[length=2mm]},
            white,
            thick
        ] (0.5,0.24) -- (0.565,0.405);

        \node[
            text=methodblue,
            font=\sffamily\normalsize\bfseries,
            align=center
        ] at (0.28,0.38)
        {trust\\region};

        \node[
            text=methodgreen,
            font=\sffamily\small\bfseries,
            align=center
        ] at (0.85,0.33)
        {true\\loss};

        \node[
            text=methodred,
            font=\sffamily\small\bfseries,
            align=center
        ] at (0.76,0.56)
        {surrogate\\loss};




        \draw[
            -{Latex[length=2mm]},
            white,
            thick
        ] (0.28,0.7) -- (0.45,0.62);





        \node[
            text=white,
            font=\small,
            align=center
        ] at (0.56,0.44)
        {inside $\nabla L(\theta) \approx \nabla L(\theta_\text{old})$};

        \node[
            text=white,
            font=\small,
            align=center
        ] at (0.84,0.19)
        {outside $\nabla L(\theta) \neq \nabla L(\theta_\text{old})$};

    \end{tikzpicture}

    \caption{
    Trust-region intuition for PWO. Inside the trust region, the surrogate landscape gives a reliable local improvement direction; outside the trust region, the surrogate and true
    landscapes may disagree substantially.
    }
    \label{fig:trust_region_cartoon}
\end{figure}

The theoretical justification for PWO parallels monotonic-improvement analyses in policy optimization. In CPI \citep{10.5555/645531.656005} and TRPO \citep{schulman2017trust}, the true return change is controlled by a surrogate objective under a reference policy plus a penalty for moving too far from it. PWO follows the same principle with the return replaced by negative variational energy, probability ratios controlling the amplitude channel, and phase increments controlling the phase channel. Figure~\ref{fig:trust_region_cartoon} illustrates this principle: trust regions make sample reuse possible by keeping optimization close enough to the reference wavefunction $\psi_{\boldsymbol{\theta}_{\rm old}}$ so that the surrogate loss function evaluated on old samples remains a controlled approximation to the current energy landscape. Theorem~\ref{thm:pwo-gradient-consistency-main} shows that this proximal construction preserves the exact local VMC direction. 
\begin{restatable}[First-order consistency of PWO]{theorem}{firstorderconsistency}
\label{thm:pwo-gradient-consistency-main}
Let $\ket{\psi_{\boldsymbol{\theta}}}$ be differentiable at
$\boldsymbol{\theta}_{\mathrm{old}}$, and assume common support. For $\epsilon,\delta>0$,
\begin{equation}
\left.
\grad_{\boldsymbol{\theta}}
\left(
L^{\mathrm{clip}}_{\mathrm{mod}}(\boldsymbol{\theta})
+
L^{\mathrm{clip}}_{\mathrm{arg}}(\boldsymbol{\theta})
\right)
\right|_{\boldsymbol{\theta}=\boldsymbol{\theta}_{\mathrm{old}}}
=
\left.
\grad_{\boldsymbol{\theta}}E[\psi_{\boldsymbol{\theta}}]
\right|_{\boldsymbol{\theta}=\boldsymbol{\theta}_{\mathrm{old}}}.
\end{equation}
For a proof see Appendix~\ref{app::pwo-first-order-consistency-main-app}.
\end{restatable}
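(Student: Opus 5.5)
The strategy is to differentiate both clipped surrogates at $\boldsymbol{\theta}=\boldsymbol{\theta}_{\mathrm{old}}$. There the clipping is inactive, so each surrogate locally reduces to its unclipped linear term. We then match the sum of the two gradients with the real and imaginary parts of Eq.~\eqref{eq:vmc}.

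\textbf{Step 1: the clipping is locally inactive.} At $\boldsymbol{\theta}_{\mathrm{old}}$ we have $r_{\boldsymbol{\theta}}(\vb{s})=1$ and $\phi_{\boldsymbol{\theta}}(\vb{s})=0$ for every $\vb{s}$ in the (common) support. Both maps are continuous near $\boldsymbol{\theta}_{\mathrm{old}}$: $r$ by differentiability of $\psi_{\boldsymbol{\theta}}$, and $\phi$ because the wrapped difference equals the true phase difference near $0$. Hence, since $\epsilon,\delta>0$, in a neighborhood of $\boldsymbol{\theta}_{\mathrm{old}}$ we have $r\in(1-\epsilon,1+\epsilon)$ and $\phi\in(-\delta,\delta)$. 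In that neighborhood $\mathrm{clip}(r,\cdot)=r$ and $\mathrm{clip}(\phi,\cdot)=\phi$, so each $\max$ is a max of two identical quantities.

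Consequently, locally
\[
L^{\mathrm{clip}}_{\mathrm{mod}}=\E_{\mathcal P_{\boldsymbol{\theta}_{\mathrm{old}}}}\big[r_{\boldsymbol{\theta}}A^{\mathrm R}\big],\qquad
L^{\mathrm{clip}}_{\mathrm{arg}}=\E_{\mathcal P_{\boldsymbol{\theta}_{\mathrm{old}}}}\big[\mathrm{sg}(r_{\boldsymbol{\theta}})\,\phi_{\boldsymbol{\theta}}A^{\mathrm I}\big].
\]
For finite configuration spaces the expectations are finite sums, so differentiation commutes with them.

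\textbf{Step 2: compute the two gradients.} For the amplitude channel, $\grad r_{\boldsymbol{\theta}}|_{\mathrm{old}}=\grad\log\mathcal P_{\boldsymbol{\theta}}=2\Re\{O\}-2\,\E_{\mathcal P}[\Re O]$, where the second term comes from the normalization. Because $A^{\mathrm R}$ is centered, the constant term drops out, leaving $\E[A^{\mathrm R}\,2\Re O]$.

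For the phase channel, the stop-gradient means only $\phi$ is differentiated, and $\mathrm{sg}(r)=1$ at $\boldsymbol{\theta}_{\mathrm{old}}$. Since $\phi_{\boldsymbol{\theta}}=2(\arg\psi_{\boldsymbol{\theta}}-\arg\psi_{\boldsymbol{\theta}_{\mathrm{old}}})$ locally, we get $\grad\phi=2\Im O$. The phase gradient is therefore $\E[A^{\mathrm I}\,2\Im O]$.

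\textbf{Step 3: match with Eq.~\eqref{eq:vmc}.} Write $E^{\mathrm{loc}}-E=A^{\mathrm R}+iA^{\mathrm I}$. This uses that $E$ is real and that $\E[\Im E^{\mathrm{loc}}]=0$ by Hermiticity, which is the fact stated after Eq.~\eqref{eq:phase_clip}. Then
\[
2\Re\{(A^{\mathrm R}+iA^{\mathrm I})O^*\}=2A^{\mathrm R}\Re O+2A^{\mathrm I}\Im O,
\]
which is exactly the sum obtained in Step 2, completing the identification with $\grad E[\psi_{\boldsymbol{\theta}}]$. For the amplitude term one may alternatively cite Proposition~\ref{prop:pg_nqs}. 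However, the direct computation above avoids the stoquastic assumption and is what I would use.

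\textbf{Main obstacle.} The delicate point is Step 1 for the phase channel. The wrapped $\operatorname{atan}_2$ difference must be shown to be smooth with gradient $2\Im O$ near $0$; this holds because the branch cut sits at $\pm\pi$, away from zero. Some care is also needed where $\psi(\vb{s})=0$, which is exactly what the common-support assumption excludes. Finally, one must keep the sign and factor-of-$2$ conventions in $\phi$ consistent with those of $A^{\mathrm I}$.
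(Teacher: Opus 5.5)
Your proposal is correct and follows essentially the same route as the paper. The paper also argues that $r=1$ and $\phi=0$ sit in the interior of the clipping intervals, so the clipped objectives share the derivative of the unclipped surrogates at $\boldsymbol{\theta}_{\mathrm{old}}$, and then uses $\grad r=\grad\log\mathcal P_{\boldsymbol{\theta}}$ and $\grad\phi=2\grad\arg\psi_{\boldsymbol{\theta}}$ to recover the two terms of the decomposed VMC gradient in Eq.~\eqref{eq:complex-vmc-gradient}; your version is slightly more careful in two places: you make the local inactivity of clipping explicit via continuity on a finite configuration space, and you show that the normalization term $-2\,\E_{\mathcal P_{\boldsymbol{\theta}}}[\operatorname{Re} O]$ in $\grad\log\mathcal P_{\boldsymbol{\theta}}$ is removed by the centered $A^{\mathrm R}$, whereas the paper's identity $\grad\log\mathcal P_{\boldsymbol{\theta}}=2\grad\log|\psi_{\boldsymbol{\theta}}|$ tacitly assumes a normalized (autoregressive) state.
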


Thus, the first PWO inner update exactly matches the VMC gradient; the surrogate only controls how far this direction is followed while reusing samples. For finite updates, local agreement is not enough. Let $r_{\boldsymbol{\theta}}=\mathcal P_{\boldsymbol{\theta}}/\mathcal P_{\boldsymbol{\theta}_{\mathrm{old}}}$ and $\alpha_{\boldsymbol{\theta}}$ be the wrapped phase increment and define
\begin{equation}
\label{eq:pwo_linear_term}
\mathcal A_{\boldsymbol{\theta}_{\mathrm{old}}}(\boldsymbol{\theta})
:=
2\E_{\vb{s}\sim\mathcal P_{\boldsymbol{\theta}_{\mathrm{old}}}}
\left[
\sqrt{r_{\boldsymbol{\theta}}}
\left(
\cos\alpha_{\boldsymbol{\theta}} A^{\mathrm R}_{\boldsymbol{\theta}_{\mathrm{old}}}
+
\sin\alpha_{\boldsymbol{\theta}} A^{\mathrm I}_{\boldsymbol{\theta}_{\mathrm{old}}}
\right)
\right],
\end{equation}

\begin{restatable}[Infidelity energy bound]{theorem}{infidelityimprovement}
\label{thm:pwo-infidelity-improvement}
Let $\ket{\psi_{\boldsymbol{\theta}_{\mathrm{old}}}}$ and
$\ket{\psi_{\boldsymbol{\theta}}}$ be normalized wavefunctions with common support. Then
\begin{equation}
\label{eq:pwo_infidelity_bound}
E[\psi_{\boldsymbol{\theta}}]-E[\psi_{\boldsymbol{\theta}_{\mathrm{old}}}]
\leq
\mathcal A_{\boldsymbol{\theta}_{\mathrm{old}}}(\boldsymbol{\theta})
+
2\norm{\hat H-E[\psi_{\boldsymbol{\theta}_{\mathrm{old}}}]\I}_{\infty}
\left(
1-\sqrt{1-\mathcal I(
\psi_{\boldsymbol{\theta}_{\mathrm{old}}},
\psi_{\boldsymbol{\theta}}
)}
\right).
\end{equation}
For a proof see Appendix \ref{app:exact_complex_decomposition}.
\end{restatable}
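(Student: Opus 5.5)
The plan is to compare the two normalized states directly through the shifted Hamiltonian $\hat H' := \hat H - E[\psi_{\boldsymbol{\theta}_{\mathrm{old}}}]\I$. Write $\psi_0 := \psi_{\boldsymbol{\theta}_{\mathrm{old}}}$, $\psi_1 := \psi_{\boldsymbol{\theta}}$ and $E_0 := E[\psi_0]$. Since both states are normalized, $E[\psi_1]-E_0 = \mel{\psi_1}{\hat H'}{\psi_1}$, and by construction $\mel{\psi_0}{\hat H'}{\psi_0}=0$. Set $d := \psi_1-\psi_0$ and expand $\mel{\psi_1}{\hat H'}{\psi_1}$. Using hermiticity of $\hat H'$, this gives the exact identity
\begin{equation*}
E[\psi_1]-E_0 = 2\Re\mel{\psi_1}{\hat H'}{\psi_0} + \mel{d}{\hat H'}{d}.
\end{equation*}
The first term is the linear (surrogate) part and the second is a quadratic remainder. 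The rest of the argument identifies the first term with $\mathcal A_{\boldsymbol{\theta}_{\mathrm{old}}}(\boldsymbol{\theta})$ and bounds the second.

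\textbf{Linear term.} Expand in the computational basis and use $(\hat H\psi_0)(\vb{s}) = \psi_0(\vb{s})E^{\mathrm{loc}}_{\boldsymbol{\theta}_{\mathrm{old}}}(\vb{s})$ on the common support. This gives
\begin{equation*}
\mel{\psi_1}{\hat H'}{\psi_0}=\sum_{\vb{s}}\abs{\psi_0(\vb{s})}^2\,\frac{\psi_1^\ast(\vb{s})}{\psi_0^\ast(\vb{s})}\,\big(E^{\mathrm{loc}}_{\boldsymbol{\theta}_{\mathrm{old}}}(\vb{s})-E_0\big).
\end{equation*}
Since $\mathcal P_{\boldsymbol{\theta}_{\mathrm{old}}}=\abs{\psi_0}^2$ and $\psi_1/\psi_0=\sqrt{r_{\boldsymbol{\theta}}}\,e^{i\alpha_{\boldsymbol{\theta}}}$, the prefactor is $\sqrt{r_{\boldsymbol{\theta}}}e^{-i\alpha_{\boldsymbol{\theta}}}$. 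The centered local energy is $A^{\mathrm R}+iA^{\mathrm I}$, using that $E_0$ is real and that $\Im E^{\mathrm{loc}}$ has zero mean. Taking the real part gives $\Re[e^{-i\alpha}(A^{\mathrm R}+iA^{\mathrm I})]=\cos\alpha\,A^{\mathrm R}+\sin\alpha\,A^{\mathrm I}$. Hence $2\Re\mel{\psi_1}{\hat H'}{\psi_0}=\mathcal A_{\boldsymbol{\theta}_{\mathrm{old}}}(\boldsymbol{\theta})$ exactly, as in Eq.~\eqref{eq:pwo_linear_term}. Common support is what makes the ratio well defined.

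\textbf{Remainder term.} Bound it by $\mel{d}{\hat H'}{d}\le \norm{\hat H'}_\infty\norm{d}^2$. Normalization gives $\norm{d}^2 = 2-2\Re\braket{\psi_0}{\psi_1}$. The remaining task is to show $\Re\braket{\psi_0}{\psi_1}\ge \sqrt{1-\mathcal I(\psi_0,\psi_1)}=\abs{\braket{\psi_0}{\psi_1}}$, after which $\norm{d}^2\le 2(1-\sqrt{1-\mathcal I})$ and Eq.~\eqref{eq:pwo_infidelity_bound} follows.

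\textbf{Main obstacle.} This last step is where I expect the real difficulty. The inequality $\Re\braket{\psi_0}{\psi_1}\ge\abs{\braket{\psi_0}{\psi_1}}$ holds only when the overlap is real and nonnegative. First I would try to use the global-phase freedom: replacing $\psi_1$ by $e^{i\gamma}\psi_1$ leaves $E[\psi_1]$ and $\mathcal I$ unchanged, so one can pick the gauge with $\braket{\psi_0}{\psi_1}\ge 0$. The catch is that $\alpha_{\boldsymbol{\theta}}$, and hence $\mathcal A$, must then be read in this aligned gauge. I would state this convention explicitly, since the wrapped phase increment is only defined up to such a global rotation anyway.

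If one insists on the raw gauge, the fallback is the orthogonal decomposition $\psi_1=c\,\psi_0+\chi$ with $\chi\perp\psi_0$ and $\norm{\chi}^2=\mathcal I$. The remainder then becomes $2\Re\big((c-1)\mel{\chi}{\hat H'}{\psi_0}\big)+\mel{\chi}{\hat H'}{\chi}$. This can be controlled with Cauchy--Schwarz, but it only reduces to the clean $2(1-\sqrt{1-\mathcal I})$ factor when $c$ is real and positive. So the gauge-alignment route is the one I would commit to.
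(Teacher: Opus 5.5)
Your proposal is correct and follows the paper's proof essentially step for step. The paper uses the same exact expansion $E[\psi_1]-E_0=2\Re\mel{d}{\hat H'}{\psi_0}+\mel{d}{\hat H'}{d}$ (dropping the $-1$ in $z^\ast-1$ because the centered local energy has zero mean, which is equivalent to your direct use of $\mel{\psi_1}{\hat H'}{\psi_0}$), identifies the linear term with $\mathcal A_{\boldsymbol{\theta}_{\mathrm{old}}}(\boldsymbol{\theta})$, bounds the remainder by the operator norm, and resolves your ``main obstacle'' exactly as you propose, by fixing the global phase so the overlap is real and nonnegative (Eq.~\eqref{eq:parallel_transport_gauge}); so $\alpha_{\boldsymbol{\theta}}$ and $\mathcal A$ are indeed to be read in that aligned gauge.
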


This bound is independent of clipping and holds for any finite update. The clipped PWO certificate follows by adding the stronger requirement that the realized update satisfies the amplitude and phase trust regions globally, not only on the sampled batch (which is encouraged by the method but not guaranteed, see Appendix \ref{app:pwo_clipped_improvement_certificate}).\\

\begin{restatable}[Clipped PWO improvement certificate]{corollary}{pwoimprovement}
\label{cor:pwo-clipped-improvement}
Assume the conditions of Theorem~\ref{thm:pwo-infidelity-improvement}, bounded centered local energies, and global constraints
$r_{\boldsymbol{\theta}}(\vb{s})\in[1-\epsilon,1+\epsilon]$ and
$|\alpha_{\boldsymbol{\theta}}(\vb{s})|\leq\delta$ for all $\vb{s}$, with
$0\leq\epsilon\leq1$ and $0\leq\delta\leq\pi$. Then there exists
$C_{\boldsymbol{\theta}_{\mathrm{old}}}<\infty$ such that
\begin{equation}
\label{eq:pwo_main_bound}
E[\psi_{\boldsymbol{\theta}}]-E[\psi_{\boldsymbol{\theta}_{\mathrm{old}}}]
\leq
L^{\mathrm{clip}}_{\mathrm{mod}}(\boldsymbol{\theta})
+
L^{\mathrm{clip}}_{\mathrm{arg}}(\boldsymbol{\theta})
+
2C_{\boldsymbol{\theta}_{\mathrm{old}}}
\left(
1-
\frac{1+\sqrt{1-\epsilon^2}}{2}
\cos^2\!\left(\frac{\delta}{2}\right)
\right).
\end{equation}
For a proof see Appendix~\ref{app:pwo_clipped_improvement_certificate}.
\end{restatable}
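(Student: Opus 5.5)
We start from Theorem~\ref{thm:pwo-infidelity-improvement}, which already gives
\[
E[\psi_{\boldsymbol{\theta}}]-E[\psi_{\boldsymbol{\theta}_{\mathrm{old}}}]\le \mathcal A_{\boldsymbol{\theta}_{\mathrm{old}}}(\boldsymbol{\theta})+2\norm{\hat H-E[\psi_{\boldsymbol{\theta}_{\mathrm{old}}}]\I}_\infty\bigl(1-\sqrt{1-\mathcal I}\bigr).
\]
Two tasks remain. First, we bound the linear term $\mathcal A_{\boldsymbol{\theta}_{\mathrm{old}}}(\boldsymbol{\theta})$ by the clipped surrogates plus a remainder. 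Second, we bound the infidelity under the global constraints $r\in[1-\epsilon,1+\epsilon]$ and $|\alpha|\le\delta$. All remainders are then absorbed into a single constant $C_{\boldsymbol{\theta}_{\mathrm{old}}}$, built from $\norm{\hat H-E\I}_\infty$ and $\sup_{\vb{s}}(|A^{\mathrm R}|+|A^{\mathrm I}|)$, which is finite by the bounded-centered-local-energy assumption.

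\textbf{Fidelity bound.} The fidelity is $F=\bigl|\E_{\mathcal P_{\mathrm{old}}}[\sqrt{r}\,e^{i\alpha}]\bigr|^2$, and we bound it below by $\bigl(\E[\sqrt r\cos\alpha]\bigr)^2$. Under the global constraints, $\cos\alpha\ge\cos\delta$ and $\cos\alpha\ge 0$ when $\delta\le\pi/2$; the general case $\delta\le\pi$ needs a half-angle argument. Since $\E[r]=1$ and $r\in[1-\epsilon,1+\epsilon]$, the quantity $\E\sqrt r$ is minimized by a two-point distribution on $\{1-\epsilon,1+\epsilon\}$ with equal weights. This gives
\[
\E\sqrt r\ge \tfrac12\bigl(\sqrt{1-\epsilon}+\sqrt{1+\epsilon}\bigr),
\qquad
\bigl(\E\sqrt r\bigr)^2\ge\frac{1+\sqrt{1-\epsilon^2}}{2}.
\]
This is exactly the factor appearing in the statement. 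For the phase factor we aim for $\cos^2(\delta/2)$ rather than $\cos^2\delta$. To get it, we use $|\E[\sqrt r e^{i\alpha}]|\ge \E[\sqrt r]\cos(\delta/2)$, which holds after rotating by the midpoint phase, since all $\alpha$ lie in an arc of length $\le 2\delta$ centered at $0$. This step may need care because the arc length is $2\delta$ rather than $\delta$. If it fails, we would instead use $\cos\delta$ and note the constant can be adjusted. Combining, $1-\mathcal I=F\ge\frac{1+\sqrt{1-\epsilon^2}}{2}\cos^2(\delta/2)=:\kappa$. Then $1-\sqrt{1-\mathcal I}\le 1-\sqrt\kappa\le 1-\kappa$, using $\kappa\in[0,1]$.

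\textbf{Linear term versus surrogate.} This is the main obstacle. We expand $\sqrt r=1+\tfrac12(r-1)+O((r-1)^2)$, $\cos\alpha=1-O(\alpha^2)$ and $\sin\alpha=\alpha+O(\alpha^3)$. Because $\E[A^{\mathrm R}]=0$, the zeroth-order term $2\E[A^{\mathrm R}]$ vanishes, and the first-order term becomes $\E[(r-1)A^{\mathrm R}]=\E[rA^{\mathrm R}]$. Since $rA^{\mathrm R}\le\max(rA^{\mathrm R},\mathrm{clip}(r)A^{\mathrm R})$, this is bounded by $L^{\mathrm{clip}}_{\mathrm{mod}}$. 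For the phase, $2\E[\sqrt r\sin\alpha\,A^{\mathrm I}]$ gives $2\E[r\alpha A^{\mathrm I}]$ at leading order. The paper's $\phi$ is $2\times$ the wrapped increment, so this matches $\E[\operatorname{sg}(r)\phi A^{\mathrm I}]\le L^{\mathrm{clip}}_{\mathrm{arg}}$; under $|\alpha|\le\delta$, clip is inactive anyway or only raises the max.

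The remainders come out as
\[
O\bigl(\sup|A|\,(\E(r-1)^2+\E\alpha^2)\bigr)\le O\bigl(\sup|A|(\epsilon^2+\delta^2)\bigr).
\]
We then need to dominate these by $C(1-\kappa)$. Since $1-\kappa\gtrsim \epsilon^2/4+\delta^2/4$ for small arguments, and both quantities are bounded for large ones, a comparison of $\epsilon^2+\delta^2$ with $1-\kappa$ on the compact set $[0,1]\times[0,\pi]$ yields a uniform constant. Rather than using Taylor expansions, we would prefer exact elementary inequalities, namely
\[
\sqrt r\le 1+\tfrac{r-1}{2},\qquad
|\sqrt r-1-\tfrac{r-1}{2}|\le (r-1)^2/2,\qquad
1-\cos\alpha\le\alpha^2/2,
\]
so that constants are explicit. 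Finally, we set
\[
C_{\boldsymbol{\theta}_{\mathrm{old}}}=\norm{\hat H-E\I}_\infty+c\,\sup_{\vb{s}}\bigl(|A^{\mathrm R}|+|A^{\mathrm I}|\bigr).
\]
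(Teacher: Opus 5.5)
You were right to distrust the half-angle step. Under the literal hypothesis $|\alpha_{\boldsymbol{\theta}}(\vb{s})|\le\delta$ it is false, so the main line of your fidelity bound does not go through.

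For a counterexample, take $r\equiv1$, with $\alpha=+\delta$ on a set of $\mathcal P_{\boldsymbol{\theta}_{\mathrm{old}}}$-mass $1/2$ and $\alpha=-\delta$ on its complement, where $0<\delta\le\pi/2$. Then $\E[\sqrt r\,e^{i\alpha}]=\cos\delta<\cos(\delta/2)$. This overlap is already real and nonnegative, so no rotation helps; centering an arc of length $2\delta$ only ever yields $\cos\delta$.

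So $1-\mathcal I\ge\kappa$ is unavailable and your fallback is forced. It works, but you have to carry it out, including for $\delta\in(\pi/2,\pi]$, where $\cos\delta<0$ and a squared lower bound is meaningless. The cleanest way reuses your own tools and needs no fidelity lemma. Since $|\E[\sqrt r\,e^{i\alpha}]|\ge\E[\sqrt r\cos\alpha]$ and $\E r=1$,
\[
1-\sqrt{1-\mathcal I}\le 1-\E[\sqrt r\cos\alpha]=\tfrac12\E\bigl[(\sqrt r-1)^2\bigr]+\E\bigl[\sqrt r\,(1-\cos\alpha)\bigr]\le\tfrac12\epsilon^2+\tfrac{\sqrt2}{2}\delta^2 .
\]
With $a=\tfrac12\bigl(1+\sqrt{1-\epsilon^2}\bigr)\in[\tfrac12,1]$, one has $1-\kappa=(1-a)+a\sin^2(\delta/2)\ge\epsilon^2/4+\delta^2/(2\pi^2)$. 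Hence $1-\sqrt{1-\mathcal I}\le\sqrt2\,\pi^2(1-\kappa)$, by the same $\epsilon^2+\delta^2$ versus $1-\kappa$ comparison you already use for the linear remainders. This only inflates the $\norm{\hat H-E[\psi_{\boldsymbol{\theta}_{\mathrm{old}}}]\I}_\infty$ part of $C_{\boldsymbol{\theta}_{\mathrm{old}}}$ by an absolute factor, which the existential statement allows. With that repair your argument is complete.

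Your route also differs from the paper's on the linear term. The paper never expands in $\epsilon,\delta$. It uses the exact identity $2\sqrt r\cos\alpha=1+r-|\sqrt r\,e^{i\alpha}-1|^2$ together with the lemma $|2t\sin\alpha-2t^2\alpha|\le4\pi^2|te^{i\alpha}-1|^2$, valid for all $t\ge0$. These bound both remainders by $\E|\sqrt r\,e^{i\alpha}-1|^2=\norm{\psi_{\boldsymbol{\theta}}-\psi_{\boldsymbol{\theta}_{\mathrm{old}}}}^2\le2\mathcal I$ in the parallel-transport gauge. The result is a clipping-free bound $S_{\mathrm{mod}}+S_{\mathrm{arg}}+2C\mathcal I$ with explicit $C=\norm{\hat H-E[\psi_{\boldsymbol{\theta}_{\mathrm{old}}}]\I}_\infty+\norm{A^{\mathrm R}}_\infty+4\pi^2\norm{A^{\mathrm I}}_\infty$. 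The paper then uses clipped $\ge$ unclipped, and brings in the trust region only at the end, through the amplitude--phase infidelity lemma.

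What each buys:
\begin{itemize}
\item The paper's route gives a surrogate bound valid for arbitrary finite updates, with an explicit constant.
\item Your version is more elementary: the global constraints give $r\le2$ and $|\alpha|\le\pi$, so crude Taylor bounds suffice and the all-$t$ lemma is unnecessary. The cost is that it is tied to the global constraints from the start.
\end{itemize}

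Finally, the paper obtains $\cos^2(\delta/2)$ only because its infidelity lemma assumes $|\alpha|\le\delta/2$, i.e.\ $|\phi|\le\delta$, and that is what its appendix proof actually invokes. Under the statement's literal hypothesis $|\alpha|\le\delta$, the paper's argument needs precisely the constant adjustment above.
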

The final term is the trust-region penalty. It vanishes as $\epsilon,\delta\rightarrow0$ and grows as either clipping range is relaxed. Hence, if the clipped surrogate is minimized until the right-hand side of Eq.~\eqref{eq:pwo_main_bound} is negative, the updated wavefunction is guaranteed to have lower energy. Together, the results show that PWO follows the exact local VMC direction, while finite-update errors are controlled generally by infidelity and, under global clipping, by explicit amplitude and phase trust-regions. 
\section{Experiments}

\begin{figure}[t]
    \centering
    \begin{subfigure}[t]{0.49\linewidth}
        \centering
        \includegraphics[width=\linewidth]{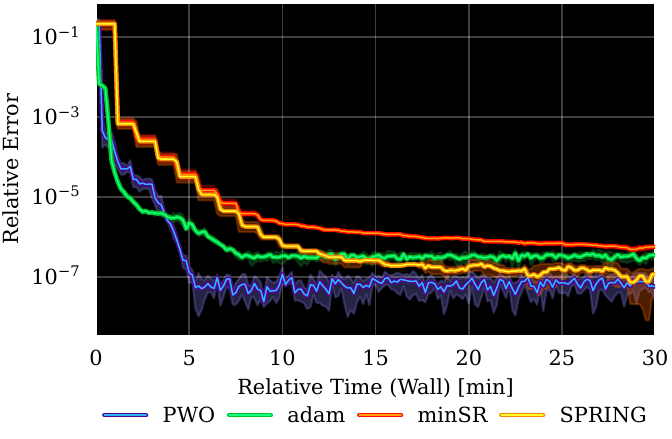}
        \label{fig:ising_rel_error}
    \end{subfigure}
    \hfill
    \begin{subfigure}[t]{0.49\linewidth}
        \centering
        \includegraphics[width=\linewidth]{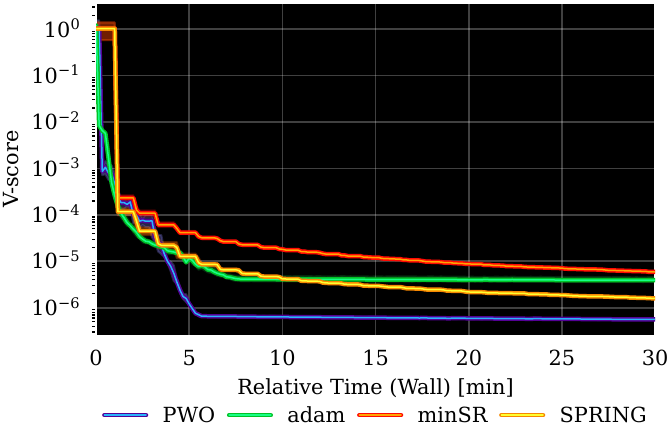}
        \label{fig:ising_v_score}
    \end{subfigure}
    \caption{Comparison of PWO, Adam, minSR, and SPRING on the transverse-field Ising model over $10$ random seeds. All methods were run with $1024$ samples on a single NVIDIA L40S GPU.}
    \label{fig:ising_comparison}
\end{figure}

\subsection{Spin-chain benchmarks}
We evaluate PWO for 1.5M-parameter autoregressive NQS on three different spin systems. Each systems consists of a one-dimensional spin-$1/2$ chain of $N=12$ sites with periodic boundary conditions, so that site indices are understood modulo $N$. Let
$\hat{\sigma}_i^\alpha$, $\alpha\in\{x,y,z\}$, denote the Pauli operator on site $i$,
and let $\hat{\vb S}_i=1/2(\hat{\sigma}_i^x,\hat{\sigma}_i^y,\hat{\sigma}_i^z)$.
For each Hamiltonian, we minimize the variational energy $E[\psi_{\boldsymbol{\theta}}]$ and report the relative error with respect to the exact ground-state energy $E_0$, $\epsilon_{\rm rel} = (E[\psi_{\boldsymbol{\theta}}] - E_0) / E_0$, which can be computed for these relatively small systems. We also report the V-score \citep{wuVariationalBenchmarksQuantum2024}, a scale-invariant convergence metric based on the energy variance that vanishes for exact eigenstates. The two Hamiltonians chosen offer increasing difficulty, as the underlying physics requires more complicated functions to represent. We use the same NQS architecture and the same number of Monte
Carlo samples for all optimizers, so differences in performance reflect optimization rather than
model capacity. For details about the hyperparameter search and the used architecture, see Appendices \ref{app:hyperparameter_search} and \ref{app:architecture}.

\textbf{Transverse Field Ising Model.}
\begin{equation}
    \hat H_{\mathrm{Ising}}
    =
    -J \sum_{i=1}^{N} \hat{\sigma}_i^z \hat{\sigma}_{i+1}^z
    -h \sum_{i=1}^{N} \hat{\sigma}_i^x.
    \label{eq:ising_hamiltonian}
\end{equation}
Here, $J=1$ is the ferromagnetic coupling strength and $h=1$ is the transverse-field strength. This Hamiltonian is a standard sign-problem-free benchmark: the diagonal interaction term favors
aligned spin configurations, while the transverse field introduces quantum fluctuations by flipping
individual spins. Therefore, Proposition \ref{prop:pg_nqs} holds exactly.


\textbf{Heisenberg $J_1$--$J_2$ Chain.}
\begin{equation}
    \hat H_{J_1\text{--}J_2}
    =
    J_1 \sum_{i=1}^{N}
    \hat{\vb S}_i \cdot \hat{\vb S}_{i+1}
    +
    J_2 \sum_{i=1}^{N}
    \hat{\vb S}_i \cdot \hat{\vb S}_{i+2},
    \qquad J_1>0,\; J_2>0 .
    \label{eq:j1j2_hamiltonian}
\end{equation}
We set $J_1=1$ and $J_2=0.5$, \textit{i.e.}, $J_2/J_1=0.5$. At this ratio the next-nearest-neighbor interaction maximally frustrates the nearest-neighbor antiferromagnetic coupling, placing the system at the Majumdar–Ghosh point, where a highly entangled ground state with a complex sign structure makes both variational optimization and Monte Carlo sampling particularly challenging \citep{sorella_green_1998}. Unlike $\hat{H}_{\mathrm{Ising}}$, the Hamiltonian $\hat H_{J_1\text{--}J_2}$ requires a nontrivial sign structure in the computational basis; we use a complex-valued parameterization so that PWO can learn this structure through its phase channel. This setting evaluates whether the clipped phase surrogate (Eq.~\eqref{eq:phase_clip}) remains a useful practical heuristic. We make an analogous study for $J_1 = 0.25$ and $J_2=0$ (the so-called Heisenberg chain) in Appendix \ref{app:individual_seeds_hams}.

All results are computed over $10$ random seeds. Since several runs reach high-precision errors, numerical instabilities can produce extreme outliers. Mean-and-standard-deviation summaries are therefore poorly suited to this regime, as a single unstable run can distort both the central estimate and the uncertainty band. Following \citet{agarwal2022deepreinforcementlearningedge}, we instead report interquartile statistics: at each evaluation time, curves show the interquartile mean (IQM), computed by averaging the middle $50\%$ of seeds after discarding the lowest and highest quartiles. Shaded regions denote the interquartile range, from the $25$th to the $75$th percentile. This provides robust central estimates while still showing run-to-run variability. The figures with all individual seeds plotted are shown in Appendices~\ref{app:individual_seeds_hams}-\ref{app:individual_seeds_LLM}. We discuss limitations of our approach in Appendix~\ref{app:limitations}.


\subsection{Convergence and Stability Results}

Figure \ref{fig:ising_comparison} shows that PWO converges the fastest of the four optimizers on the 1D Ising model, reaching relative error $10^{-7}$ in approximately $5$ minutes, while minSR requires approximately $30$ minutes to reach the same accuracy. The V-score follows the same trend: PWO rapidly suppresses energy fluctuations,
indicating that the proximal objective is not merely improving the energy estimate, but driving the
state toward an eigenstate. Adam and SPRING also make steady progress, but require more wall-clock
time to reach the same accuracy. minSR is stable on this easier Hamiltonian, but its per-step cost
makes it substantially slower in wall-clock time.


\begin{figure}[t]
    \centering
    \begin{subfigure}[t]{0.49\linewidth}
        \centering
        \includegraphics[width=\linewidth]{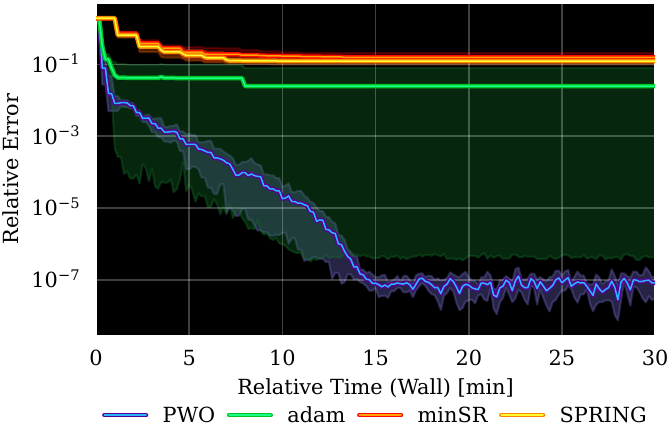}
        \label{fig:j1j2_rel_error}
    \end{subfigure}
    \hfill
    \begin{subfigure}[t]{0.49\linewidth}
        \centering
        \includegraphics[width=\linewidth]{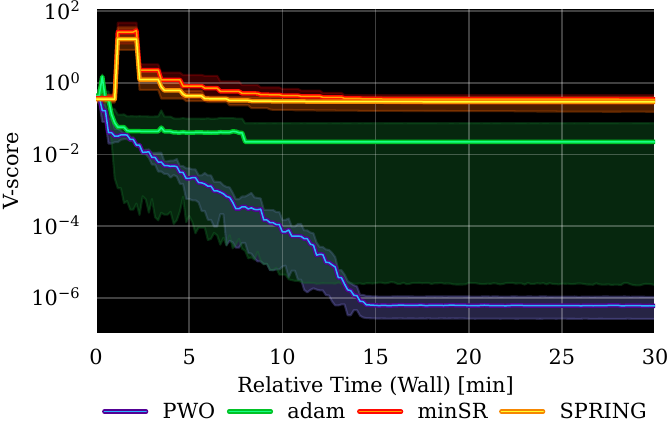}
        \label{fig:j1j2_v_score}
    \end{subfigure}
    \caption{Comparison of PWO, Adam, minSR, and SPRING on the Heisenberg $J_1$--$J_2$ chain over $10$ random seeds. MinSR is highly unstable on this Hamiltonian, with $6$ out of $10$ runs producing NaNs. All methods were run with $1024$ samples on a single NVIDIA L40S GPU.}
    \label{fig:j1j2_comparison}
\end{figure}

On the $J_1$--$J_2$ Hamiltonian in Figure \ref{fig:j1j2_comparison}, PWO reaches
relative error $10^{-7}$ in approximately $15$ minutes while maintaining a steadily decreasing V-score, showing that the learned state is converging in both energy and variance. In contrast, Adam plateaus several orders of
magnitude above PWO due to outliers. Also, the PWO contrast with minSR and SPRING is sharper than in the Ising case: minSR is numerically unstable, with $6$ out of $10$ runs producing NaNs, and both remain at a relative error of $10^{-1}$ after $30$ minutes. Despite using curvature-inspired updates, these methods struggle to make progress and exhibit numerical instability. 
This is the strongest empirical evidence for the central claim of the paper:
a PPO-style proximal objective can retain the low wall-clock cost of first-order optimization while
providing the stability needed to train expressive NQS on difficult quantum Hamiltonians. Experiments on the Heisenberg chain in Appendix \ref{app:individual_seeds_hams} offer the same conclusion. We also plot all individual seed curves without aggregation or filtering for greater clarity and visibility in Appendix \ref{app:individual_seeds_hams}.

\subsection{Two-dimensional $J_1$--$J_2$ model on the square lattice}
\label{sec:2d-j1j2}

\begin{figure}[t]
    \centering
    \includegraphics[width=0.485\linewidth]{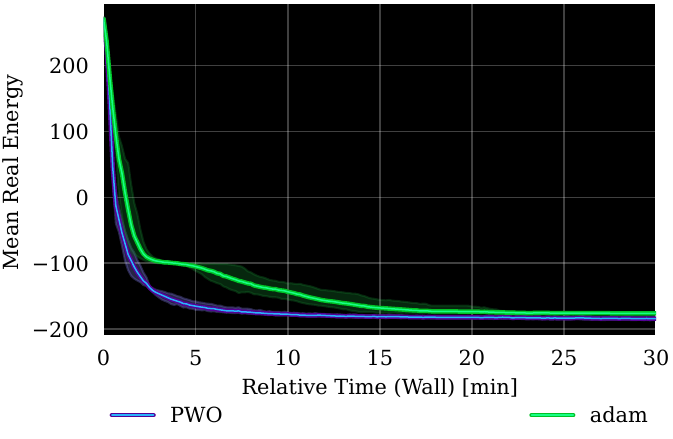}
    \hfill
    \includegraphics[width=0.485\linewidth]{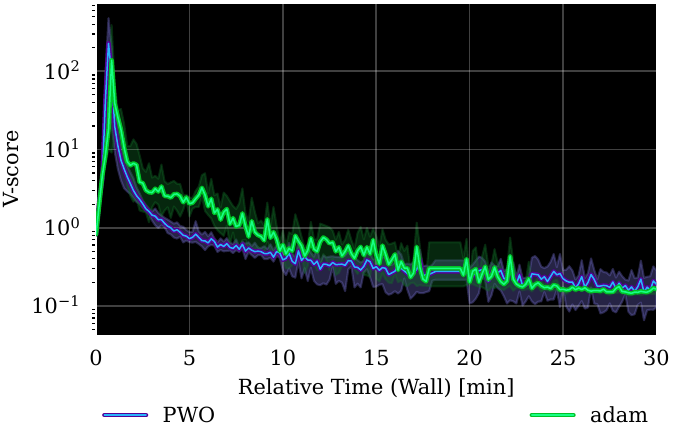}
    \caption{
    Two-dimensional frustrated $J_1$--$J_2$ Heisenberg model on the $10 \times 10$ lattice.
    Left: mean real energy. Right: V-score. PWO reaches lower energies faster than Adam
    and maintains a lower variance-based error signal over the same wall-clock budget.
    }
    \label{fig:j1j2-2d}
\end{figure}

We next test whether PWO remains effective beyond one-dimensional chains on a frustrated square-lattice $J_1$--$J_2$ Heisenberg model with periodic boundary conditions and system size of $10 \times 10$. We use a complex patch-autoregressive transformer with two-dimensional RoPE and enforce the zero-magnetization sector. Since exact diagonalization is intractable at this size, we compare optimizers using the mean real energy and the V-score. As shown in Fig.~\ref{fig:j1j2-2d}, PWO decreases the energy faster than Adam and reaches a lower-energy regime within the same wall-clock budget. The V-score follows the same trend, indicating that the proximal objective also improves stability on this frustrated two-dimensional benchmark. 
Individual seed runs are shown in Appendix \ref{app:individual_seeds_hams}.

\subsection{Scalability Experiments}

\begin{figure}[h]
    \centering
    \includegraphics[width=\textwidth]{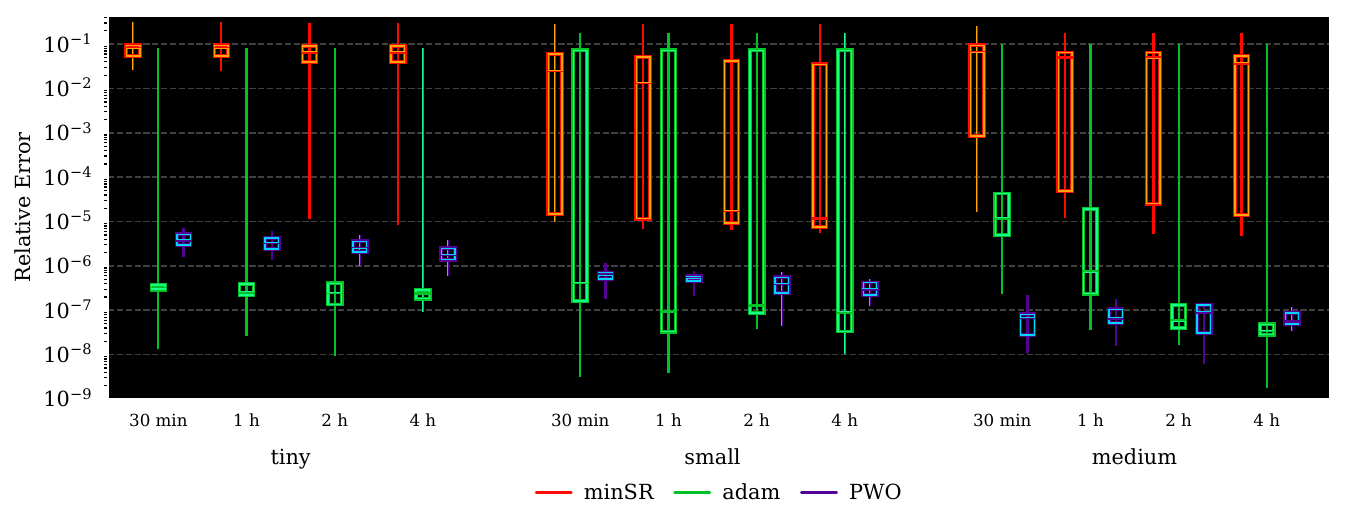}
    \caption{
        Wall-clock scaling comparison across model sizes and optimization methods.
        Boxplots show the interquartile mean relative error over seeds, with boxes indicating the interquartile range and lines indicating the min and max.
        Results are grouped by model size and wall-clock time, and run on a single NVIDIA A100 GPU.
    }
    \label{fig:scaling_plot}
\end{figure}

Figure~\ref{fig:scaling_plot} compares the scaling behavior of PWO, Adam, and minSR for increasing NQS model sizes on the Heisenberg $J_1$--$J_2$ chain (see Appendix~\ref{app:scaling_experiment} for details). PWO exhibits consistent stability and monotonic energy improvement as NQS size grows, reaching near-final energies within the first 30 minutes. In contrast, minSR is both unstable and never reaches competitive energy values. Adam performs best for the tiny network, but degrades in stability and efficiency for small and medium NQS, where PWO is both faster and more reliable. In particular, achieving comparable energy scales with the medium NQS requires roughly $4 \times$ more time with Adam than with PWO. In Appendix~\ref{app:pwo_speed}, we show that PWO converges faster than other optimizers because it does more optimization steps per unit of time. Although Adam attains lower energies after extended training of 4 hours, the observed scaling trend suggests that this advantage may diminish for yet larger networks. In Appendix \ref{app:additional_scaling} we perform additional scaling experiments on the number of samples and system size.

\subsection{NQS Fine-tuning of Large Language Models}

\begin{figure}[h]
    \centering
    \begin{subfigure}[t]{0.49\linewidth}
        \centering
        \includegraphics[width=\linewidth]{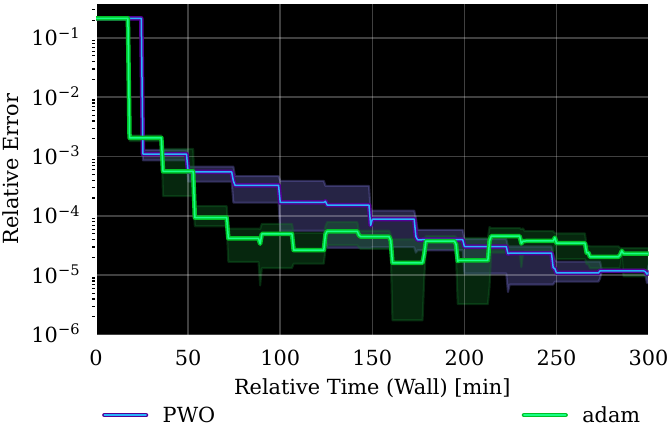}
        \label{fig:rwkv_rel_error}
    \end{subfigure}
    \hfill
    \begin{subfigure}[t]{0.49\linewidth}
        \centering
        \includegraphics[width=\linewidth]{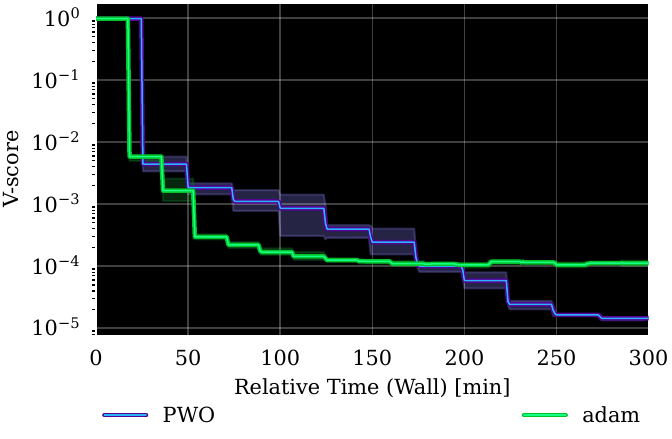}
        \label{fig:rwkv_v_score}
    \end{subfigure}
    \caption{Fine-tuning curves of a 1.5B-parameter RWKV7LLM on the 1-D Ising Model.
    }\label{fig:rwkv_comparison}
\end{figure}

Figure~\ref{fig:rwkv_comparison} shows fine-tuning results for a $1.5$B-parameter RWKV-7 model on the transverse-field Ising chain. PWO achieves a lower final relative error and V-score than Adam, while maintaining stable training across the full wall-clock budget. This improvement is not meant to establish that LLMs or large networks \citep{moss2026double-cec} are the right inductive bias for one-dimensional Hamiltonians; rather, it shows that the proximal objective is effective when the ansatz is scaled by more than three orders of magnitude beyond existing NQS in the literature \citep{rende2025foundation-7f9}.

\section{Related Work}

\textbf{Reinforcement Learning and Ground State Search}. Ground state search has been treated as an optimal control problem via Feynman-Kac representations \citep{pmlr-v107-barr20a}, and RL frameworks have been developed for lattice models by treating stoquastic Hamiltonians as reward functions \citep{pmlr-v145-gispen22a}. Recent meta-learning approaches \citep{jae2025reinforcement-2b9} leverage RL to optimize the quantum state learning process itself. However, these works apply RL by either reformulating the problem theoretically or optimizing the learning procedure. We instead use RL as the primary optimization framework applying it directly on the NQS ansatz.

\textbf{Importance Sampling and Trust Regions}. Sample reuse and proximal objectives have been previously explored for non-autoregressive NQS optimization. \cite{Yang_2020} introduced importance sampling gradient optimization (ISGO), which reuses Markov chain samples across multiple gradient updates by reweighting local energies with the probability ratio between the current and reference wavefunctions. Building on this, \cite{chen2022systematic} introduced a PPO-inspired a cosine penalty on phase variation. Our work formalizes the analogy with RL in Proposition \ref{prop:pg_nqs}, uses a different clipping objective directly on the imaginary gradient instead of a penalty, and makes the policy-gradient correspondence exact, allowing stable training of billion-parameter models for the first time.

\section{Conclusion}

We introduced Proximal Wavefunction Optimization (PWO), a scalable first-order optimizer for neural quantum states (NQS) derived from an explicit connection between variational Monte Carlo and reinforcement learning (RL). In the fixed-phase stoquastic setting, we showed that variational energy minimization can be written as an advantage policy-gradient objective over the Born distribution, where spin configurations play the role of actions and centered local energies act as advantages. This reformulates NQS optimization into a trust-region policy-optimization problem, motivating a PPO-style algorithm that clips probability-ratio changes in the amplitude channel and extends naturally to complex wavefunctions through a clipped phase-increment surrogate. Across spin-chain benchmarks, PWO improves wall-clock convergence and stability over Adam, minSR, and SPRING, with the largest gains in frustrated regimes where optimization is most fragile. Finally, by fine-tuning a $1.5$B-parameter RWKV-7 model as an autoregressive NQS, we bring NQS training into the regime of modern large-scale sequence modeling. 
More broadly, our results bridge the RL and NQS communities, enabling further mutual inspiration. 
By unlocking large autoregressive NQS, PWO opens the door to studying whether scale alone, in the spirit of the ``bitter lesson'', can overcome current barriers in quantum many-body simulation.

\section*{Reproducibility Statement}

Code for reproducing our experiments is available at \url{https://github.com/jduquevan/hyperscalenqs}. Full experimental details, including hyperparameter searches, model architectures, optimizer settings, sample budgets, and scaling configurations, are provided in Appendix~\ref{app:experimental_details}.

\section*{Acknowledgements}

This work was supported by the Netherlands Organization for Scientific Research (NWO/OCW), as part of Quantum Limits (project number SUMMIT.1.1016). This publication is also part of the project Optimal Digital-Analog Quantum Circuits with File No. NGF.1582.22.026 of the research programme NGF Quantum Delta NL 2022, which is (partly) financed by the Dutch Research Council (NWO) and the Dutch National Growth Fund initiative Quantum Delta NL. Juan Agustín Duque is supported by the St-Pierre-Larochelle Scholarship at the University of Montreal and by Aaron Courville’s CIFAR AI Chair in Representations that Generalize Systematically.

\bibliographystyle{plainnat}
\bibliography{refs_cited_only_uniform}

@book{Dawid_2025,
   title={Machine Learning in Quantum Sciences},
   ISBN={9781009504935},
   url={http://dx.doi.org/10.1017/9781009504942},
   DOI={10.1017/9781009504942},
   publisher={Cambridge University Press},
   author={Dawid, Anna and Arnold, Julian and Requena, Borja and Gresch, Alexander and Płodzień, Marcin and Donatella, Kaelan and Nicoli, Kim A. and Stornati, Paolo and Koch, Rouven and Büttner, Miriam and Okuła, Robert and Muñoz-Gil, Gorka and Vargas-Hernández, Rodrigo A. and Cervera-Lierta, Alba and Carrasquilla, Juan and Dunjko, Vedran and Gabrié, Marylou and Huembeli, Patrick and van Nieuwenburg, Evert and Vicentini, Filippo and Wang, Lei and Wetzel, Sebastian J. and Carleo, Giuseppe and Greplová, Eliška and Krems, Roman and Marquardt, Florian and Tomza, Michał and Lewenstein, Maciej and Dauphin, Alexandre},
   year={2025},
   month=jun 
}

@article{Robbins1951,
  author    = {Herbert Robbins and Sutton Monro},
  title     = {A Stochastic Approximation Method},
  journal   = {The Annals of Mathematical Statistics},
  year      = {1951},
  volume    = {22},
  number    = {3},
  pages     = {400--407},
  doi       = {10.1214/aoms/1177729586}
}

@article{williams1992simple,
    author = {Williams, Ronald},
    journal = {Machine Learning},
    pages = {229--256},
    title = {Simple statistical gradient-following algorithms for connectionist reinforcement learning},
    volume = {8},
    year = {1992}
}

@article{Carleo_2017,
   title={Solving the quantum many-body problem with artificial neural networks},
   volume={355},
   ISSN={1095-9203},
   url={http://dx.doi.org/10.1126/science.aag2302},
   DOI={10.1126/science.aag2302},
   number={6325},
   journal={Science},
   publisher={American Association for the Advancement of Science (AAAS)},
   author={Carleo, Giuseppe and Troyer, Matthias},
   year={2017},
   month=feb, pages={602–606}
}

@article{Chen_2024,
   title={Empowering deep neural quantum states through efficient optimization},
   volume={20},
   ISSN={1745-2481},
   url={http://dx.doi.org/10.1038/s41567-024-02566-1},
   DOI={10.1038/s41567-024-02566-1},
   number={9},
   journal={Nature Physics},
   publisher={Springer Science and Business Media LLC},
   author={Chen, Ao and Heyl, Markus},
   year={2024},
   month=jul, pages={1476–1481} }

@misc{schulman2017trust,
    title={Trust Region Policy Optimization}, 
    author={John Schulman and Sergey Levine and Philipp Moritz and Michael I. Jordan and Pieter Abbeel},
    year={2017},
    eprint={1502.05477},
    archivePrefix={arXiv},
    primaryClass={cs.LG}
}

@misc{schulman2017proximal,
    title={Proximal Policy Optimization Algorithms}, 
    author={John Schulman and Filip Wolski and Prafulla Dhariwal and Alec Radford and Oleg Klimov},
    year={2017},
    eprint={1707.06347},
    archivePrefix={arXiv},
    primaryClass={cs.LG}
}

@misc{kingma2017adammethodstochasticoptimization,
    title={Adam: A Method for Stochastic Optimization}, 
    author={Diederik P. Kingma and Jimmy Ba},
    year={2017},
    eprint={1412.6980},
    archivePrefix={arXiv},
    primaryClass={cs.LG},
    url={https://arxiv.org/abs/1412.6980}, 
}

@misc{lange2024architecturesapplicationsreviewneural,
      title={From Architectures to Applications: A Review of Neural Quantum States}, 
      author={Hannah Lange and Anka Van de Walle and Atiye Abedinnia and Annabelle Bohrdt},
      year={2024},
      eprint={2402.09402},
      archivePrefix={arXiv},
      primaryClass={cond-mat.dis-nn},
      url={https://arxiv.org/abs/2402.09402}, 
}

@book{agarwal2021reinforcement,
    title={Reinforcement Learning: Theory and Algorithms},
    author={Agarwal, Alekh and Jiang, Nan and Kakade, Sham and Sun, Wen},
    year={2021},
    publisher={Preprint}
}

@misc{smith2018superconvergencefasttrainingneural,
      title={Super-Convergence: Very Fast Training of Neural Networks Using Large Learning Rates}, 
      author={Leslie N. Smith and Nicholay Topin},
      year={2018},
      eprint={1708.07120},
      archivePrefix={arXiv},
      primaryClass={cs.LG},
      url={https://arxiv.org/abs/1708.07120}, 
}

@misc{agarwal2022deepreinforcementlearningedge,
      title={Deep Reinforcement Learning at the Edge of the Statistical Precipice}, 
      author={Rishabh Agarwal and Max Schwarzer and Pablo Samuel Castro and Aaron Courville and Marc G. Bellemare},
      year={2022},
      eprint={2108.13264},
      archivePrefix={arXiv},
      primaryClass={cs.LG},
      url={https://arxiv.org/abs/2108.13264}, 
}

@misc{merali2026parallelscanrecurrentneural,
      title={Parallel Scan Recurrent Neural Quantum States for Scalable Variational Monte Carlo}, 
      author={Ejaaz Merali and Mohamed Hibat-Allah and Mohammad Kohandel and Richard T. Scalettar and Ehsan Khatami},
      year={2026},
      eprint={2605.13807},
      archivePrefix={arXiv},
      primaryClass={cond-mat.str-el},
      url={https://arxiv.org/abs/2605.13807}, 
}

@inproceedings{10.5555/645531.656005,
    author = {Kakade, Sham and Langford, John},
    title = {Approximately Optimal Approximate Reinforcement Learning},
    year = {2002},
    isbn = {1558608737},
    publisher = {Morgan Kaufmann Publishers Inc.},
    address = {San Francisco, CA, USA},
    booktitle = {Proceedings of the Nineteenth International Conference on Machine Learning},
    pages = {267–274},
    numpages = {8},
    series = {ICML '02}
}

@article{Yang_2020,
   title={Deep learning-enhanced variational Monte Carlo method for quantum many-body physics},
   volume={2},
   ISSN={2643-1564},
   url={http://dx.doi.org/10.1103/PhysRevResearch.2.012039},
   DOI={10.1103/physrevresearch.2.012039},
   number={1},
   journal={Physical Review Research},
   publisher={American Physical Society (APS)},
   author={Yang, Li and Leng, Zhaoqi and Yu, Guangyuan and Patel, Ankit and Hu, Wen-Jun and Pu, Han},
   year={2020},
   month=Feb 
}

@article{rende2024simple-f44, 
  year    = {2024}, 
  title   = {A simple linear algebra identity to optimize large-scale neural network quantum states}, 
  author  = {Rende, Riccardo and Viteritti, Luciano Loris and Bardone, Lorenzo and Becca, Federico and Goldt, Sebastian}, 
  journal = {Communications Physics}, 
  doi     = {10.1038/s42005-024-01732-4}, 
  eprint  = {2310.05715}, 
  pages   = {260}, 
  number  = {1}, 
  volume  = {7}
}

@article{choo2020fermionic,
  title={Fermionic neural-network states for ab-initio electronic structure},
  author={Choo, Kenny and Mezzacapo, Antonio and Carleo, Giuseppe},
  journal={Nature communications},
  volume={11},
  number={1},
  pages={2368},
  year={2020},
  publisher={Nature Publishing Group UK London}
}

@article{hermann2020deep,
  title={Deep-neural-network solution of the electronic Schr{\"o}dinger equation},
  author={Hermann, Jan and Sch{\"a}tzle, Zeno and No{\'e}, Frank},
  journal={Nature Chemistry},
  volume={12},
  number={10},
  pages={891--897},
  year={2020},
  publisher={Nature Publishing Group UK London}
}

@article{pfau2020ab,
  title={Ab initio solution of the many-electron Schr{\"o}dinger equation with deep neural networks},
  author={Pfau, David and Spencer, James S and Matthews, Alexander GDG and Foulkes, W Matthew C},
  journal={Physical review research},
  volume={2},
  number={3},
  pages={033429},
  year={2020},
  publisher={APS}
}

@inproceedings{
glehn2023a,
title={A Self-Attention Ansatz for Ab-initio Quantum Chemistry},
author={Ingrid von Glehn and James S Spencer and David Pfau},
booktitle={The Eleventh International Conference on Learning Representations },
year={2023},
url={https://openreview.net/forum?id=xveTeHVlF7j}
}

@inproceedings{martens2015optimizing,
  title={Optimizing neural networks with kronecker-factored approximate curvature},
  author={Martens, James and Grosse, Roger},
  booktitle={International conference on machine learning},
  pages={2408--2417},
  year={2015},
  organization={PMLR}
}

@article{schutt2018schnet,
  title={Schnet--a deep learning architecture for molecules and materials},
  author={Sch{\"u}tt, Kristof T and Sauceda, Huziel E and Kindermans, P-J and Tkatchenko, Alexandre and M{\"u}ller, K-R},
  journal={The Journal of chemical physics},
  volume={148},
  number={24},
  year={2018},
  publisher={AIP Publishing}
}

@article{drissi2024second,
  title={Second-order optimization strategies for neural network quantum states},
  author={Drissi, M and Keeble, JW T and Rozal{\'e}n Sarmiento, J and Rios, A},
  journal={Philosophical Transactions A},
  volume={382},
  number={2275},
  pages={20240057},
  year={2024},
  publisher={The Royal Society}
}

@article{wang2026generalized,
  title={Generalized lanczos method for systematic optimization of neural-network quantum states},
  author={Wang, Jia-Qi and He, Rong-Qiang and Lu, Zhong-Yi},
  journal={Physical Review B},
  volume={113},
  number={8},
  pages={085120},
  year={2026},
  publisher={APS}
}

@article{goldshlager2024kaczmarz,
  title={A Kaczmarz-inspired approach to accelerate the optimization of neural network wavefunctions},
  author={Goldshlager, Gil and Abrahamsen, Nilin and Lin, Lin},
  journal={Journal of Computational Physics},
  volume={516},
  pages={113351},
  year={2024},
  publisher={Elsevier}
}

@article{shokry2025less,
  title={When Less is More: Approximating the Quantum Geometric Tensor with Block Structures},
  author={Shokry, Ahmedeo and Santini, Alessandro and Vicentini, Filippo},
  journal={arXiv preprint arXiv:2510.08430},
  year={2025}
}

@article{ren2019efficient,
  title={Efficient subsampled gauss-newton and natural gradient methods for training neural networks},
  author={Ren, Yi and Goldfarb, Donald},
  journal={arXiv preprint arXiv:1906.02353},
  year={2019}
}

@article{nys2024ab,
  title={Ab-initio variational wave functions for the time-dependent many-electron Schr{\"o}dinger equation},
  author={Nys, Jannes and Pescia, Gabriel and Sinibaldi, Alessandro and Carleo, Giuseppe},
  journal={Nature communications},
  volume={15},
  number={1},
  pages={9404},
  year={2024},
  publisher={Nature Publishing Group UK London}
}

@article{gauvin-ndiaye2025mott-e75, 
  year    = {2025}, 
  title   = {Mott Transition and Volume Law Entanglement with Neural Quantum States}, 
  author  = {Gauvin-Ndiaye, Chloé and Tindall, Joseph and Moreno, Javier Robledo and Georges, Antoine}, 
  journal = {Physical Review Letters}, 
  issn    = {0031-9007}, 
  doi     = {10.1103/physrevlett.134.076502}, 
  pmid    = {40053943}, 
  pages   = {076502}, 
  number  = {7}, 
  volume  = {134}
}

@article{pescia2024message-passing-11d, 
  year    = {2024}, 
  title   = {Message-passing neural quantum states for the homogeneous electron gas}, 
  author  = {Pescia, Gabriel and Nys, Jannes and Kim, Jane and Lovato, Alessandro and Carleo, Giuseppe}, 
  journal = {Physical Review B}, 
  issn    = {2469-9950}, 
  doi     = {10.1103/physrevb.110.035108}, 
  eprint  = {2305.07240}, 
  pages   = {035108}, 
  number  = {3}, 
  volume  = {110}
}

@article{hibat2020recurrent, 
  year    = {2020}, 
  title   = {Recurrent neural network wave functions}, 
  author  = {Hibat-Allah, Mohamed and Ganahl, Martin and Hayward, Lauren E. and Melko, Roger G. and Carrasquilla, Juan}, 
  journal = {Physical Review Research}, 
  doi     = {10.1103/physrevresearch.2.023358}, 
  eprint  = {2002.02973}, 
  pages   = {023358}, 
  number  = {2}, 
  volume  = {2}
}

@article{liu2025efficient-31b, 
  year    = {2025}, 
  title   = {Efficient optimization of variational autoregressive networks with natural gradient}, 
  author  = {Liu, Jing and Tang, Ying and Zhang, Pan}, 
  journal = {Physical Review E}, 
  issn    = {2470-0045}, 
  doi     = {10.1103/physreve.111.025304}, 
  pmid    = {40103066}, 
  eprint  = {2409.20029}, 
  pages   = {025304}, 
  number  = {2}, 
  volume  = {111}
}

@article{sharir2020, 
  year    = {2020}, 
  title   = {Deep Autoregressive Models for the Efficient Variational Simulation of Many-Body Quantum Systems}, 
  author  = {Sharir, Or and Levine, Yoav and Wies, Noam and Carleo, Giuseppe and Shashua, Amnon}, 
  journal = {Physical Review Letters}, 
  issn    = {0031-9007}, 
  doi     = {10.1103/physrevlett.124.020503}, 
  pmid    = {32004039}, 
  eprint  = {1902.04057}, 
  pages   = {020503}, 
  number  = {2}, 
  volume  = {124}
}

@article{amariNaturalGradientWorks1998,
	title = {Natural {Gradient} {Works} {Efficiently} in {Learning}},
	volume = {10},
	issn = {0899-7667, 1530-888X},
	url = {https://direct.mit.edu/neco/article/10/2/251-276/6143},
	doi = {10.1162/089976698300017746},
	language = {en},
	number = {2},
	urldate = {2025-12-03},
	journal = {Neural Computation},
	author = {Amari, Shun-ichi},
	month = feb,
	year = {1998},
	pages = {251--276},
}

@misc{peng2025rwkv7gooseexpressivedynamic,
      title={RWKV-7 "Goose" with Expressive Dynamic State Evolution}, 
      author={Bo Peng and Ruichong Zhang and Daniel Goldstein and Eric Alcaide and Xingjian Du and Haowen Hou and Jiaju Lin and Jiaxing Liu and Janna Lu and William Merrill and Guangyu Song and Kaifeng Tan and Saiteja Utpala and Nathan Wilce and Johan S. Wind and Tianyi Wu and Daniel Wuttke and Christian Zhou-Zheng},
      year={2025},
      eprint={2503.14456},
      archivePrefix={arXiv},
      primaryClass={cs.CL},
      url={https://arxiv.org/abs/2503.14456}, 
}

@misc{sarkar2026evolutionstrategieshyperscale,
      title={Evolution Strategies at the Hyperscale}, 
      author={Bidipta Sarkar and Mattie Fellows and Juan Agustin Duque and Alistair Letcher and Antonio León Villares and Anya Sims and Clarisse Wibault and Dmitry Samsonov and Dylan Cope and Jarek Liesen and Kang Li and Lukas Seier and Theo Wolf and Uljad Berdica and Valentin Mohl and Alexander David Goldie and Aaron Courville and Karin Sevegnani and Shimon Whiteson and Jakob Nicolaus Foerster},
      year={2026},
      eprint={2511.16652},
      archivePrefix={arXiv},
      primaryClass={cs.LG},
      url={https://arxiv.org/abs/2511.16652}, 
}

@article{sorella_green_1998,
	title = {Green {Function} {Monte} {Carlo} with {Stochastic} {Reconfiguration}},
	volume = {80},
	url = {https://link.aps.org/doi/10.1103/PhysRevLett.80.4558},
	doi = {10.1103/PhysRevLett.80.4558},
	number = {20},
	urldate = {2026-03-04},
	journal = {Physical Review Letters},
	publisher = {American Physical Society},
	author = {Sorella, Sandro},
	month = may,
	year = {1998},
	pages = {4558--4561},
}

@InProceedings{pmlr-v107-barr20a,
  title = 	 {Quantum Ground States from Reinforcement Learning},
  author =       {Barr, Ariel and Gispen, Willem and Lamacraft, Austen},
  booktitle = 	 {Proceedings of The First Mathematical and Scientific Machine Learning Conference},
  pages = 	 {635--653},
  year = 	 {2020},
  editor = 	 {Lu, Jianfeng and Ward, Rachel},
  volume = 	 {107},
  series = 	 {Proceedings of Machine Learning Research},
  month = 	 {20--24 Jul},
  publisher =    {PMLR},
  url = 	 {https://proceedings.mlr.press/v107/barr20a.html}
}

@article{wuVariationalBenchmarksQuantum2024,
	title = {Variational benchmarks for quantum many-body problems},
	volume = {386},
	url = {https://www.science.org/doi/10.1126/science.adg9774},
	doi = {10.1126/science.adg9774},
	number = {6719},
	urldate = {2026-02-20},
	journal = {Science},
	publisher = {American Association for the Advancement of Science},
	author = {Wu, Dian and Rossi, Riccardo and Vicentini, Filippo and Astrakhantsev, Nikita and Becca, Federico and Cao, Xiaodong and Carrasquilla, Juan and Ferrari, Francesco and Georges, Antoine and Hibat-Allah, Mohamed and Imada, Masatoshi and Läuchli, Andreas M. and Mazzola, Guglielmo and Mezzacapo, Antonio and Millis, Andrew and Robledo Moreno, Javier and Neupert, Titus and Nomura, Yusuke and Nys, Jannes and Parcollet, Olivier and Pohle, Rico and Romero, Imelda and Schmid, Michael and Silvester, J. Maxwell and Sorella, Sandro and Tocchio, Luca F. and Wang, Lei and White, Steven R. and Wietek, Alexander and Yang, Qi and Yang, Yiqi and Zhang, Shiwei and Carleo, Giuseppe},
	month = oct,
	year = {2024},
	pages = {296--301},
}

@article{jae2025reinforcement-2b9, 
  year    = {2025}, 
  title   = {Reinforcement Learning to Learn Quantum States for Heisenberg Scaling Accuracy}, 
  author  = {Jae, Jeongwoo and Hong, Jeonghoon and Choo, Jinho and Kwon, Yeong‐Dae}, 
  journal = {Advanced Quantum Technologies}, 
  issn    = {2511-9044}, 
  doi     = {10.1002/qute.202500206}, 
  eprint  = {2412.02334}, 
  number  = {10}, 
  volume  = {8}
}

@inproceedings{
chen2022systematic,
title={Systematic improvement of neural network quantum states using Lanczos},
author={Hongwei Chen and Douglas Gerard Hendry and Phillip E Weinberg and Adrian Feiguin},
booktitle={Advances in Neural Information Processing Systems},
editor={Alice H. Oh and Alekh Agarwal and Danielle Belgrave and Kyunghyun Cho},
year={2022},
url={https://openreview.net/forum?id=qZUHvvtbzy}
}

@article{nomura2021dirac-type-d78, 
  year    = {2021}, 
  title   = {Dirac-Type Nodal Spin Liquid Revealed by Refined Quantum Many-Body Solver Using Neural-Network Wave Function, Correlation Ratio, and Level Spectroscopy}, 
  author  = {Nomura, Yusuke and Imada, Masatoshi}, 
  journal = {Physical Review X}, 
  doi     = {10.1103/physrevx.11.031034}, 
  eprint  = {2005.14142}, 
  pages   = {031034}, 
  number  = {3}, 
  volume  = {11}
}

@article{rende2025foundation-7f9, 
  year    = {2025}, 
  title   = {Foundation neural-networks quantum states as a unified Ansatz for multiple hamiltonians}, 
  author  = {Rende, Riccardo and Viteritti, Luciano Loris and Becca, Federico and Scardicchio, Antonello and Laio, Alessandro and Carleo, Giuseppe}, 
  journal = {Nature Communications}, 
  doi     = {10.1038/s41467-025-62098-x}, 
  pmid    = {40764510}, 
  pmcid   = {{PMC}12325958}, 
  eprint  = {2502.09488}, 
  pages   = {7213}, 
  number  = {1}, 
  volume  = {16}
}

@InProceedings{pmlr-v145-gispen22a,
  title = 	 {Ground States of Quantum Many Body Lattice Models via Reinforcement Learning},
  author =       {Gispen, Willem and Lamacraft, Austen},
  booktitle = 	 {Proceedings of the 2nd Mathematical and Scientific Machine Learning Conference},
  pages = 	 {369--385},
  year = 	 {2022},
  editor = 	 {Bruna, Joan and Hesthaven, Jan and Zdeborova, Lenka},
  volume = 	 {145},
  series = 	 {Proceedings of Machine Learning Research},
  month = 	 {16--19 Aug},
  publisher =    {PMLR},
  url = 	 {https://proceedings.mlr.press/v145/gispen22a.html}
}

@book{horn_matrix_2012,
	address = {Cambridge ; New York},
	edition = {2nd ed},
	title = {Matrix analysis},
	isbn = {978-0-521-83940-2},
	publisher = {Cambridge University Press},
	author = {Horn, Roger A. and Johnson, Charles R.},
	year = {2012},
}

@article{wolff1990critical-079, 
  year    = {1990}, 
  title   = {Critical slowing down}, 
  author  = {Wolff, Ulli}, 
  journal = {Nuclear Physics B - Proceedings Supplements}, 
  issn    = {0920-5632}, 
  doi     = {10.1016/0920-5632(90)90224-i}, 
  pages   = {93--102}, 
  volume  = {17}
}

@article{debbio2004critical-247, 
  year    = {2004}, 
  title   = {Critical slowing down of topological modes}, 
  author  = {Debbio, Luigi Del and Manca, Gian Mario and Vicari, Ettore}, 
  journal = {Physics Letters B}, 
  issn    = {0370-2693}, 
  doi     = {10.1016/j.physletb.2004.05.038}, 
  eprint  = {hep-lat/0403001}, 
  pages   = {315--323}, 
  number  = {3-4}, 
  volume  = {594}
}

@article{moss2026double-cec, 
  year    = {2026}, 
  title   = {Double descent: When do neural quantum states generalize?}, 
  author  = {Moss, M. Schuyler and Orfi, Alev and Roth, Christopher and Sengupta, Anirvan M. and Georges, Antoine and Sels, Dries and Dawid, Anna and Valenti, Agnes}, 
  journal = {Physical Review E}, 
  issn    = {2470-0045}, 
  doi     = {10.1103/cwmj-fxr4}, 
  eprint  = {2508.00068}, 
  pages   = {045303}, 
  number  = {4}, 
  volume  = {113}
}

@article{moss2025leveraging-475, 
  year    = {2025}, 
  title   = {Leveraging recurrence in neural network wavefunctions for large-scale simulations of Heisenberg antiferromagnets on the square lattice}, 
  author  = {Moss, M. Schuyler and Wiersema, Roeland and Hibat-Allah, Mohamed and Carrasquilla, Juan and Melko, Roger G.}, 
  journal = {Physical Review B}, 
  issn    = {2469-9950}, 
  doi     = {10.1103/6ccd-wzhz}, 
  pages   = {134450}, 
  number  = {13}, 
  volume  = {112}
}

\doparttoc \faketableofcontents \part{} 
\newpage
\appendix
\addcontentsline{toc}{section}{Appendix}
\part{Appendix} 

\parttoc

\newpage
\section{Dirac Notation}
\label{app:dirac}

This appendix provides a self-contained introduction to the Dirac (bra-ket) notation used throughout the paper.

\textbf{Hilbert space.}
A quantum state lives in a complex vector space $\mathcal{H}$ called a \emph{Hilbert space}, equipped with an inner product. For a system of $N$ spin-$1/2$ particles, the Hilbert space is $\mathcal{H} = (\mathbb{C}^2)^{\otimes N}$, which has dimension $2^N$. A natural orthonormal basis is the \emph{computational} (spin) basis, whose elements $\ket{\vb{s}}$ are indexed by binary spin configurations $\vb{s} \in \{\pm1\}^N$.

\textbf{Ket vectors.}
A quantum state is written as a \emph{ket} $\ket{\psi}$, which is simply a column vector in $\mathcal{H}$. Any state can be expanded in the computational basis,
\begin{equation}
    \ket{\psi} = \sum_{\vb{s}\in\{\pm1\}^N} \psi(\vb{s})\,\ket{\vb{s}},
\end{equation}
where each coefficient $\psi(\vb{s}) \in \mathbb{C}$ is the \emph{amplitude} of configuration $\vb{s}$. The vector $\ket{\psi}$ is therefore completely specified by the mapping $\vb{s} \mapsto \psi(\vb{s})$, which NQS parameterize with a neural network.

\textbf{Bra vectors and inner product.}
The \emph{bra} $\bra{\psi}$ is the conjugate transpose (Hermitian adjoint) of $\ket{\psi}$,
\begin{equation}
    \bra{\psi} = \ket{\psi}^\dagger.
\end{equation}
The \emph{inner product} of two states is written $\braket{\phi}{\psi}$, which evaluates to the complex number $\sum_{\vb{s}} \phi(\vb{s})^* \psi(\vb{s})$. When $\phi = \psi$ this gives the squared norm $\braket{\psi}{\psi} = \sum_{\vb{s}} |\psi(\vb{s})|^2 \geq 0$, with equality only for the zero vector. The basis states are orthonormal:
\begin{equation}
    \braket{\vb{s}}{\vb{s}'} = \delta_{\vb{s},\vb{s}'},
\end{equation}
where $\delta_{\vb{s},\vb{s}'}$ is the Kronecker delta.

\textbf{Completeness relation.}
The basis states form a complete set, meaning the \emph{identity operator} $\I$ can be resolved as
\begin{equation}
    \sum_{\vb{s}} \ket{\vb{s}}\bra{\vb{s}} = \I.
\end{equation}
This is used repeatedly in derivations (e.g.\ Appendix~\ref{app:variational_gradient}) to insert a basis expansion and convert abstract operator equations into sums over spin configurations.

\textbf{Operators and matrix elements.}
A quantum \emph{operator} $\hat{A}$ is a linear map $\mathcal{H}\to\mathcal{H}$, represented as a $2^N \times 2^N$ matrix in the computational basis. The entry at row $\vb{s}$ and column $\vb{s}'$ is the \emph{matrix element}
\begin{equation}
    \mel{\vb{s}}{\hat{A}}{\vb{s}'} = \bra{\vb{s}}\!\left(\hat{A}\ket{\vb{s}'}\right) = (\vb{A})_{\vb{s},\vb{s}'}.
\end{equation}
An operator is \emph{Hermitian} (or self-adjoint) if $\hat{A} = \hat{A}^\dagger$, which implies real eigenvalues. The Hamiltonian $\hat{H}$ is always Hermitian because energy is a real-valued observable.

\textbf{Expectation value.}
The expected value of an observable $\hat{A}$ in state $\ket{\psi}$ is
\begin{equation}
    \langle \hat{A} \rangle_\psi := \frac{\mel{\psi}{\hat{A}}{\psi}}{\braket{\psi}{\psi}}.
\end{equation}
For the Hamiltonian, this gives the variational energy $E[\psi] = \mel{\psi}{\hat{H}}{\psi}/\braket{\psi}{\psi}$ that NQS minimizes. When $\ket{\psi}$ is normalized ($\braket{\psi}{\psi}=1$), the denominator can be dropped.

\textbf{Summary table.}
\begin{center}
\begin{tabular}{lll}
\toprule
\textbf{Dirac symbol} & \textbf{Linear-algebra equivalent} & \textbf{Meaning in this paper} \\
\midrule
$\ket{\psi}$ & column vector in $\mathbb{C}^{2^N}$ & quantum state / wavefunction \\
$\bra{\psi}$ & conjugate row vector & dual / adjoint state \\
$\braket{\phi}{\psi}$ & $\vb*{\phi}^\dagger \vb*{\psi}$ (dot product) & inner product; overlap of two states \\
$\braket{\psi}{\psi}$ & $\|\vb*{\psi}\|^2$ & squared norm; $= 1$ for normalized states \\
$\ket{\vb{s}}$ & standard basis vector $e_{\vb{s}}$ & spin-configuration basis state \\
$\mel{\vb{s}}{\hat{H}}{\vb{s}'}$ & $(\vb{H})_{\vb{s},\vb{s}'}$ matrix entry & Hamiltonian matrix element \\
$\mel{\psi}{\hat{H}}{\psi}$ & $\vb*{\psi}^\dagger \vb{H} \vb*{\psi}$ (quadratic form) & (unnormalized) variational energy \\
$\sum_{\vb{s}}\ket{\vb{s}}\bra{\vb{s}}$ & $\I_{2^N}$ (identity matrix) & completeness / resolution of identity \\
\bottomrule
\end{tabular}
\end{center}

\section{Mathematical Statements}

\subsection{Proof of Proposition \ref{prop:pg_nqs}}
\label{app:proof_of_pg_nqs}
\pgnqsprop*

\begin{proof}
We follow the proof strategy of the policy gradient theorem \citep{agarwal2021reinforcement} to connect the standard NQS/VMC gradient estimator with REINFORCE-style updates when the probability distribution is parameterized autoregressively.

Differentiating gives
\begin{align}
    \grad_{\boldsymbol{\theta}} E[\psi_{\boldsymbol{\theta}}]
    &= \grad_{\boldsymbol{\theta}} \sum_{\vb{s}} \mathcal{P}_{\boldsymbol{\theta}}(\vb{s})\,E^{\mathrm{loc}}_{\boldsymbol{\theta}}(\vb{s})\\
    &= \sum_{\vb{s}} \grad_{\boldsymbol{\theta}} \mathcal{P}_{\boldsymbol{\theta}}(\vb{s})\,E^{\mathrm{loc}}_{\boldsymbol{\theta}}(\vb{s})
    \;+\;\sum_{\vb{s}} \mathcal{P}_{\boldsymbol{\theta}}(\vb{s})\,\grad_{\boldsymbol{\theta}} E^{\mathrm{loc}}_{\boldsymbol{\theta}}(\vb{s})\\
    &= \sum_{\vb{s}} \mathcal{P}_{\boldsymbol{\theta}}(\vb{s})\,E^{\mathrm{loc}}_{\boldsymbol{\theta}}(\vb{s})\grad_{\boldsymbol{\theta}} \log \mathcal{P}_{\boldsymbol{\theta}}(\vb{s})
    \;+\;\sum_{\vb{s}} \mathcal{P}_{\boldsymbol{\theta}}(\vb{s})\,\grad_{\boldsymbol{\theta}} E^{\mathrm{loc}}_{\boldsymbol{\theta}}(\vb{s})\\
    &= \E_{\vb{s}\sim\mathcal{P}_{\boldsymbol{\theta}}}\!\bqty{E^{\mathrm{loc}}_{\boldsymbol{\theta}}(\vb{s})\,\grad_{\boldsymbol{\theta}} \log \mathcal{P}_{\boldsymbol{\theta}}(\vb{s})}
    \;+\;\E_{\vb{s}\sim\mathcal{P}_{\boldsymbol{\theta}}}\!\bqty{\grad_{\boldsymbol{\theta}} E^{\mathrm{loc}}_{\boldsymbol{\theta}}(\vb{s})}.
    \label{eq:energy-split}
\end{align}
We now analyze the second term. Define the log-derivative observable
\[
\vb{O}(\vb{s}):=\grad_{\boldsymbol{\theta}}\log\psi_{\boldsymbol{\theta}}(\vb{s}),
\]
then
\begin{align}\label{eq:second_term}
    \E_{\vb{s}\sim\mathcal{P}_{\boldsymbol{\theta}}}\!\bqty{\grad_{\boldsymbol{\theta}} E^{\mathrm{loc}}_{\boldsymbol{\theta}}(\vb{s})}
    &= \E_{\vb{s}\sim\mathcal{P}_{\boldsymbol{\theta}}}\!\bqty{\grad_{\boldsymbol{\theta}} \sum_{\vb{s}'}\frac{\psi_{\boldsymbol{\theta}}(\vb{s}')}{\psi_{\boldsymbol{\theta}}(\vb{s})}\mel{\vb{s}}{\hat{H}}{\vb{s}'}}\\
    &= \E_{\vb{s}\sim\mathcal{P}_{\boldsymbol{\theta}}}\!\bqty{\sum_{\vb{s}'}\mel{\vb{s}}{\hat{H}}{\vb{s}'}\frac{\pqty{\psi_{\boldsymbol{\theta}}(\vb{s})\grad_{\boldsymbol{\theta}}\psi_{\boldsymbol{\theta}}(\vb{s}') -\grad_{\boldsymbol{\theta}}\psi_{\boldsymbol{\theta}}(\vb{s})\psi_{\boldsymbol{\theta}}(\vb{s}')}}{\psi_{\boldsymbol{\theta}}(\vb{s})^2}}\\
    &= \E_{\vb{s}\sim\mathcal{P}_{\boldsymbol{\theta}}}\!\bqty{\sum_{\vb{s}'}\mel{\vb{s}}{\hat{H}}{\vb{s}'}\,
    \frac{\psi_{\boldsymbol{\theta}}(\vb{s}')}{\psi_{\boldsymbol{\theta}}(\vb{s})}\pqty{\vb{O}(\vb{s}')-\vb{O}(\vb{s})}}\\
    &= \sum_{\vb{s}} \mathcal{P}_{\boldsymbol{\theta}}(\vb{s})\sum_{\vb{s}'} \hat{H}_{\vb{s},\vb{s}'}\frac{\psi_{\boldsymbol{\theta}}(\vb{s}')}{\psi_{\boldsymbol{\theta}}(\vb{s})}
    \pqty{\vb{O}(\vb{s}')-\vb{O}(\vb{s})},
\label{eq:pathwise-1}
\end{align}
where $\hat{H}_{\vb{s},\vb{s}'}=\mel{\vb{s}}{\hat{H}}{\vb{s}'}$,
which gives the double-sum form
\begin{equation}
\E_{\vb{s}\sim\mathcal{P}_{\boldsymbol{\theta}}}\!\bqty{\grad_{\boldsymbol{\theta}} E^{\mathrm{loc}}_{\boldsymbol{\theta}}(\vb{s})}
=
\frac{1}{Z_{\boldsymbol{\theta}}}\sum_{\vb{s},\vb{s}'}
\psi_{\boldsymbol{\theta}}(\vb{s})\psi_{\boldsymbol{\theta}}(\vb{s}')\,\hat{H}_{\vb{s},\vb{s}'}
\pqty{\vb{O}(\vb{s}')-\vb{O}(\vb{s})}.
\label{eq:pathwise-2}
\end{equation}
At this point, the only remaining question is whether the second term in~\eqref{eq:pathwise-2} vanishes. This happens when the Hamiltonian is stoquastic in the chosen basis. In that case, its matrix elements are real and Hermiticity implies $\hat{H}_{\vb{s},\vb{s}'}=\hat{H}_{\vb{s}',\vb{s}}$. Moreover, according to Perron-Frobenius  \citep{horn_matrix_2012}, all the components of the ground state wavefunction of a stoquastic Hamiltonian are real and strictly positive, so $\psi_{\boldsymbol{\theta}}(\vb{s}) = \abs{\psi_{\boldsymbol{\theta}}(\vb{s})}$ and the log-derivative observable $\vb{O}(\vb{s})$ is real-valued.

Under this assumption, the factor $\psi_{\boldsymbol{\theta}}(\vb{s})\psi_{\boldsymbol{\theta}}(\vb{s}')\hat{H}_{\vb{s},\vb{s}'}$ in~\eqref{eq:pathwise-2} is symmetric under exchanging $\vb{s}$ and $\vb{s}'$, whereas the difference $\vb{O}(\vb{s}')-\vb{O}(\vb{s})$ is antisymmetric. Therefore, each term in the double sum cancels with the term obtained by swapping $\vb{s}$ and $\vb{s}'$, and the whole sum is zero. Hence,
\begin{equation}
\E_{\vb{s}\sim\mathcal{P}_{\boldsymbol{\theta}}}\!\bqty{\grad_{\boldsymbol{\theta}} E^{\mathrm{loc}}_{\boldsymbol{\theta}}(\vb{s})}=0.
\end{equation}
Substituting back into~\eqref{eq:energy-split} yields the pure score-function (policy-gradient) estimator:
\begin{equation}
\grad_{\boldsymbol{\theta}} E[\psi_{\boldsymbol{\theta}}]
=
\E_{\vb{s}\sim\mathcal{P}_{\boldsymbol{\theta}}}\!\bqty{E^{\mathrm{loc}}_{\boldsymbol{\theta}}(\vb{s})\,\grad_{\boldsymbol{\theta}}\log \mathcal{P}_{\boldsymbol{\theta}}(\vb{s})}.
\label{eq:score}
\end{equation}

Moreover, for any constant $c$ that doesn't depend on $\vb{s}$,
\begin{align}
    \E_{\vb{s}\sim\mathcal{P}_{\boldsymbol{\theta}}}\!\bqty{c \grad_{\boldsymbol{\theta}} \log \mathcal{P}_{\boldsymbol{\theta}}(\vb{s})} &=  c \sum_{\vb{s}} \mathcal{P}_{\boldsymbol{\theta}}(\vb{s})\grad_{\boldsymbol{\theta}} \log \mathcal{P}_{\boldsymbol{\theta}}(\vb{s}) \\
    &= c \sum_{\vb{s}}\grad_{\boldsymbol{\theta}} \mathcal{P}_{\boldsymbol{\theta}}(\vb{s})\\
    &= c \grad_{\boldsymbol{\theta}} \sum_{\vb{s}} \mathcal{P}_{\boldsymbol{\theta}}(\vb{s})\\
    &= c \grad_{\boldsymbol{\theta}} 1 = 0.
\end{align}
So we may
subtract any constant baseline without bias \citep{agarwal2021reinforcement}. Choosing the baseline as the global energy
$E[\psi_{\boldsymbol{\theta}}]=\E_{\vb{s}\sim\mathcal{P}_{\boldsymbol{\theta}}}[E^{\mathrm{loc}}_{\boldsymbol{\theta}}(\vb{s})]$ gives the form
\begin{equation}
\grad_{\boldsymbol{\theta}} E[\psi_{\boldsymbol{\theta}}]
=
\E_{\vb{s}\sim\mathcal{P}_{\boldsymbol{\theta}}}\!\bqty{\pqty{E^{\mathrm{loc}}_{\boldsymbol{\theta}}(\mathbf{s}) - \mathbb{E}_{\mathbf{s}\sim\mathcal{P}_{\boldsymbol{\theta}}}[E^{\mathrm{loc}}_{\boldsymbol{\theta}}(\mathbf{s})]}\,\grad_{\boldsymbol{\theta}}\log \mathcal{P}_{\boldsymbol{\theta}}(\vb{s})},
\label{eq:reinforce}
\end{equation}
which is analogous to the Advantage form of the REINFORCE estimator \citep{agarwal2021reinforcement}. 
\end{proof}

\subsection{Variational Gradient Derivation}\label{app:variational_gradient}

In this section, we show how to derive the well known VMC-gradient in Eq.~\eqref{eq:vmc} \citep{Carleo_2017} from Eq.~\eqref{eq:energy_expectation}. We start from the general expression for the energy, valid for any (not necessarily normalized) state $|\psi_{\boldsymbol{\theta}}\rangle$:
\begin{equation}
    E[\psi_{\boldsymbol{\theta}}] = \frac{\langle\psi_{\boldsymbol{\theta}}|\hat{H}|\psi_{\boldsymbol{\theta}}\rangle}{\langle\psi_{\boldsymbol{\theta}}|\psi_{\boldsymbol{\theta}}\rangle}.
\end{equation}
Differentiating with respect to $\theta_k$ using the quotient rule and the Hermiticity of $\hat{H}$ gives:
\begin{equation}
\partial_{\theta_k} E[\psi_{\boldsymbol{\theta}}] = \frac{2\operatorname{Re}\left\{(\partial_{\theta_k}\langle\psi_{\boldsymbol{\theta}}|)\hat{H}|\psi_{\boldsymbol{\theta}}\rangle\right\}}{\langle\psi_{\boldsymbol{\theta}}|\psi_{\boldsymbol{\theta}}\rangle} - E[\psi_{\boldsymbol{\theta}}]\frac{2\operatorname{Re}\left\{(\partial_{\theta_k}\langle\psi_{\boldsymbol{\theta}}|)|\psi_{\boldsymbol{\theta}}\rangle\right\}}{\langle\psi_{\boldsymbol{\theta}}|\psi_{\boldsymbol{\theta}}\rangle}.
\end{equation}
Since the final result is norm-invariant, we now choose the convenient normalization $\langle\psi_{\boldsymbol{\theta}}|\psi_{\boldsymbol{\theta}}\rangle = 1$, which simplifies the expression to:
\begin{equation}
\partial_{\theta_k} E[\psi_{\boldsymbol{\theta}}] = 2\operatorname{Re}\left\{(\partial_{\theta_k}\langle\psi_{\boldsymbol{\theta}}|)\bigl(\hat{H} - E[\psi_{\boldsymbol{\theta}}]\bigr)|\psi_{\boldsymbol{\theta}}\rangle\right\}.
\end{equation}
We insert the completeness relation $\sum_{\mathbf{s}}|\mathbf{s}\rangle\langle\mathbf{s}|=\I$ twice and use $\mathcal{P}_{\boldsymbol{\theta}}(\mathbf{s}) = |\psi_{\boldsymbol{\theta}}(\mathbf{s})|^2$:
\begin{align}
(\partial_{\theta_k}\langle\psi_{\boldsymbol{\theta}}|)\bigl(\hat{H} - E[\psi_{\boldsymbol{\theta}}]\bigr)|\psi_{\boldsymbol{\theta}}\rangle
&= \sum_{\mathbf{s}} (\partial_{\theta_k}\langle\psi_{\boldsymbol{\theta}}|)|\mathbf{s}\rangle\,\langle\mathbf{s}|\bigl(\hat{H} - E[\psi_{\boldsymbol{\theta}}]\bigr)|\psi_{\boldsymbol{\theta}}\rangle\\
&= \sum_{\mathbf{s}} \partial_{\theta_k}\psi^*_{\boldsymbol{\theta}}(\mathbf{s})\left(\sum_{\mathbf{s}'}\langle\mathbf{s}|\hat{H}|\mathbf{s}'\rangle\langle\mathbf{s}'|\psi_{\boldsymbol{\theta}}\rangle - E[\psi_{\boldsymbol{\theta}}]\,\psi_{\boldsymbol{\theta}}(\mathbf{s})\right)\\
&= \sum_{\mathbf{s}} \psi^*_{\boldsymbol{\theta}}(\mathbf{s})O_k^*(\mathbf{s})\left(\sum_{\mathbf{s}'}\hat{H}_{\mathbf{s},\mathbf{s}'}\psi_{\boldsymbol{\theta}}(\mathbf{s}') - E[\psi_{\boldsymbol{\theta}}]\,\psi_{\boldsymbol{\theta}}(\mathbf{s})\right)\\
&= \sum_{\mathbf{s}} |\psi_{\boldsymbol{\theta}}(\mathbf{s})|^2\, O^*_k(\mathbf{s})\left(E^{\mathrm{loc}}_{\boldsymbol{\theta}}(\mathbf{s}) - E[\psi_{\boldsymbol{\theta}}]\right)\\
&= \mathbb{E}_{\mathbf{s} \sim \mathcal{P}_{\boldsymbol{\theta}}}\!\left[O^*_k(\mathbf{s})\Bigl(E^{\mathrm{loc}}_{\boldsymbol{\theta}}(\mathbf{s}) - \mathbb{E}_{\mathbf{s}\sim\mathcal{P}_{\boldsymbol{\theta}}}[E^{\mathrm{loc}}_{\boldsymbol{\theta}}(\mathbf{s})]\Bigr)\right],\\
\end{align}
where in the last step we used that $E[\psi_{\boldsymbol{\theta}}] = \mathbb{E}_{\mathbf{s}\sim\mathcal{P}_{\boldsymbol{\theta}}}[E^{\mathrm{loc}}_{\boldsymbol{\theta}}(\mathbf{s})]$. Therefore,
\begin{equation}
\partial_{\theta_k} E[\psi_{\boldsymbol{\theta}}] = 2\operatorname{Re}\left\{\mathbb{E}_{\mathbf{s} \sim \mathcal{P}_{\boldsymbol{\theta}}}\!\left[O^*_k(\mathbf{s})\Bigl(E^{\mathrm{loc}}_{\boldsymbol{\theta}}(\mathbf{s}) - \mathbb{E}_{\mathbf{s}\sim\mathcal{P}_{\boldsymbol{\theta}}}[E^{\mathrm{loc}}_{\boldsymbol{\theta}}(\mathbf{s})]\Bigr)\right]\right\}.
\end{equation}

\subsection{Variational Gradient Decomposition}
\label{app:vmc_decomposition}
For the development of a practical algorithm we can decompose the VMC gradient into its probability and phase components as follows. Starting from the standard VMC/NQS gradient estimator we can write
\begin{equation}
    \grad_{\boldsymbol{\theta}} E[\psi_{\boldsymbol{\theta}}]
    =
    2\,\Re\Bqty{
    \E_{\vb{s}\sim\mathcal{P}_{\boldsymbol{\theta}}}\!\bqty{
    \pqty{E^{\mathrm{loc}}_{\boldsymbol{\theta}}(\mathbf{s}) - \mathbb{E}_{\mathbf{s}\sim\mathcal{P}_{\boldsymbol{\theta}}}[E^{\mathrm{loc}}_{\boldsymbol{\theta}}(\mathbf{s})]}\,\vb{O}(\vb{s})^\ast
    }}.
\end{equation}
Recall that $\ast$ is the complex conjugate, and
\begin{equation}
    E^{\mathrm{loc}}_{\boldsymbol{\theta}}(\vb{s}) = \Re\Bqty{E^{\mathrm{loc}}_{\boldsymbol{\theta}}(\vb{s})} + i\cdot \Im\Bqty{E^{\mathrm{loc}}_{\boldsymbol{\theta}}(\vb{s})}, \quad \vb{O}(\vb{s}) = \Re\Bqty{\vb{O}(\vb{s})} + i\cdot\Im\Bqty{\vb{O}(\vb{s})}.
\end{equation}
For convenience, we denote $E_R := \Re\Bqty{E^{\mathrm{loc}}_{\boldsymbol{\theta}}(\vb{s})}$ and $E_I := \Im\Bqty{E^{\mathrm{loc}}_{\boldsymbol{\theta}}(\vb{s})}$ (analogously for $\vb{O}(\vb{s})$).  We can now write
\begin{align}
    \grad_{\boldsymbol{\theta}} E[\psi_{\boldsymbol{\theta}}] =&\, 2\,\Re\Bqty{
    \E_{\vb{s}\sim\mathcal{P}_{\boldsymbol{\theta}}}\bqty{\pqty{(E_R - \mathbb{E}_{\mathbf{s}\sim\mathcal{P}_{\boldsymbol{\theta}}}[E^{\mathrm{loc}}_{\boldsymbol{\theta}}(\mathbf{s})]) + iE_I}\pqty{O_R - iO_I}}}\\
    =&\, 2\,\Re\{
    \E_{\vb{s}\sim\mathcal{P}_{\boldsymbol{\theta}}}[(E_R - \mathbb{E}_{\mathbf{s}\sim\mathcal{P}_{\boldsymbol{\theta}}}[E^{\mathrm{loc}}_{\boldsymbol{\theta}}(\mathbf{s})])O_R \\
    &\,-i(E_R - \mathbb{E}_{\mathbf{s}\sim\mathcal{P}_{\boldsymbol{\theta}}}[E^{\mathrm{loc}}_{\boldsymbol{\theta}}(\mathbf{s})])O_I + iE_I O_R + E_I O_I]\}\\
    =&\, 2\,\Re\Bqty{
    \E_{\vb{s}\sim\mathcal{P}_{\boldsymbol{\theta}}}\bqty{(E_R - \mathbb{E}_{\mathbf{s}\sim\mathcal{P}_{\boldsymbol{\theta}}}[E^{\mathrm{loc}}_{\boldsymbol{\theta}}(\mathbf{s})])O_R}} + 2\,\Re\Bqty{
    \E_{\vb{s}\sim\mathcal{P}_{\boldsymbol{\theta}}}\bqty{E_I O_I}},
    \label{eq:complex-vmc-gradient}
\end{align}
which shows exactly how to update our model, when we parameterize the real and imaginary parts of the amplitude separately. Moreover, we can subtract a constant baseline, $c$, from the second term (corresponding to the imaginary parts of our parameterization) to reduce the variance:
\begin{align}
    2\,\Re\Bqty{
    \E_{\vb{s}\sim\mathcal{P}_{\boldsymbol{\theta}}}\bqty{E_I (O_I - c)}} &=  2\,\Re\Bqty{
    \E_{\vb{s}\sim\mathcal{P}_{\boldsymbol{\theta}}}\bqty{E_I O_I}} -  2\,\Re\Bqty{
    \E_{\vb{s}\sim\mathcal{P}_{\boldsymbol{\theta}}}\bqty{E_I  c}}\\
    &= 2\,\Re\Bqty{
    \E_{\vb{s}\sim\mathcal{P}_{\boldsymbol{\theta}}}\bqty{E_I O_I}} -  2c\,\Re\Bqty{
    \E_{\vb{s}\sim\mathcal{P}_{\boldsymbol{\theta}}}\bqty{E_I}} \\
    &= 2\,\Re\Bqty{
    \E_{\vb{s}\sim\mathcal{P}_{\boldsymbol{\theta}}}\bqty{E_I O_I}}, \label{eq:hermicity}
\end{align}
where line \ref{eq:hermicity} comes from the reality of expected values under Hermitian operators. So like before, we may subtract a baseline to reduce variance. 

To find the variance-minimizing constant, consider the one-sample estimator for the
imaginary contribution to the gradient
\begin{equation}
    g_I(\vb{s};c) := 2\,E_I(\vb{s})\pqty{O_I(\vb{s})-c}.
\end{equation}
Since $\E_{\vb{s}\sim\mathcal{P}_{\boldsymbol{\theta}}}[E_I(\vb{s})]=0$, subtracting any constant
$c$ leaves the estimator unbiased:
\begin{equation}
    \E[g_I(\vb{s};c)] = 2\,\E\!\left[E_I(\vb{s})\,O_I(\vb{s})\right].
\end{equation}
Therefore, the optimal constant is the one that minimizes
$\mathbbm{V}(g_I(\vb{s};c))$, or equivalently its second moment:
\begin{equation}
    \argmin_c \; \E\!\left[E_I(\vb{s})^2\pqty{O_I(\vb{s})-c}^2\right].
\end{equation}
Differentiating with respect to $c$ and setting the derivative to zero gives
\begin{align}
    0
    &= \frac{\partial}{\partial c}\E\left[E_I(\vb{s})^2\pqty{O_I(\vb{s})-c}^2\right] \\
    &= -2\,\E\left[E_I(\vb{s})^2\pqty{O_I(\vb{s})-c}\right],
\end{align}
hence the variance-minimizing baseline is
\begin{equation}
    c^\star =\frac{\E\!\left[E_I(\vb{s})^2\,O_I(\vb{s})\right]}
         {\E \left[E_I(\vb{s})^2\right]}.
\end{equation}
If $O_I(\vb{s})$ is vector-valued, this formula is applied coordinate-wise. We tried using this minimizer in practice, but its implementation requires adding an extra forward and backward pass, making it significantly slower than the vanilla gradient expression.

\subsection{Exact Complex-Amplitude Decomposition}
\label{app:exact_complex_decomposition}

We now derive an exact decomposition of the energy difference between two
normalized complex wavefunctions. Let
$\ket{\psi_{\boldsymbol{\theta}}}$ be the reference state and
$\ket{\psi_{\boldsymbol{\theta}'}}$ be the candidate state. We choose the global
phase of $\ket{\psi_{\boldsymbol{\theta}'}}$ so that
\begin{equation}
    \braket{\psi_{\boldsymbol{\theta}}}{\psi_{\boldsymbol{\theta}'}}
    =
    \abs{
    \braket{\psi_{\boldsymbol{\theta}}}{\psi_{\boldsymbol{\theta}'}}
    }
    =
    \sqrt{
    1-\mathcal I(
    \psi_{\boldsymbol{\theta}},
    \psi_{\boldsymbol{\theta}'}
    )
    }.
    \label{eq:parallel_transport_gauge}
\end{equation}
Define
\begin{equation}
    r_{\boldsymbol{\theta}';\boldsymbol{\theta}}(\vb{s})
    :=
    \frac{
    \mathcal P_{\boldsymbol{\theta}'}(\vb{s})
    }{
    \mathcal P_{\boldsymbol{\theta}}(\vb{s})
    },
    \qquad
    z_{\boldsymbol{\theta}';\boldsymbol{\theta}}(\vb{s})
    :=
    \frac{
    \psi_{\boldsymbol{\theta}'}(\vb{s})
    }{
    \psi_{\boldsymbol{\theta}}(\vb{s})
    }.
\end{equation}
Writing
\[
    z_{\boldsymbol{\theta}';\boldsymbol{\theta}}(\vb{s})
    =
    \sqrt{
    r_{\boldsymbol{\theta}';\boldsymbol{\theta}}(\vb{s})
    }
    e^{i\alpha_{\boldsymbol{\theta}';\boldsymbol{\theta}}(\vb{s})},
\]
we take the wrapped phase difference
\begin{equation}
    \alpha_{\boldsymbol{\theta}';\boldsymbol{\theta}}(\vb{s})
    =
    \atanTwo
    \left(
    \sin\left(
    \arg\psi_{\boldsymbol{\theta}'}(\vb{s})
    -
    \arg\psi_{\boldsymbol{\theta}}(\vb{s})
    \right),
    \cos\left(
    \arg\psi_{\boldsymbol{\theta}'}(\vb{s})
    -
    \arg\psi_{\boldsymbol{\theta}}(\vb{s})
    \right)
    \right).
\end{equation}
For later use, define the PWO phase increment
\begin{equation}
     \phi_{\boldsymbol{\theta}';\boldsymbol{\theta}}(\vb{s})
    :=
    2\alpha_{\boldsymbol{\theta}';\boldsymbol{\theta}}(\vb{s}).
    \label{eq:pwo_phase_increment_definition}
\end{equation}
The factor of two appears because the VMC phase gradient in
Eq.~\eqref{eq:complex-vmc-gradient} contains
$2A_{\boldsymbol{\theta}}^{\mathrm{I}}\grad\arg\psi_{\boldsymbol{\theta}}$.
\\
\begin{theorem}[Exact complex-amplitude decomposition]
\label{thm:exact-complex-decomposition}
Let
\[
    \hat K_{\boldsymbol{\theta}}
    :=
    \hat H-E[\psi_{\boldsymbol{\theta}}]\I,
    \qquad
    \ket{\eta}
    :=
    \ket{\psi_{\boldsymbol{\theta}'}}
    -
    \ket{\psi_{\boldsymbol{\theta}}},
\]
and let
\[
    \Delta E^{\mathrm{loc}}_{\boldsymbol{\theta}}
    :=
    E^{\mathrm{loc}}_{\boldsymbol{\theta}}
    -
    E[\psi_{\boldsymbol{\theta}}]
    =
    A_{\boldsymbol{\theta}}^{\mathrm{R}}+iA_{\boldsymbol{\theta}}^{\mathrm{I}}.
\]
Then
\begin{align}
    E[\psi_{\boldsymbol{\theta}'}]
    -
    E[\psi_{\boldsymbol{\theta}}]
    &=
    2\Re
    \E_{\vb{s}\sim\mathcal P_{\boldsymbol{\theta}}}
    \left[
    z_{\boldsymbol{\theta}';\boldsymbol{\theta}}(\vb{s})^*
    \Delta E^{\mathrm{loc}}_{\boldsymbol{\theta}}(\vb{s})
    \right]
    +
    \mel{\eta}{\hat K_{\boldsymbol{\theta}}}{\eta}.
    \label{eq:exact-complex-decomposition}
\end{align}
Consequently,
\begin{align}
    E[\psi_{\boldsymbol{\theta}'}]
    -
    E[\psi_{\boldsymbol{\theta}}]
    &\le
    2\E_{\vb{s}\sim\mathcal P_{\boldsymbol{\theta}}}
    \left[
    \sqrt r
    \cos\alpha
    A_{\boldsymbol{\theta}}^{\mathrm{R}}
    \right]
    +
    2\E_{\vb{s}\sim\mathcal P_{\boldsymbol{\theta}}}
    \left[
    \sqrt r
    \sin\alpha
    A_{\boldsymbol{\theta}}^{\mathrm{I}}
    \right]
     \\
    &\quad+
    2
    \norm{
    \hat H-E[\psi_{\boldsymbol{\theta}}]\I
    }_{\infty}
    \left(
    1-
    \sqrt{
    1-\mathcal I(
    \psi_{\boldsymbol{\theta}},
    \psi_{\boldsymbol{\theta}'}
    )
    }
    \right),
    \label{eq:exact-complex-upper-bound}
\end{align}
where, for readability, we have suppressed the
$(\boldsymbol{\theta}';\boldsymbol{\theta})$ subscripts on $r$ and $\alpha$.
\end{theorem}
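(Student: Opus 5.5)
The plan is to expand the energy of the candidate state around the reference state by substituting $\ket{\psi_{\boldsymbol{\theta}'}}=\ket{\psi_{\boldsymbol{\theta}}}+\ket{\eta}$. Both states are normalized, so $E[\psi_{\boldsymbol{\theta}'}]=\mel{\psi_{\boldsymbol{\theta}'}}{\hat H}{\psi_{\boldsymbol{\theta}'}}$, and therefore
\[
E[\psi_{\boldsymbol{\theta}'}]-E[\psi_{\boldsymbol{\theta}}]=\mel{\psi_{\boldsymbol{\theta}'}}{\hat K_{\boldsymbol{\theta}}}{\psi_{\boldsymbol{\theta}'}},
\]
because $\mel{\psi_{\boldsymbol{\theta}'}}{E[\psi_{\boldsymbol{\theta}}]\I}{\psi_{\boldsymbol{\theta}'}}=E[\psi_{\boldsymbol{\theta}}]$. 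Expanding the right-hand side with $\psi_{\boldsymbol{\theta}'}=\psi_{\boldsymbol{\theta}}+\eta$ produces four terms:
\begin{itemize}
\item $\mel{\psi_{\boldsymbol{\theta}}}{\hat K_{\boldsymbol{\theta}}}{\psi_{\boldsymbol{\theta}}}=0$ by definition of $E[\psi_{\boldsymbol{\theta}}]$;
\item two cross terms, which by Hermiticity of $\hat K_{\boldsymbol{\theta}}$ combine to $2\Re\mel{\eta}{\hat K_{\boldsymbol{\theta}}}{\psi_{\boldsymbol{\theta}}}$;
\item the quadratic term $\mel{\eta}{\hat K_{\boldsymbol{\theta}}}{\eta}$.
\end{itemize}

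In the cross term, $\mel{\psi_{\boldsymbol{\theta}}}{\hat K_{\boldsymbol{\theta}}}{\psi_{\boldsymbol{\theta}}}$ vanishes again, so it reduces to $2\Re\mel{\psi_{\boldsymbol{\theta}'}}{\hat K_{\boldsymbol{\theta}}}{\psi_{\boldsymbol{\theta}}}$. To rewrite this as an expectation, insert completeness as in Appendix~\ref{app:variational_gradient}. Using $\sum_{\vb{s}'}\hat H_{\vb{s}\vb{s}'}\psi_{\boldsymbol{\theta}}(\vb{s}')=\psi_{\boldsymbol{\theta}}(\vb{s})E^{\mathrm{loc}}_{\boldsymbol{\theta}}(\vb{s})$, one gets
\[
\mel{\psi_{\boldsymbol{\theta}'}}{\hat K_{\boldsymbol{\theta}}}{\psi_{\boldsymbol{\theta}}}=\sum_{\vb{s}}\psi_{\boldsymbol{\theta}'}(\vb{s})^*\psi_{\boldsymbol{\theta}}(\vb{s})\,\Delta E^{\mathrm{loc}}_{\boldsymbol{\theta}}(\vb{s}).
\]
Writing $\psi_{\boldsymbol{\theta}'}^*=z^*\psi_{\boldsymbol{\theta}}^*$ and $|\psi_{\boldsymbol{\theta}}|^2=\mathcal P_{\boldsymbol{\theta}}$, this equals $\E_{\mathcal P_{\boldsymbol{\theta}}}[z^*\Delta E^{\mathrm{loc}}_{\boldsymbol{\theta}}]$. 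The common-support assumption is what makes $z$ well defined wherever $\mathcal P_{\boldsymbol{\theta}}>0$, and no mass of $\psi_{\boldsymbol{\theta}'}$ is lost outside that set. This gives the exact identity \eqref{eq:exact-complex-decomposition}.

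For the first-order part of the bound, substitute $z^*=\sqrt r\,e^{-i\alpha}$ and $\Delta E^{\mathrm{loc}}_{\boldsymbol{\theta}}=A^{\mathrm R}_{\boldsymbol{\theta}}+iA^{\mathrm I}_{\boldsymbol{\theta}}$. Then
\[
\Re(z^*\Delta E^{\mathrm{loc}}_{\boldsymbol{\theta}})=\sqrt r\,(\cos\alpha\,A^{\mathrm R}_{\boldsymbol{\theta}}+\sin\alpha\,A^{\mathrm I}_{\boldsymbol{\theta}}),
\]
which reproduces the first two terms of \eqref{eq:exact-complex-upper-bound}. The wrapping of $\alpha$ is harmless because only $\cos\alpha$ and $\sin\alpha$ enter.

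For the quadratic remainder, bound $\mel{\eta}{\hat K_{\boldsymbol{\theta}}}{\eta}\le\norm{\hat K_{\boldsymbol{\theta}}}_\infty\braket{\eta}{\eta}$. By normalization, $\braket{\eta}{\eta}=2-2\Re\braket{\psi_{\boldsymbol{\theta}}}{\psi_{\boldsymbol{\theta}'}}$. The gauge choice \eqref{eq:parallel_transport_gauge} turns this into $2(1-\sqrt{1-\mathcal I})$, which yields the stated bound.

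The main obstacle is consistency of the gauge. Fixing the global phase of $\psi_{\boldsymbol{\theta}'}$ changes $z$, and hence $\alpha$, by a constant rotation, while the energy difference itself is gauge invariant. So the linear term must be read with $\alpha$ defined in that parallel-transport gauge. I would state this explicitly: the energy is unchanged by the global phase, so one is free to pick the gauge that minimizes $\braket{\eta}{\eta}$. I would also note that the identity holds in any gauge, and only the remainder bound uses the specific choice.
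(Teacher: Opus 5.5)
Your proposal is correct and follows the paper's proof essentially step for step. You expand $\langle\psi_{\boldsymbol{\theta}'}|\hat K_{\boldsymbol{\theta}}|\psi_{\boldsymbol{\theta}'}\rangle$ around $\psi_{\boldsymbol{\theta}}$, identify the cross term with $\E_{\mathcal P_{\boldsymbol{\theta}}}[z^*\Delta E^{\mathrm{loc}}_{\boldsymbol{\theta}}]$, and bound the quadratic remainder by $\norm{\hat K_{\boldsymbol{\theta}}}_\infty\norm{\eta}^2$ in the parallel-transport gauge. The paper does the same; the only cosmetic difference is that you drop the $-1$ in $z^*-1$ using $\langle\psi_{\boldsymbol{\theta}}|\hat K_{\boldsymbol{\theta}}|\psi_{\boldsymbol{\theta}}\rangle=0$ instead of $\E_{\mathcal P_{\boldsymbol{\theta}}}[\Delta E^{\mathrm{loc}}_{\boldsymbol{\theta}}]=0$, which is the same fact, and your remarks on gauge dependence and common support correctly make explicit what the paper leaves implicit.
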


\begin{proof}
Since
$\mel{\psi_{\boldsymbol{\theta}}}{\hat K_{\boldsymbol{\theta}}}
{\psi_{\boldsymbol{\theta}}}=0$, expanding around the reference state gives
\begin{align}
    E[\psi_{\boldsymbol{\theta}'}]
    -
    E[\psi_{\boldsymbol{\theta}}]
    &=
    \mel{\psi_{\boldsymbol{\theta}'}}{\hat K_{\boldsymbol{\theta}}}
    {\psi_{\boldsymbol{\theta}'}}  \\
    &=
    2\Re
    \mel{\eta}{\hat K_{\boldsymbol{\theta}}}
    {\psi_{\boldsymbol{\theta}}}
    +
    \mel{\eta}{\hat K_{\boldsymbol{\theta}}}{\eta}.
\end{align}
Moreover,
$\eta(\vb{s})=(z_{\boldsymbol{\theta}';\boldsymbol{\theta}}(\vb{s})-1)
\psi_{\boldsymbol{\theta}}(\vb{s})$, and
\[
    \mel{\vb{s}}{\hat K_{\boldsymbol{\theta}}}
    {\psi_{\boldsymbol{\theta}}}
    =
    \psi_{\boldsymbol{\theta}}(\vb{s})
    \Delta E^{\mathrm{loc}}_{\boldsymbol{\theta}}(\vb{s}).
\]
Therefore,
\begin{align}
    \mel{\eta}{\hat K_{\boldsymbol{\theta}}}
    {\psi_{\boldsymbol{\theta}}}
    &=
    \E_{\vb{s}\sim\mathcal P_{\boldsymbol{\theta}}}
    \left[
    \left(
    z_{\boldsymbol{\theta}';\boldsymbol{\theta}}(\vb{s})^*
    -1
    \right)
    \Delta E^{\mathrm{loc}}_{\boldsymbol{\theta}}(\vb{s})
    \right]  \\
    &=
    \E_{\vb{s}\sim\mathcal P_{\boldsymbol{\theta}}}
    \left[
    z_{\boldsymbol{\theta}';\boldsymbol{\theta}}(\vb{s})^*
    \Delta E^{\mathrm{loc}}_{\boldsymbol{\theta}}(\vb{s})
    \right],
\end{align}
because
$\E_{\vb{s}\sim\mathcal P_{\boldsymbol{\theta}}}
[
\Delta E^{\mathrm{loc}}_{\boldsymbol{\theta}}(\vb{s})
]
=0$.
The residual satisfies
\begin{align}
    \mel{\eta}{\hat K_{\boldsymbol{\theta}}}{\eta}
    &\le
    \abs{
    \mel{\eta}{\hat K_{\boldsymbol{\theta}}}{\eta}
    }
    \le
    \norm{\hat K_{\boldsymbol{\theta}}}_{\infty}
    \norm{\eta}^2.
\end{align}
By the phase convention in Eq.~\eqref{eq:parallel_transport_gauge},
\begin{align}
    \norm{\eta}^2
    &=
    2-
    2\Re
    \braket{\psi_{\boldsymbol{\theta}}}{\psi_{\boldsymbol{\theta}'}}
    =
    2\left(
    1-
    \sqrt{
    1-\mathcal I(
    \psi_{\boldsymbol{\theta}},
    \psi_{\boldsymbol{\theta}'}
    )
    }
    \right).
\end{align}
Finally,
\begin{equation}
    \Re
    \left[
    z^*
    \Delta E^{\mathrm{loc}}_{\boldsymbol{\theta}}
    \right]
    =
    \sqrt r
    \left(
    \cos\alpha\,A_{\boldsymbol{\theta}}^{\mathrm{R}}
    +
    \sin\alpha\,A_{\boldsymbol{\theta}}^{\mathrm{I}}
    \right),
\end{equation}
which proves the result.
\end{proof}

\subsection{Amplitude and Phase Trust Regions Control Infidelity}
\label{app:amplitude_phase_trust_region_infidelity}

The bound above, in the Conservative Policy Iteration (CPI) style, depends on the infidelity between the current and
candidate wavefunctions. We now show that separate trust regions on the Born
distribution and the phase imply such an infidelity trust region.

\begin{lemma}[Amplitude and phase trust regions imply an infidelity trust region]
\label{lem:amplitude_phase_controls_infidelity}
Let $0\le\epsilon<1$ and $0\le\delta\le\pi/2$. Assume that we have no measure zero events and that for every
configuration with $\mathcal{P}_{\boldsymbol{\theta}}(\vb{s})>0$,
\begin{equation}
    1-\epsilon
    \le
     r_{\boldsymbol{\theta}';\boldsymbol{\theta}}(\vb{s})
    \le
    1+\epsilon,
    \label{eq:amplitude_trust_region_population}
\end{equation}
and that the phases satisfy, up to a global phase,
\begin{equation}
    \left|
    \alpha_{\boldsymbol{\theta}';\boldsymbol{\theta}}(\vb{s})\right|
    \le
    \delta/2.
    \label{eq:phase_trust_region_population}
\end{equation}
Then
\begin{equation}
    \mathcal I(\psi_{\boldsymbol{\theta}},\psi_{\boldsymbol{\theta}'})
    \le
    1-
    \cos^2\pqty{\frac{\delta}{2}}
    \frac{1+\sqrt{1-\epsilon^2}}{2}.
    \label{eq:infidelity_bound_epsilon_delta}
\end{equation}
\end{lemma}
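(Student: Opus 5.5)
The plan is to compute the overlap $\braket{\psi_{\boldsymbol{\theta}}}{\psi_{\boldsymbol{\theta}'}}$ directly in terms of the ratio $r$ and wrapped phase $\alpha$ from Appendix~\ref{app:exact_complex_decomposition}, and then lower-bound its modulus. Since $\mathcal I(\psi_{\boldsymbol{\theta}},\psi_{\boldsymbol{\theta}'}) = 1-\abs{\braket{\psi_{\boldsymbol{\theta}}}{\psi_{\boldsymbol{\theta}'}}}^2$ for normalized states, the whole task reduces to showing
\[
\abs{\braket{\psi_{\boldsymbol{\theta}}}{\psi_{\boldsymbol{\theta}'}}}^2 \ge \cos^2\pqty{\tfrac{\delta}{2}}\,\frac{1+\sqrt{1-\epsilon^2}}{2}.
\]

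\textbf{Step 1: express the overlap.} Write $\psi_{\boldsymbol{\theta}'}(\vb{s}) = z(\vb{s})\psi_{\boldsymbol{\theta}}(\vb{s})$ with $z=\sqrt{r}\,e^{i\alpha}$, as in the definition preceding Theorem~\ref{thm:exact-complex-decomposition}. This gives
\[
\braket{\psi_{\boldsymbol{\theta}}}{\psi_{\boldsymbol{\theta}'}} = \E_{\vb{s}\sim\mathcal P_{\boldsymbol{\theta}}}\bqty{\sqrt{r}\,e^{i\alpha}}.
\]
The modulus of the overlap is invariant under a global phase of $\psi_{\boldsymbol{\theta}'}$. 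We may therefore fix the global phase for which hypothesis~\eqref{eq:phase_trust_region_population} holds, so that $\abs{\alpha}\le\delta/2\le\pi/4$ everywhere. Then
\[
\abs{\braket{\psi_{\boldsymbol{\theta}}}{\psi_{\boldsymbol{\theta}'}}} \ge \Re\braket{\psi_{\boldsymbol{\theta}}}{\psi_{\boldsymbol{\theta}'}} = \E\bqty{\sqrt r\cos\alpha}.
\]
Because $\cos\alpha\ge\cos(\delta/2)\ge 0$ and $\sqrt r\ge0$, we get
\[
\abs{\braket{\psi_{\boldsymbol{\theta}}}{\psi_{\boldsymbol{\theta}'}}} \ge \cos(\delta/2)\,\E_{\mathcal P_{\boldsymbol{\theta}}}[\sqrt r].
\]
Both sides are nonnegative, so squaring preserves the inequality.

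\textbf{Step 2: lower-bound $\E[\sqrt r]$.} This is the main point. The no-measure-zero (common support) assumption gives $\E_{\mathcal P_{\boldsymbol{\theta}}}[r]=\sum_{\vb{s}}\mathcal P_{\boldsymbol{\theta}'}(\vb{s})=1$. The function $\sqrt{x}$ is concave on $[1-\epsilon,1+\epsilon]$, so it lies above its chord there:
\[
\sqrt{x}\ge \ell(x) := \frac{(1+\epsilon-x)\sqrt{1-\epsilon}+(x-1+\epsilon)\sqrt{1+\epsilon}}{2\epsilon}.
\]
Since $r$ takes values in this interval by~\eqref{eq:amplitude_trust_region_population}, and $\ell$ is affine, we have
\[
\E[\sqrt r]\ge \E[\ell(r)]=\ell(\E[r])=\ell(1)=\tfrac12\pqty{\sqrt{1-\epsilon}+\sqrt{1+\epsilon}}.
\]
The case $\epsilon=0$ is trivial, since then $r\equiv1$. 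This bound is attained by the two-point distribution putting mass $1/2$ on each endpoint, so it is sharp. I expect this step to be the only nonroutine one: one has to notice that the normalization constraint $\E[r]=1$, not merely the pointwise bound $r\ge 1-\epsilon$, must be used to get the $\sqrt{1-\epsilon^2}$ form.

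\textbf{Step 3: combine.} Squaring the bound from Step 2 gives
\[
\tfrac14\pqty{(1-\epsilon)+(1+\epsilon)+2\sqrt{1-\epsilon^2}}=\frac{1+\sqrt{1-\epsilon^2}}{2}.
\]
Substituting this into the squared inequality from Step 1 yields
\[
\abs{\braket{\psi_{\boldsymbol{\theta}}}{\psi_{\boldsymbol{\theta}'}}}^2\ge\cos^2(\delta/2)\,\frac{1+\sqrt{1-\epsilon^2}}{2},
\]
which is exactly~\eqref{eq:infidelity_bound_epsilon_delta}.
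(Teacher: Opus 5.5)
Your proof is correct and follows essentially the same route as the paper: bound the overlap modulus below by its real part, pull out $\cos(\delta/2)$ using $|\alpha|\le\delta/2$, lower-bound $\E_{\mathcal P_{\boldsymbol{\theta}}}[\sqrt r]$ by the chord of the concave square root together with $\E_{\mathcal P_{\boldsymbol{\theta}}}[r]=1$, and then square. Your expression $\braket{\psi_{\boldsymbol{\theta}}}{\psi_{\boldsymbol{\theta}'}}=\E[\sqrt r\,e^{i\alpha}]$ uses the consistent convention, whereas the paper's $\braket{\psi_{\boldsymbol{\theta}'}}{\psi_{\boldsymbol{\theta}}}$ is its complex conjugate; this changes neither the modulus nor the real part.
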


\begin{proof}
The overlap satisfies
\begin{align}
    \braket{\psi_{\boldsymbol{\theta}'}}{\psi_{\boldsymbol{\theta}}}
    &=
    \sum_{\vb{s}}
    \sqrt{\mathcal{P}_{\boldsymbol{\theta}}(\vb{s})\mathcal{P}_{\boldsymbol{\theta}'}(\vb{s})}
    e^{i\alpha_{\boldsymbol{\theta}';\boldsymbol{\theta}}(\vb{s})}
    =
    \E_{\vb{s}\sim \mathcal{P}_{\boldsymbol{\theta}}}
    \left[
    \sqrt{r_{\boldsymbol{\theta}';\boldsymbol{\theta}}(\vb{s})}
    e^{i\alpha_{\boldsymbol{\theta}';\boldsymbol{\theta}}(\vb{s})}
    \right].
\end{align}
Since $|\alpha_{\boldsymbol{\theta}';\boldsymbol{\theta}}(\vb{s})|\le\delta/2\le\pi/4$,
\begin{align}
    |\braket{\psi_{\boldsymbol{\theta}'}}{\psi_{\boldsymbol{\theta}}}|
    &\ge
    \Re\braket{\psi_{\boldsymbol{\theta}'}}{\psi_{\boldsymbol{\theta}}}\\
    &=
    \E_{\vb{s}\sim \mathcal{P}_{\boldsymbol{\theta}}}
    \left[
    \sqrt{r_{\boldsymbol{\theta}';\boldsymbol{\theta}}(\vb{s})}\cos\alpha_{\boldsymbol{\theta}';\boldsymbol{\theta}}(\vb{s})
    \right]\\
    &\ge
    \cos\pqty{\frac{\delta}{2}}
    \E_{\vb{s}\sim \mathcal{P}_{\boldsymbol{\theta}}}
    \left[
    \sqrt{r_{\boldsymbol{\theta}';\boldsymbol{\theta}}(\vb{s})}
    \right].
    \label{eq:phase_bound_overlap}
\end{align}
It remains to lower bound the amplitude factor. Set
$a_0=1-\epsilon$ and $b_0=1+\epsilon$. Since
$r_{\boldsymbol{\theta}';\boldsymbol{\theta}}(\vb{s})\in[a_0,b_0]$ and $\sqrt{x}$ is concave, it lies above its chord on
$[a_0,b_0]$:
\begin{equation}
    \sqrt{r_{\boldsymbol{\theta}';\boldsymbol{\theta}}(\vb{s})}
    \ge
    \frac{b_0-r_{\boldsymbol{\theta}';\boldsymbol{\theta}}(\vb{s})}{b_0-a_0}\sqrt{a_0}
    +
    \frac{r_{\boldsymbol{\theta}';\boldsymbol{\theta}}(\vb{s})-a_0}{b_0-a_0}\sqrt{b_0}.
\end{equation}
Taking expectations, using linearity of expectation and 
\begin{equation}
    \E_{\vb{s}\sim \mathcal{P}_{\boldsymbol{\theta}}}[r_{\boldsymbol{\theta}';\boldsymbol{\theta}}(\vb{s})]
    =
    \sum_{\vb{s}}\mathcal{P}_{\boldsymbol{\theta}'}(\vb{s})
    =
    1,
\end{equation}
gives
\begin{equation}
    \E_{\vb{s}\sim \mathcal{P}_{\boldsymbol{\theta}}}\bqty{\sqrt{r_{\boldsymbol{\theta}';\boldsymbol{\theta}}(\vb{s})}}
    \ge
    \frac{b_0-1}{b_0-a_0}\sqrt{a_0}
    +
    \frac{1-a_0}{b_0-a_0}\sqrt{b_0}
    =
    \frac{
    \sqrt{1-\epsilon}
    +
    \sqrt{1+\epsilon}
    }{2}.
    \label{eq:amplitude_bound_bhattacharyya}
\end{equation}
Combining Eqs.~\eqref{eq:phase_bound_overlap} and
\eqref{eq:amplitude_bound_bhattacharyya}, then squaring, gives
Eq.~\eqref{eq:infidelity_bound_epsilon_delta}. The compact form follows from
\[
    \left(
    \frac{\sqrt{1-\epsilon}+\sqrt{1+\epsilon}}{2}
    \right)^2
    =
    \frac{1+\sqrt{1-\epsilon^2}}{2}.
\]
The small-trust-region expansion follows from
$\cos^2\delta=1-\delta^2+O(\delta^4)$ and
$(1+\sqrt{1-\epsilon^2})/2=1-\epsilon^2/4+O(\epsilon^4)$.
\end{proof}

\subsection{PWO Surrogate Upper Bound and First-Order Consistency}
\label{app:pwo_surrogate_bounds}

We now relate the exact complex-amplitude decomposition to the practical PWO
surrogate. The exact decomposition contains the coefficients
$2\sqrt r\cos\alpha$ and $2\sqrt r\sin\alpha$, whereas PWO uses the simpler
first-order surrogates $r$ and $r \phi=2r\alpha$. The following lemma controls
the phase approximation error.
\\
\begin{lemma}[Quadratic error of the PWO phase coefficient]
\label{lem:phase-coefficient}
For every $t\ge0$ and $\alpha\in[-\pi,\pi]$,
\begin{equation}
    \left|
    2t\sin\alpha-2t^2\alpha
    \right|
    \le
    4\pi^2
    \left|
    te^{i\alpha}-1
    \right|^2.
    \label{eq:phase-coefficient-bound}
\end{equation}
\end{lemma}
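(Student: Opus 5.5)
The plan is to lower-bound the right-hand side of Eq.~\eqref{eq:phase-coefficient-bound} and then bound the left-hand side term by term. Throughout, write $D:=\left|te^{i\alpha}-1\right|^2$.

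\textbf{Step 1: lower bound on $D$.} Expanding the modulus gives
\begin{equation*}
D=(t-1)^2+2t(1-\cos\alpha)=(t-1)^2+4t\sin^2(\alpha/2).
\end{equation*}
For $|\alpha|\le\pi$ we have $|\alpha/2|\le\pi/2$, so Jordan's inequality $|\sin x|\ge 2|x|/\pi$ on $[-\pi/2,\pi/2]$ applies with $x=\alpha/2$. It gives $\sin^2(\alpha/2)\ge\alpha^2/\pi^2$. Both terms of $D$ are nonnegative, so we obtain the two separate controls
\begin{equation*}
D\ge (t-1)^2, \qquad D\ge \frac{4}{\pi^2}\,t\alpha^2 .
\end{equation*}

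\textbf{Step 2: split the left-hand side.} Write
\begin{equation*}
2t\sin\alpha-2t^2\alpha=2t(\sin\alpha-\alpha)-2t\alpha(t-1).
\end{equation*}
For the first term, $|\sin\alpha-\alpha|\le|\alpha|^3/6\le\pi\alpha^2/6$ on $[-\pi,\pi]$. Hence
\begin{equation*}
|2t(\sin\alpha-\alpha)|\le \tfrac{\pi}{3}\,t\alpha^2\le \tfrac{\pi^3}{12}\,D .
\end{equation*}

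\textbf{Step 3: the cross term.} The main obstacle is $2t|\alpha||t-1|$. It is a product of a phase error and an amplitude error, but it carries only one power of $t$, while Step 1 controls only $t\alpha^2$ and $(t-1)^2$. I would split into two cases.
\begin{itemize}
\item If $t\ge2$, then $t\le 2(t-1)$ and $|\alpha|\le\pi$, so $2t|\alpha||t-1|\le 4\pi(t-1)^2\le4\pi D$.
\item If $0\le t\le2$, apply AM--GM with $a=\sqrt t\,|\alpha|$ and $b=\sqrt2\,|t-1|$. Using $t\le2$, we get $t|\alpha||t-1|\le \sqrt2\,ab\le \tfrac{1}{\sqrt2}\bigl(t\alpha^2+2(t-1)^2\bigr)$. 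Step 1 then bounds this by $\tfrac{1}{\sqrt2}\bigl(\tfrac{\pi^2}{4}+2\bigr)D$.
\end{itemize}
The case $t=0$ is trivial, since the left-hand side vanishes.

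\textbf{Step 4: sum the constants.} In either case the total constant is at most
\begin{equation*}
\tfrac{\pi^3}{12}+\max\bigl\{4\pi,\ \sqrt2\,(\tfrac{\pi^2}{4}+2)\bigr\},
\end{equation*}
which is roughly $2.6+12.6\approx15$. This is well below $4\pi^2\approx39.5$, which completes the proof. The only delicate point is the case split in Step 3, which is needed to absorb the unmatched factor of $t$. The rest is bookkeeping of explicit constants, and there is ample slack.
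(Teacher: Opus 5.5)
Your proof is correct and follows essentially the same argument as the paper. Both use the lower bound $|te^{i\alpha}-1|^2\ge(t-1)^2+4t\alpha^2/\pi^2$, the same split into $2t(\sin\alpha-\alpha)$ plus the cross term $2t\alpha(1-t)$, and AM--GM on the cross term using $\sqrt t\le\sqrt2$. The only difference is bookkeeping: the paper uses three ranges of $t$ and bounds the whole left side crudely by $2t+2\pi t^2$ when $t\le 1/2$ or $t\ge 2$, whereas you apply the decomposition for every $t$ and split only the cross term at $t=2$, which is slightly more economical.
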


\begin{proof}
Set
\begin{equation}
    d(t,\alpha)
    :=
    |te^{i\alpha}-1|^2
    =
    (t-1)^2+2t(1-\cos\alpha).
\end{equation}
We consider three ranges of $t$. If $0\le t\le1/2$, then
$d(t,\alpha)\ge(1-t)^2\ge1/4$, while
\begin{equation}
    |2t\sin\alpha-2t^2\alpha|
    \le
    2t+2\pi t^2
    \le
    1+\frac{\pi}{2}
    \le
    4\pi^2 d(t,\alpha).
\end{equation}
If $t\ge2$, then $d(t,\alpha)\ge(t-1)^2\ge t^2/4$, and
\begin{equation}
    |2t\sin\alpha-2t^2\alpha|
    \le
    2t+2\pi t^2
    \le
    4\pi^2 d(t,\alpha).
\end{equation}
It remains to consider $1/2\le t\le2$. We write
\begin{align}
    2t\sin\alpha-2t^2\alpha
    =
    2t(\sin\alpha-\alpha)
    +
    2t\alpha(1-t),
\end{align}
and use the bounds, derived from the Taylor expansions,
$|\sin\alpha-\alpha|\le \alpha^2/2$ and
$1-\cos\alpha\ge2\alpha^2/\pi^2$ on $[-\pi,\pi]$. Thus
\begin{equation}
    d(t,\alpha)
    \ge
    (t-1)^2+\frac{4t\alpha^2}{\pi^2}.  
\end{equation}
The term $t\alpha^2$ is bounded by $(\pi^2/4)d(t,\alpha)$. For the mixed term,
let
$x=|1-t|$ and $y=2\sqrt t|\alpha|/\pi$. Then
$d(t,\alpha)\ge x^2+y^2$ and
\begin{equation}
    2t|\alpha||1-t|
    =
    \pi\sqrt t\,xy
    \le
    \frac{\pi\sqrt t}{2}(x^2+y^2)
    \le
    \frac{\pi\sqrt2}{2}d(t,\alpha).
\end{equation}
Combining these bounds gives
\begin{equation}
    |2t\sin\alpha-2t^2\alpha|
    \le
    \left(
    \frac{\pi^2}{4}
    +
    \frac{\pi\sqrt2}{2}
    \right)d(t,\alpha)
    \le
    4\pi^2d(t,\alpha).
\end{equation}
\end{proof}

Define the unclipped PWO surrogates
\begin{align}
    S_{\mathrm{mod}}(\boldsymbol{\theta}';\boldsymbol{\theta})
    &:=
    \E_{\vb{s}\sim\mathcal P_{\boldsymbol{\theta}}}
    \left[
    r_{\boldsymbol{\theta}';\boldsymbol{\theta}}(\vb{s})
    A_{\boldsymbol{\theta}}^{\mathrm{R}}(\vb{s})
    \right],
    \label{eq:unclipped-modulus-surrogate}
    \\
    S_{\mathrm{arg}}(\boldsymbol{\theta}';\boldsymbol{\theta})
    &:=
    \E_{\vb{s}\sim\mathcal P_{\boldsymbol{\theta}}}
    \left[
    r_{\boldsymbol{\theta}';\boldsymbol{\theta}}(\vb{s})
     \phi_{\boldsymbol{\theta}';\boldsymbol{\theta}}(\vb{s})
    A_{\boldsymbol{\theta}}^{\mathrm{I}}(\vb{s})
    \right].
    \label{eq:unclipped-phase-surrogate}
\end{align}

\begin{theorem}[PWO modulus--phase surrogate upper bound]
\label{thm:pwo-surrogate-bound}
Assume $A_{\boldsymbol{\theta}}^{\mathrm{R}}$ and $A_{\boldsymbol{\theta}}^{\mathrm{I}}$ are bounded. Then
\begin{equation}
    E[\psi_{\boldsymbol{\theta}'}]
    -E[\psi_{\boldsymbol{\theta}}]
    \le
    S_{\mathrm{mod}}(\boldsymbol{\theta}';\boldsymbol{\theta})
    +
    S_{\mathrm{arg}}(\boldsymbol{\theta}';\boldsymbol{\theta})+
    2C_{\boldsymbol{\theta}}
    \left(
    1-
    \sqrt{
    1-\mathcal I(
    \psi_{\boldsymbol{\theta}},
    \psi_{\boldsymbol{\theta}'}
    )
    }
    \right),
    \label{eq:pwo-surrogate-upper-bound}
\end{equation}
where one may take
\begin{equation}
    C_{\boldsymbol{\theta}}
    =
    \norm{\hat H-E[\psi_{\boldsymbol{\theta}}]\I}_{\infty}
    +
    \norm{A_{\boldsymbol{\theta}}^{\mathrm{R}}}_{\infty}
    +
    4\pi^2\norm{A_{\boldsymbol{\theta}}^{\mathrm{I}}}_{\infty}.
    \label{eq:pwo-bound-constant}
\end{equation}
In particular,
\begin{align}
    E[\psi_{\boldsymbol{\theta}'}]
    -
    E[\psi_{\boldsymbol{\theta}}]
    &\le
    S_{\mathrm{mod}}(\boldsymbol{\theta}';\boldsymbol{\theta})
    +
    S_{\mathrm{arg}}(\boldsymbol{\theta}';\boldsymbol{\theta})
    +
    2C_{\boldsymbol{\theta}}
    \mathcal I(
    \psi_{\boldsymbol{\theta}},
    \psi_{\boldsymbol{\theta}'}
    ).
    \label{eq:pwo-linear-infidelity-bound}
\end{align}
\end{theorem}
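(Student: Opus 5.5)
The plan is to start from the exact decomposition in Theorem~\ref{thm:exact-complex-decomposition}. I would work in the parallel-transport gauge of Eq.~\eqref{eq:parallel_transport_gauge}, so that the overlap is real and nonnegative. Write $t:=\sqrt{r_{\boldsymbol{\theta}';\boldsymbol{\theta}}(\vb{s})}$ and $\alpha:=\alpha_{\boldsymbol{\theta}';\boldsymbol{\theta}}(\vb{s})$, so that $z=te^{i\alpha}$ with $\alpha\in[-\pi,\pi]$. Theorem~\ref{thm:exact-complex-decomposition} then gives
\begin{equation*}
E[\psi_{\boldsymbol{\theta}'}]-E[\psi_{\boldsymbol{\theta}}]
=2\E\!\left[t\cos\alpha\,A^{\mathrm R}_{\boldsymbol{\theta}}\right]
+2\E\!\left[t\sin\alpha\,A^{\mathrm I}_{\boldsymbol{\theta}}\right]
+\mel{\eta}{\hat K_{\boldsymbol{\theta}}}{\eta},
\end{equation*}
with all expectations under $\mathcal P_{\boldsymbol{\theta}}$. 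From the proof of that theorem, $\mel{\eta}{\hat K_{\boldsymbol{\theta}}}{\eta}\le\norm{\hat K_{\boldsymbol{\theta}}}_\infty\norm{\eta}^2$ and $\norm{\eta}^2=2\bigl(1-\sqrt{1-\mathcal I}\bigr)$. The key additional observation is that
\begin{equation*}
\norm{\eta}^2=\sum_{\vb{s}}\abs{\psi_{\boldsymbol{\theta}'}(\vb{s})-\psi_{\boldsymbol{\theta}}(\vb{s})}^2=\E\!\left[\abs{te^{i\alpha}-1}^2\right].
\end{equation*}
Every error term below will be bounded by a bounded coefficient times $\abs{te^{i\alpha}-1}^2$, and taking expectations then produces the factor $2(1-\sqrt{1-\mathcal I})$.

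\textbf{Amplitude channel.} The pointwise identity $\abs{te^{i\alpha}-1}^2=t^2-2t\cos\alpha+1$ gives
\begin{equation*}
2t\cos\alpha=r+1-\abs{te^{i\alpha}-1}^2 .
\end{equation*}
Because $\E[A^{\mathrm R}_{\boldsymbol{\theta}}]=0$, the constant $1$ contributes nothing. Hence
\begin{equation*}
2\E[t\cos\alpha\,A^{\mathrm R}_{\boldsymbol{\theta}}]
= S_{\mathrm{mod}} - \E\!\left[\abs{te^{i\alpha}-1}^2A^{\mathrm R}_{\boldsymbol{\theta}}\right]
\le S_{\mathrm{mod}}+\norm{A^{\mathrm R}_{\boldsymbol{\theta}}}_\infty\,2\bigl(1-\sqrt{1-\mathcal I}\bigr).
\end{equation*}

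\textbf{Phase channel.} Since $\phi=2\alpha$, we have $r\phi A^{\mathrm I}_{\boldsymbol{\theta}}=2t^2\alpha A^{\mathrm I}_{\boldsymbol{\theta}}$. Lemma~\ref{lem:phase-coefficient} gives, pointwise,
\begin{equation*}
2t\sin\alpha\,A^{\mathrm I}_{\boldsymbol{\theta}}\le 2t^2\alpha\,A^{\mathrm I}_{\boldsymbol{\theta}}+4\pi^2\norm{A^{\mathrm I}_{\boldsymbol{\theta}}}_\infty\abs{te^{i\alpha}-1}^2 .
\end{equation*}
Taking expectations yields
\begin{equation*}
2\E[t\sin\alpha\,A^{\mathrm I}_{\boldsymbol{\theta}}]\le S_{\mathrm{arg}}+4\pi^2\norm{A^{\mathrm I}_{\boldsymbol{\theta}}}_\infty\,2\bigl(1-\sqrt{1-\mathcal I}\bigr).
\end{equation*}

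\textbf{Combining.} Summing the residual bound and the two channel bounds gives Eq.~\eqref{eq:pwo-surrogate-upper-bound} with the constant $C_{\boldsymbol{\theta}}$ of Eq.~\eqref{eq:pwo-bound-constant}. For Eq.~\eqref{eq:pwo-linear-infidelity-bound}, note that $\mathcal I\in[0,1]$ implies $\sqrt{1-\mathcal I}\ge 1-\mathcal I$, so $1-\sqrt{1-\mathcal I}\le\mathcal I$. Since $C_{\boldsymbol{\theta}}\ge0$, the linear bound follows.

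\textbf{Main obstacle.} The delicate point is bookkeeping of the gauge. The surrogates $S_{\mathrm{mod}}$ and $S_{\mathrm{arg}}$ must be evaluated with $\alpha$ measured relative to the phase-aligned $\psi_{\boldsymbol{\theta}'}$, taken as the wrapped value in $[-\pi,\pi]$. Only in that gauge is the identity $\E\abs{z-1}^2=2(1-\sqrt{1-\mathcal I})$ valid, and Lemma~\ref{lem:phase-coefficient} also needs $\alpha\in[-\pi,\pi]$. I would state explicitly that the surrogates are understood in this gauge, and check that wrapping does not alter $z=te^{i\alpha}$. The choice of global phase changes neither $r$ nor the energies, so this is consistent with the statement.
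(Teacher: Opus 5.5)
Your proposal is correct and follows the paper's proof essentially line for line. The three steps are the same: the identity $2\sqrt r\cos\alpha = r+1-\abs{z-1}^2$ together with $\E[A^{\mathrm R}_{\boldsymbol{\theta}}]=0$ for the amplitude channel, Lemma~\ref{lem:phase-coefficient} with $t=\sqrt r$ for the phase channel, and $\E\abs{z-1}^2=\norm{\eta}^2=2\bigl(1-\sqrt{1-\mathcal I}\bigr)$ in the gauge of Eq.~\eqref{eq:parallel_transport_gauge} to convert all three error terms, followed by $1-\sqrt{1-x}\le x$. Your remark that $S_{\mathrm{arg}}$ must be read with $\alpha$ measured in that phase-aligned gauge is a worthwhile clarification that the paper leaves implicit.
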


\begin{proof}
For readability, write
$r=r_{\boldsymbol{\theta}';\boldsymbol{\theta}}$,
$\alpha=\alpha_{\boldsymbol{\theta}';\boldsymbol{\theta}}$,
$ \phi=2\alpha$, and $z=\sqrt r e^{i\alpha}$. Since
\[
    |z-1|^2
    =
    1+r-2\sqrt r\cos\alpha,
\]
and
$\E_{\vb{s}\sim\mathcal P_{\boldsymbol{\theta}}}[A_{\boldsymbol{\theta}}^{\mathrm{R}}]=0$, we have
\begin{align}
    2\E_{\mathcal P_{\boldsymbol{\theta}}}
    [
    \sqrt r\cos\alpha\,A_{\boldsymbol{\theta}}^{\mathrm{R}}
    ]
    &=
    \E_{\mathcal P_{\boldsymbol{\theta}}}
    [
    rA_{\boldsymbol{\theta}}^{\mathrm{R}}
    ]
    -
    \E_{\mathcal P_{\boldsymbol{\theta}}}
    [
    |z-1|^2A_{\boldsymbol{\theta}}^{\mathrm{R}}
    ]  \\
    &\le
    S_{\mathrm{mod}}
    +
    \norm{A_{\boldsymbol{\theta}}^{\mathrm{R}}}_\infty
    \E_{\mathcal P_{\boldsymbol{\theta}}}
    [
    |z-1|^2
    ].
    \label{eq:modulus_surrogate_error}
\end{align}
For the phase term, apply Lemma~\ref{lem:phase-coefficient} with
$t=\sqrt r$:
\begin{align}
    2\E_{\mathcal P_{\boldsymbol{\theta}}}
    [
    \sqrt r\sin\alpha\,A_{\boldsymbol{\theta}}^{\mathrm{I}}
    ]
    &=
    \E_{\mathcal P_{\boldsymbol{\theta}}}
    [
    r \phi A_{\boldsymbol{\theta}}^{\mathrm{I}}
    ]
    +
    \E_{\mathcal P_{\boldsymbol{\theta}}}
    [
    (2\sqrt r\sin\alpha-2r\alpha)A_{\boldsymbol{\theta}}^{\mathrm{I}}
    ]  \\
    &\le
    S_{\mathrm{arg}}
    +
    4\pi^2
    \norm{A_{\boldsymbol{\theta}}^{\mathrm{I}}}_\infty
    \E_{\mathcal P_{\boldsymbol{\theta}}}
    [
    |z-1|^2
    ].
    \label{eq:phase_surrogate_error}
\end{align}
Finally,
\begin{align}
    \E_{\mathcal P_{\boldsymbol{\theta}}}
    [
    |z-1|^2
    ]
    &=
    \norm{
    \psi_{\boldsymbol{\theta}'}
    -
    \psi_{\boldsymbol{\theta}}
    }^2  \\
    &=
    2\left(
    1-
    \sqrt{
    1-\mathcal I(
    \psi_{\boldsymbol{\theta}},
    \psi_{\boldsymbol{\theta}'}
    )
    }
    \right),
\end{align}
where we used the global-phase convention
Eq.~\eqref{eq:parallel_transport_gauge}. Combining
Eqs.~\eqref{eq:exact-complex-upper-bound},
\eqref{eq:modulus_surrogate_error}, and
\eqref{eq:phase_surrogate_error} proves
Eq.~\eqref{eq:pwo-surrogate-upper-bound}. The linear-infidelity form follows
from $1-\sqrt{1-x}\le x$.
\end{proof}

\begin{theorem}[First-order consistency of the PWO surrogate]
\label{thm:pwo-first-order-consistency-main}
Assume that $\psi_{\boldsymbol{\theta}}$ is differentiable in a neighborhood of
the reference parameters. For the phase surrogate, detach the importance ratio
from the gradient computation:
\[
    S_{\mathrm{arg}}(\boldsymbol{\theta}';\boldsymbol{\theta})
    =
    \E_{\vb{s}\sim\mathcal P_{\boldsymbol{\theta}}}
    \left[
    \operatorname{sg}
    \left(
    r_{\boldsymbol{\theta}';\boldsymbol{\theta}}(\vb{s})
    \right)
     \phi_{\boldsymbol{\theta}';\boldsymbol{\theta}}(\vb{s})
    A_{\boldsymbol{\theta}}^{\mathrm{I}}(\vb{s})
    \right].
\]
Then
\begin{equation}
    \left.
    \grad_{\boldsymbol{\theta}'}
    \left[
    S_{\mathrm{mod}}(\boldsymbol{\theta}';\boldsymbol{\theta})
    +
    S_{\mathrm{arg}}(\boldsymbol{\theta}';\boldsymbol{\theta})
    \right]
    \right|_{\boldsymbol{\theta}'=\boldsymbol{\theta}}
    =
    \grad_{\boldsymbol{\theta}}
    E[\psi_{\boldsymbol{\theta}}].
    \label{eq:pwo-gradient-consistency}
\end{equation}
The same derivative is obtained from the clipped objectives at the reference
point, since $r=1$ and $ \phi=0$ lie in the interior of the clipping
intervals.
\end{theorem}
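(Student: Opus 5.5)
We compute the derivative of each surrogate at $\boldsymbol{\theta}'=\boldsymbol{\theta}$ and match it to the two terms of the decomposed VMC gradient in Eq.~\eqref{eq:complex-vmc-gradient}. The samples come from the fixed reference distribution $\mathcal P_{\boldsymbol{\theta}}$, and the advantages $A^{\mathrm R}_{\boldsymbol{\theta}},A^{\mathrm I}_{\boldsymbol{\theta}}$ are evaluated at the reference point. So only $r_{\boldsymbol{\theta}';\boldsymbol{\theta}}$ and $\phi_{\boldsymbol{\theta}';\boldsymbol{\theta}}$ depend on $\boldsymbol{\theta}'$, and differentiation can be taken inside the (finite) sum over configurations.

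\textbf{Modulus term.} We use $\grad_{\boldsymbol{\theta}'} r_{\boldsymbol{\theta}';\boldsymbol{\theta}}(\vb{s}) = r_{\boldsymbol{\theta}';\boldsymbol{\theta}}(\vb{s})\grad_{\boldsymbol{\theta}'}\log\mathcal P_{\boldsymbol{\theta}'}(\vb{s})$. At $\boldsymbol{\theta}'=\boldsymbol{\theta}$ we have $r=1$, which gives
\[
\grad_{\boldsymbol{\theta}'} S_{\mathrm{mod}}\big|_{\boldsymbol{\theta}'=\boldsymbol{\theta}}=\E_{\mathcal P_{\boldsymbol{\theta}}}\bqty{A^{\mathrm R}_{\boldsymbol{\theta}}\grad_{\boldsymbol{\theta}}\log\mathcal P_{\boldsymbol{\theta}}}.
\]
For a normalized (e.g.\ autoregressive) model, $\log\mathcal P_{\boldsymbol{\theta}}=2\log\abs{\psi_{\boldsymbol{\theta}}}$, so $\grad\log\mathcal P_{\boldsymbol{\theta}}=2O_R$. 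This equals the first term of Eq.~\eqref{eq:complex-vmc-gradient}, since $\Re\{E^{\mathrm{loc}}\}-E[\psi_{\boldsymbol{\theta}}]=A^{\mathrm R}_{\boldsymbol{\theta}}$ because the energy is real. If the model is unnormalized, I would instead note that $\grad\log\mathcal P_{\boldsymbol{\theta}}=2O_R-2\E[O_R]$. The extra constant drops out because $\E_{\mathcal P_{\boldsymbol{\theta}}}[A^{\mathrm R}_{\boldsymbol{\theta}}]=0$, the same baseline argument as in the proof of Proposition~\ref{prop:pg_nqs}.

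\textbf{Phase term.} The stop-gradient freezes $r$ at its value, which is $1$ at the reference point, so only $\grad_{\boldsymbol{\theta}'}\phi_{\boldsymbol{\theta}';\boldsymbol{\theta}}$ contributes. We then show that $\phi=2\alpha$ is differentiable at $\boldsymbol{\theta}'=\boldsymbol{\theta}$ with gradient $2\grad\arg\psi_{\boldsymbol{\theta}}=2O_I$. The raw phase difference $\arg\psi_{\boldsymbol{\theta}'}-\arg\psi_{\boldsymbol{\theta}}$ is $0 \bmod 2\pi$ at the reference point. Near $\boldsymbol{\theta}$, the function $\atanTwo(\sin x,\cos x)$ coincides with the continuous branch of $x$ lifted to $(-\pi,\pi)$, hence has unit derivative there. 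Combining this with the chain rule and the identity $\Im\grad\log\psi=\grad\arg\psi$ gives
\[
\grad S_{\mathrm{arg}}\big|_{\boldsymbol{\theta}'=\boldsymbol{\theta}} = 2\E_{\mathcal P_{\boldsymbol{\theta}}}\bqty{A^{\mathrm I}_{\boldsymbol{\theta}}O_I}.
\]
Because $\E[E_I]=0$, we have $A^{\mathrm I}_{\boldsymbol{\theta}}=E_I$, so this is exactly the second term of Eq.~\eqref{eq:complex-vmc-gradient}. Summing the two terms yields Eq.~\eqref{eq:pwo-gradient-consistency}.

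\textbf{Clipped objectives.} These follow by a local argument. At the reference point, $r=1\in(1-\epsilon,1+\epsilon)$ and $\phi=0\in(-\delta,\delta)$. By continuity of $r$ and $\phi$ in $\boldsymbol{\theta}'$ (and finiteness of the configuration space), there is a neighborhood in which clip acts as the identity for every $\vb{s}$. In that neighborhood both arguments of each $\max$ coincide, so the clipped and unclipped surrogates agree as functions and have the same gradient. This also yields Theorem~\ref{thm:pwo-gradient-consistency-main}.

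\textbf{Main obstacle.} I expect the main obstacle to be the phase step: justifying differentiability of the wrapped $\atanTwo$ at $\alpha=0$, and making sure that common support (nonvanishing $\psi_{\boldsymbol{\theta}}(\vb{s})$) makes $\arg\psi$ differentiable. I would handle both by working with $\Im\log(\psi_{\boldsymbol{\theta}'}/\psi_{\boldsymbol{\theta}})$ via the principal branch of the logarithm near $1$. A secondary bookkeeping point is the factor-of-two and normalization conventions between $\log\mathcal P$ and $\log\abs{\psi}$, which the zero-mean advantage resolves.
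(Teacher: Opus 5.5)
Your proposal is correct and follows the same route as the paper's proof. You differentiate $r$ and $\phi$ at $\boldsymbol{\theta}'=\boldsymbol{\theta}$ to obtain $\grad\log\mathcal P_{\boldsymbol{\theta}}=2\grad\log\abs{\psi_{\boldsymbol{\theta}}}$ and $2\grad\arg\psi_{\boldsymbol{\theta}}$, match the resulting expectations to the $A^{\mathrm R}$ and $A^{\mathrm I}$ terms of Eq.~\eqref{eq:complex-vmc-gradient}, and note that clipping is inactive at an interior point; your extra care (the zero-mean-advantage argument for an unnormalized $\mathcal P_{\boldsymbol{\theta}}$, the branch check for the wrapped phase, and the uniform neighborhood over the finite configuration space where the clipped and unclipped surrogates coincide) fills in details the paper's short proof leaves implicit.
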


\begin{proof}
At $\boldsymbol{\theta}'=\boldsymbol{\theta}$,
\begin{align}
    \left.
    \grad_{\boldsymbol{\theta}'}
    r_{\boldsymbol{\theta}';\boldsymbol{\theta}}(\vb{s})
    \right|_{\boldsymbol{\theta}'=\boldsymbol{\theta}}
    &=
    \grad_{\boldsymbol{\theta}}
    \log\mathcal P_{\boldsymbol{\theta}}(\vb{s})
    =
    2\grad_{\boldsymbol{\theta}}
    \log|\psi_{\boldsymbol{\theta}}(\vb{s})|,
    \\
    \left.
    \grad_{\boldsymbol{\theta}'}
     \phi_{\boldsymbol{\theta}';\boldsymbol{\theta}}(\vb{s})
    \right|_{\boldsymbol{\theta}'=\boldsymbol{\theta}}
    &=
    2\grad_{\boldsymbol{\theta}}
    \arg\psi_{\boldsymbol{\theta}}(\vb{s}).
\end{align}
Therefore the gradient of $S_{\mathrm{mod}}$ recovers the
$2A_{\boldsymbol{\theta}}^{\mathrm{R}}\grad\log|\psi_{\boldsymbol{\theta}}|$ term in
Eq.~\eqref{eq:complex-vmc-gradient}, while the detached phase surrogate
recovers the
$2A_{\boldsymbol{\theta}}^{\mathrm{I}}\grad\arg\psi_{\boldsymbol{\theta}}$ term. Thus the sum of the
two surrogate gradients equals the VMC gradient.
\end{proof}

\firstorderconsistency*

\label{app::pwo-first-order-consistency-main-app}
\begin{proof}
    Since clipping is inactive to
first order at $r=1$ and $ \phi=0$, the clipped objectives have the same
derivative at the reference point and the proof of Theorem \ref{thm:pwo-first-order-consistency-main} still applies.
\end{proof}

\subsection{Clipped PWO Improvement Certificate}
\label{app:pwo_clipped_improvement_certificate}

We finally combine the PWO surrogate bound with the amplitude--phase
infidelity control. This gives the deterministic population-level certificate
corresponding to the clipped PWO objective.

Define the clipped losses
\begin{align}
    L_{\mathrm{mod}}^{\mathrm{clip}}
    (\boldsymbol{\theta}';\boldsymbol{\theta})
    &:=
    \E_{\mathcal P_{\boldsymbol{\theta}}}
    \left[
    \max
    \left(
    rA_{\boldsymbol{\theta}}^{\mathrm{R}},
    \operatorname{clip}(r,1-\epsilon,1+\epsilon)A_{\boldsymbol{\theta}}^{\mathrm{R}}
    \right)
    \right],
    \\
    L_{\mathrm{arg}}^{\mathrm{clip}}
    (\boldsymbol{\theta}';\boldsymbol{\theta})
    &:=
    \E_{\mathcal P_{\boldsymbol{\theta}}}
    \left[
    \operatorname{sg}(r)
    \max
    \left(
     \phi A_{\boldsymbol{\theta}}^{\mathrm{I}},
    \operatorname{clip}( \phi,-\delta,\delta)A_{\boldsymbol{\theta}}^{\mathrm{I}}
    \right)
    \right],
\end{align}
where, as above,
$r=r_{\boldsymbol{\theta}';\boldsymbol{\theta}}$ and
$ \phi= \phi_{\boldsymbol{\theta}';\boldsymbol{\theta}}=2\alpha$.
\\
\pwoimprovement*

\begin{proof}[Proof of Corollary~\ref{cor:pwo-clipped-improvement}]
The clipped energy bound first follows from the linear-infidelity bound~\eqref{eq:pwo-linear-infidelity-bound} and the pointwise inequalities
\begin{equation}
L^{\mathrm{clip}}_{\mathrm{mod}}(\boldsymbol{\theta})
\geq
S_{\mathrm{mod}}(\boldsymbol{\theta};\boldsymbol{\theta}_{\mathrm{old}}),
\qquad
L^{\mathrm{clip}}_{\mathrm{arg}}(\boldsymbol{\theta})
\geq
S_{\mathrm{arg}}(\boldsymbol{\theta};\boldsymbol{\theta}_{\mathrm{old}}),
\end{equation}
which hold because each clipped objective takes the maximum of the unclipped term and its clipped counterpart, and $\operatorname{sg}(r)$ has the same forward value as $r$. Thus
\begin{equation}
E[\psi_{\boldsymbol{\theta}}]-E[\psi_{\boldsymbol{\theta}_{\mathrm{old}}}]
\leq
L^{\mathrm{clip}}_{\mathrm{mod}}(\boldsymbol{\theta})
+
L^{\mathrm{clip}}_{\mathrm{arg}}(\boldsymbol{\theta})
+
2C_{\boldsymbol{\theta}_{\mathrm{old}}}
\mathcal I(\psi_{\boldsymbol{\theta}_{\mathrm{old}}},\psi_{\boldsymbol{\theta}}).
\end{equation}
By Lemma~\ref{lem:amplitude_phase_controls_infidelity}, the assumptions $r_{\boldsymbol{\theta}}\in[1-\epsilon,1+\epsilon]$ and $| \phi_{\boldsymbol{\theta}}|\leq\delta$ imply
\begin{equation}
\mathcal I(\psi_{\boldsymbol{\theta}_{\mathrm{old}}},\psi_{\boldsymbol{\theta}})
\leq
1-
\frac{1+\sqrt{1-\epsilon^2}}{2}
\cos^2\!\left(\frac{\delta}{2}\right),
\end{equation}
because the wrapped phase itself is $ \phi_{\boldsymbol{\theta}}/2$. Substituting this estimate into the previous display gives Eq.~\eqref{eq:pwo_main_bound}.
\end{proof}

PWO only encourages, but does not guarantee, the global trust-region conditions required by the certificate. First, the initial step is always taken from the unclipped objective, since clipping is inactive at $(\boldsymbol{\theta}=\boldsymbol{\theta}_{\mathrm{old}})$. Second, even after clipping activates on a sampled configuration, an update driven by other unclipped configurations can still change its amplitude or phase because all configurations share parameters. Thus, clipping controls the empirical surrogate on the sampled batch, but does not impose pointwise bounds globally. The certificate should therefore be read as a conditional guarantee for realized updates that satisfy the amplitude and phase trust regions over the full support.

\newpage

\section{Additional Sections}
\subsection{Numerical Cost of Stochastic Reconfiguration and Improvements}
\label{app:sr_analysis}

A significant bottleneck arises when using the original SR formulation for optimization. The vanilla implementation involves inverting a $P \times P$ matrix, where $P$ denotes the number of parameters. Alternatively, \cite{Chen_2024} showed that this inversion problem can be recast into one that inverts a matrix of size $M \times M$, where $M$ is the number of samples used per iteration in the Monte Carlo estimation -- this goes by the name of minSR and builds upon ideas from the machine learning optimization literature \citep{ren2019efficient}.
The required operations for minSR are $O(M^2P) + O(M^3)$ instead of $O(P^3)$, and the memory usage is only $O(MP)$ instead of $O(P^2)$. It was used to train a Vision Transformer with around $3 \times 10^5$ parameters, giving the lowest energy so far for the difficult benchmark of the Majumdar–Ghosh point of the $10 \times 10$ J1-J2 Heisenberg system~\citep{rende2024simple-f44}. 

Despite the improvements brought by minSR, several implementations of NQS optimization, especially for electronic structure problems, employ KFAC \citep{martens2015optimizing} instead of SR. The $O(M^3)$ cost can still be prohibitive when training networks for problems that require a large number of samples; KFAC provides a practical compromise: it avoids explicit matrix inversion through block-diagonal approximation, offering scalability at the potential cost of missing information about the geometry of the Hilbert space.  KFAC is commonly used for applications to fermionic systems \citep{choo2020fermionic}, where NQS achieved remarkable results, including architectures like FermiNet \citep{pfau2020ab}, SchNet \citep{schutt2018schnet}, PauliNet \citep{hermann2020deep}, and more recent transformer-based approaches \citep{glehn2023a}.

Recent advances have aimed to improve upon both approaches. These include SPRING, which combines minSR and KFAC to avoid explicit inversion \citep{goldshlager2024kaczmarz}, methods that exploit block-diagonal structure in the quantum geometric tensor to improve conditioning and scalability \citep{nys2024ab, shokry2025less}, and variational Lanczos techniques that accelerate convergence by extracting information across multiple eigenstates \citep{wang2026generalized}. For a more detailed analysis of improvements in second-order optimization of NQS, the reader can refer to \cite{drissi2024second}.

\subsection{Relative Speed of PWO per Iteration}
\label{app:pwo_speed}

\begin{figure}[h]
    \centering

    \begin{subfigure}[t]{0.48\textwidth}
        \centering
        \includegraphics[width=\textwidth]{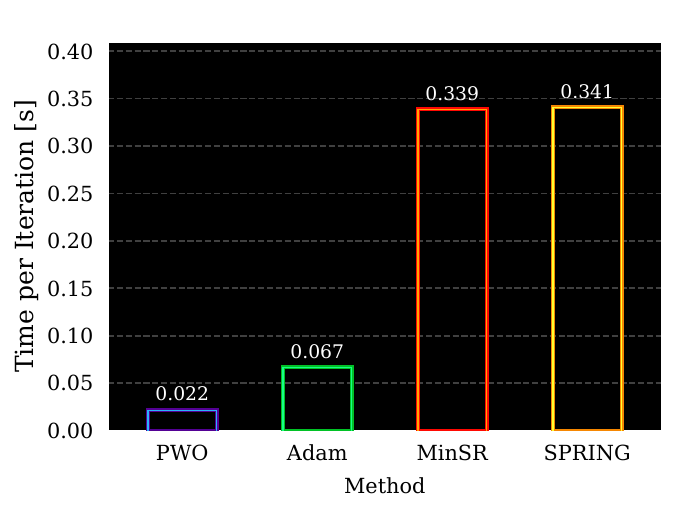}
        \caption{Wall-clock time per optimizer iteration.}
        \label{fig:time_per_iteration_histogram}
    \end{subfigure}
    \hfill
    \begin{subfigure}[t]{0.48\textwidth}
        \centering
        \includegraphics[width=\textwidth]{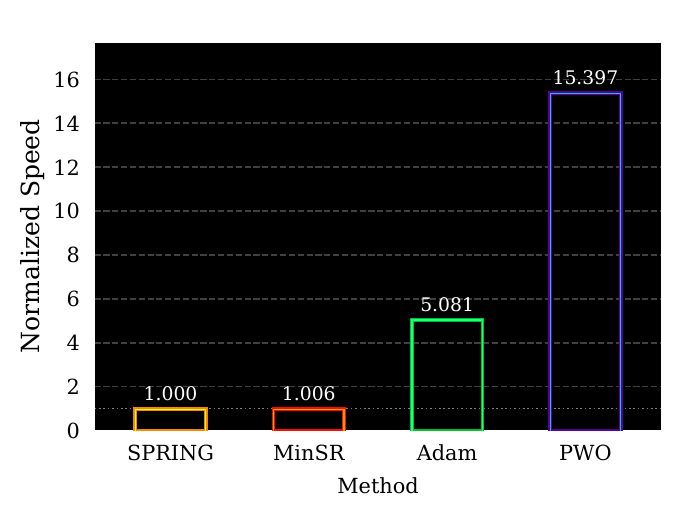}
        \caption{Normalized iteration speed relative to SPRING.}
        \label{fig:normalized_speed_histogram}
    \end{subfigure}

    \caption{
        Per-iteration computational cost and normalized iteration speed for PWO and the baseline optimizers on the Heisenberg $J_1$--$J_2$ chain with the default architecture \ref{app:architecture}. 
        Both measurements are obtained on a single NVIDIA L40S GPU. 
        Normalized speed is reported as the number of optimizer iterations per unit time relative to SPRING, so higher is faster.
    }
    \label{fig:pwo_iteration_speed}
\end{figure}

Beyond convergence in wall-clock time, it is useful to isolate the computational cost of each optimization update. Figure~\ref{fig:normalized_speed_histogram} compares the normalized iteration speed (optimization steps per unit of time) of PWO against the baselines. The speed advantage over minSR is expected: PWO is a first-order method and does not require constructing or inverting the stochastic-reconfiguration matrix. In contrast, minSR replaces the original $P\times P$ SR solve by an $M\times M$ solve, where $M$ is the number of samples, but it still requires expensive Jacobian contractions and a matrix inversion or linear solve. PWO avoids this curvature computation entirely.

The comparison with Adam is more subtle. A single PWO update has additional bookkeeping relative to Adam, including probability ratios, clipping terms, and wrapped phase increments. However, in VMC the dominant cost is often not the optimizer algebra itself, but sampling configurations and evaluating the corresponding local energies. PWO computes local energies and advantages once for a sampled batch and then reuses that batch across multiple proximal inner epochs. Adam, in contrast, performs a single update per sampled batch, so the local-energy overhead is paid again for each parameter update. Thus, PWO can amortize the expensive energy computation across several optimization steps, making its effective per-update cost competitive with Adam despite the more structured surrogate objective.

\subsection{Why RWKV?}
\label{app:why_rwkv}

RWKV is a particularly natural architecture for large-scale autoregressive NQS because it combines the expressivity of sequence models with recurrent inference. Unlike Transformer-based autoregressive models, RWKV does not require a growing key-value cache, so its per-token inference cost and memory footprint remain constant in the sequence length \citep{peng2025rwkv7gooseexpressivedynamic}. This is well aligned with autoregressive VMC, where spin configurations are generated sequentially and model evaluations are repeatedly invoked during Monte Carlo optimization. Recurrent architectures have also been successful in autoregressive NQS more broadly, where they provide exact independent sampling from the Born distribution while avoiding Markov-chain mixing issues \citep{hibat2020recurrent, merali2026parallelscanrecurrentneural}. Finally, recent hyperscale RWKV fine-tuning work provides a practical implementation foundation for adapting billion-parameter recurrent models to nonstandard optimization objectives \citep{sarkar2026evolutionstrategieshyperscale}. For these reasons, RWKV is a useful stress test for whether PWO can scale beyond conventional NQS architectures.

\subsection{Limitations of PWO}
\label{app:limitations}

Although PWO is supported by a first-order consistency result and a clipped improvement bound, the theory remains local and conservative: it relies on common-support and boundedness assumptions and does not imply global convergence for neural-network training. Empirically, our results include one-dimensional chains, a frustrated two-dimensional square lattice, and a large-scale RWKV fine-tuning experiment, but they are still not a comprehensive NQS benchmark across Hamiltonian families. In particular, broader validation on larger two-dimensional systems, fermionic models, and electronic-structure problems remains open. Finally, PWO introduces additional optimization hyperparameters, such as amplitude and phase clipping thresholds and the number of inner epochs, whose optimal values may depend on the Hamiltonian, ansatz, system size, and sample budget.

\newpage
\section{Experimental Details}
\label{app:experimental_details}

\subsection{Hyperparameter search}
\label{app:hyperparameter_search}
For each method, we perform a small hyperparameter search over the learning rate using a grid sweep and select the configuration with the best validation performance. For PWO, we additionally tune the clipping parameters that control the amplitude and phase surrogate objectives. Specifically, after selecting an initial learning rate, we independently grid search each clipping parameter while holding the best values found so far fixed. This sequential procedure provides a simple and computationally tractable way to tune the additional PWO-specific hyperparameters without requiring an exhaustive joint sweep over all combinations.

For Adam-based methods, including PWO, we use a cosine one-cycle learning-rate schedule \citep{smith2018superconvergencefasttrainingneural}. The learning rate is first increased to a peak value during a short warmup phase and is then annealed smoothly to a small final value by cosine decay. The peak learning rate is selected by the grid search above, while the other schedule parameters are fixed across runs. For PWO, which performs multiple optimization epochs per sampled batch, we scale the transition horizon by the number of PPO epochs so that the schedule evolves on a comparable outer-iteration timescale. For minSR and SPRING, a constant learning rate performed better than scheduled variants.

Because the Heisenberg and J1--J2 chains are more challenging optimization problems than the transverse-field Ising model, we use longer learning-rate schedules for these settings (compare Appendices \ref{app:ising_hyperparams}-\ref{app:j1j2_hyperparams}). In particular, we allocate more transition steps to the Heisenberg chain than to Ising, and more transition steps to J1--J2 than to Heisenberg. This gives the optimizer a longer annealing horizon on the harder Hamiltonians, while keeping the schedule structure fixed across problems. To improve readability, we report the method-specific hyper-parameters for each Hamiltonian separately. The Ising configuration is shown first, followed by the Heisenberg and J1--J2 settings in the subsequent subsections.

\subsection{Neural Quantum State Architecture}
\label{app:architecture}

Across experiments, we use the same autoregressive recurrent neural network for PWO and minSR, ensuring that performance differences arise from the optimizer rather than the parametrization. Given a spin configuration $\vb{s}\in\{\pm1\}^N$, we map spins to binary tokens, prepend a beginning-of-sequence token, and embed the resulting sequence with learned token embeddings of dimension $32$ and learned positional embeddings. The embedded inputs are projected to dimension $256$, followed by a $\tanh$ nonlinearity and layer normalization. The backbone consists of GRU layers with residual connections between recurrent blocks. From the shared recurrent representation, we use two separate two-layer MLP heads and GELU activations: an amplitude head, which outputs conditional log-probabilities via a log-softmax, and a phase head, which outputs bounded phases through a $\tanh$ nonlinearity scaled by $\pi$. The model has 1.4M parameters. Unless otherwise stated, all methods and Hamiltonians use the same architecture. The shared hyperparameters are summarized in Table~\ref{tab:architecture_hyperparams}.

\begin{table}[h]
\centering
\small
\begin{tabular}{lc}
\toprule
\textbf{Architecture hyperparameter} & \textbf{Value} \\
\midrule
Token embedding dimension & $32$ \\
Site embeddings & Learned \\
Backbone input dimension & $256$ \\
Backbone nonlinearity & $\tanh$ \\
Backbone normalization & Layer normalization \\
Recurrent backbone & GRU \\
Number of GRU layers & $3$ \\
GRU hidden dimension & $256$ \\
Amplitude head & 2-layer MLP \\
Phase head & 2-layer MLP \\
Head hidden dimension & $256$ \\
Head activation & GELU \\
Phase output & $\pi \tanh(\cdot)$ \\
Phase scale & $\pi$ \\
\bottomrule
\end{tabular}
\vspace{0.5em}
\caption{
Shared NQS architecture used across all Hamiltonians and optimization methods. The same autoregressive GRU-based parameterization is used for Adam, PWO, minSR, and SPRING.
}
\label{tab:architecture_hyperparams}
\end{table}

\subsection{Ising Model}\label{app:ising_hyperparams}

\begin{table}[h]
\centering
\small
\label{tab:ising_hyperparams}
\begin{adjustbox}{max width=\linewidth}
\begin{tabular}{lcccc}
\toprule
\textbf{Hyperparameter} & \textbf{Adam} & \textbf{PWO} & \textbf{minSR} & \textbf{SPRING} \\
\midrule
Optimizer / method & Adam & Adam + PWO & minSR & SPRING \\
Learning rate & $10^{-5}$ & $10^{-5}$ & $10^{-2}$ & $10^{-2}$ \\
Peak learning rate & $10^{-4}$ & $10^{-4}$ & -- & -- \\
Transition steps & $5{,}000$ & $20{,}000$ & -- & -- \\
PPO epochs & -- & $4$ & -- & -- \\
PPO clip $\epsilon$ & -- & $10^{-3}$ & -- & -- \\
Advantage normalization & -- & Yes & -- & -- \\
\midrule
Phase loss & -- & $\Delta\phi$ clip & -- & -- \\
Phase coefficient & -- & $1.0$ & -- & -- \\
Phase clip & -- & $0.3$ & -- & -- \\
Center imaginary advantage & -- & Yes & -- & -- \\
Normalize imaginary advantage & -- & Yes & -- & -- \\
Phase Jacobian baseline & -- & Yes & -- & -- \\
\midrule
SR diagonal shift & -- & -- & $10^{-2}$ & $10^{-2}$ \\
NTK / minSR mode & -- & -- & Yes & Yes \\
On-the-fly SR & -- & -- & Yes & Yes \\
SPRING momentum & -- & -- & -- & 0.8 \\
\bottomrule
\end{tabular}
\end{adjustbox}
\vspace{1em}
\caption{
Hyperparameters used for the Ising experiments. All methods use $N=12$, periodic boundary conditions, $J=1$, $h=1$, complex-valued neural quantum states, $1024$ training samples, exact diagonalization for evaluation, and evaluation every $200$ iterations. Adam and PWO use the cosine one-cycle schedule.
}
\end{table}

\subsection{Heisenberg Chain}\label{app:heisenberg_hyperparams}

\begin{table}[h]
\centering
\small
\begin{adjustbox}{max width=\linewidth}
\begin{tabular}{lcccc}
\toprule
\textbf{Hyperparameter} & \textbf{Adam} & \textbf{PWO} & \textbf{minSR} & \textbf{SPRING} \\
\midrule
Optimizer / method & Adam & Adam + PWO & minSR & SPRING \\
LR parameter / constant LR & $10^{-5}$ & $10^{-5}$ & $10^{-3}$ & $10^{-3}$ \\
Peak learning rate & $3\times 10^{-4}$ & $10^{-4}$ & -- & -- \\
Transition steps & $10{,}000$ & $40{,}000$ & -- & -- \\
PPO epochs & -- & $4$ & -- & -- \\
PPO clip $\epsilon$ & -- & $10^{-3}$ & -- & -- \\
Advantage normalization & -- & Yes & -- & -- \\
\midrule
Phase loss & -- & $\Delta\phi$ clip & -- & -- \\
Phase coefficient & -- & $1.0$ & -- & -- \\
Phase clip & -- & $0.3$ & -- & -- \\
Center imaginary advantage & -- & Yes & -- & -- \\
Normalize imaginary advantage & -- & Yes & -- & -- \\
Phase Jacobian baseline & -- & Yes & -- & -- \\
\midrule
SR diagonal shift & -- & -- & $10^{-2}$ & $10^{-2}$ \\
NTK / minSR mode & -- & -- & Yes & Yes \\
On-the-fly SR & -- & -- & Yes & Yes \\
SPRING momentum & -- & -- & -- & $0.8$ \\
\bottomrule
\end{tabular}
\end{adjustbox}
\vspace{0.5em}
\caption{
Hyperparameters for the Heisenberg-chain experiments. All methods use $N=12$, periodic boundary conditions, coupling $J=0.25$, no sign rule, complex-valued neural quantum states, $1024$ training samples, exact evaluation, and evaluation every $200$ iterations. Adam and PWO use a cosine one-cycle learning-rate schedule; minSR and SPRING use a constant learning rate.
}
\label{tab:heisenberg_hyperparams}
\end{table}

\newpage

\subsection{Heisenberg $J_1$--$J_2$ Chain}\label{app:j1j2_hyperparams}

\begin{table}[h]
\centering
\small
\begin{adjustbox}{max width=\linewidth}
\begin{tabular}{lcccc}
\toprule
\textbf{Hyperparameter} & \textbf{Adam} & \textbf{PWO} & \textbf{minSR} & \textbf{SPRING} \\
\midrule
Optimizer / method & Adam & Adam + PWO & minSR & SPRING \\
LR parameter / constant LR & $10^{-5}$ & $10^{-5}$ & $10^{-3}$ & $10^{-3}$ \\
Peak learning rate & $3\times 10^{-4}$ & $10^{-4}$ & -- & -- \\
Transition steps & ${1}0{,}000$ & $40{,}000$ & -- & -- \\
PPO epochs & -- & $4$ & -- & -- \\
PPO clip $\epsilon$ & -- & $10^{-3}$ & -- & -- \\
Advantage normalization & -- & Yes & -- & -- \\
\midrule
Phase loss & -- & $\Delta\phi$ clip & -- & -- \\
Phase coefficient & -- & $1.0$ & -- & -- \\
Phase clip & -- & $0.3$ & -- & -- \\
Center imaginary advantage & -- & Yes & -- & -- \\
Normalize imaginary advantage & -- & Yes & -- & -- \\
Phase Jacobian baseline & -- & Yes & -- & -- \\
\midrule
SR diagonal shift & -- & -- & $10^{-2}$ & $10^{-2}$ \\
NTK / minSR mode & -- & -- & Yes & Yes \\
On-the-fly SR & -- & -- & Yes & Yes \\
SPRING momentum & -- & -- & -- & $0.8$ \\
\bottomrule
\end{tabular}
\end{adjustbox}
\vspace{0.5em}
\caption{
Hyperparameters for the frustrated Heisenberg $J_1$--$J_2$ experiments. All methods use $N=12$, periodic boundary conditions, couplings $J_1=1$ and $J_2=0.5$, no sign rule, complex-valued neural quantum states, $1024$ training samples, exact evaluation, and evaluation every $200$ iterations. Adam and PWO use a cosine one-cycle learning-rate schedule; minSR and SPRING use a constant learning rate.
}
\label{tab:j1j2_hyperparams}
\end{table}

\subsection{Two-dimensional $J_1$--$J_2$ square-lattice experiment}
\label{app:2d-j1j2}

Using the same method hyper-parameters as above, we evaluate PWO on the frustrated spin-$1/2$ $J_1$--$J_2$ Heisenberg model on a two-dimensional square lattice,
\begin{align}
    \hat H_{J_1\text{--}J_2}^{\mathrm{2D}}
    =
    J_1 \sum_{\langle i,j\rangle} \hat{\vb S}_i \cdot \hat{\vb S}_j
    +
    J_2 \sum_{\langle\langle i,j\rangle\rangle} \hat{\vb S}_i \cdot \hat{\vb S}_j ,
\end{align}
where $\langle i,j\rangle$ and $\langle\langle i,j\rangle\rangle$ denote nearest- and next-nearest-neighbor pairs, respectively. We use an $L\times L$ square lattice with $L=10$, periodic boundary conditions, $J_1=1$, and $J_2=0.5$. All runs are restricted to the zero-magnetization sector, $S^z_{\mathrm{tot}}=0$, by masking infeasible autoregressive choices during sampling and evaluation.

For this experiment we use a complex-valued patch-autoregressive transformer designed for two-dimensional lattices. Instead of generating spins one at a time, the model partitions the $10\times 10$ lattice into non-overlapping $2\times 2$ patches. Each patch is represented as a categorical token with vocabulary size $2^4=16$, so the full configuration is generated as a sequence of
\begin{align}
    T = \left(\frac{L}{2}\right)^2 = 25
\end{align}
autoregressive tokens. The wavefunction is factorized over patch tokens as
\begin{align}
    \log \psi_\theta(\vb s)
    =
    \sum_{t=1}^{T}
    \left[
        \frac{1}{2}\log p_\theta(a_t \mid a_{<t})
        +
        i\,\phi_\theta(a_t \mid a_{<t})
    \right],
\end{align}
where the factor $1/2$ corresponds to the Born-rule convention $P_\theta(\vb s)=|\psi_\theta(\vb s)|^2$.

The model prepends a beginning-of-sequence token and embeds patch tokens with learned token embeddings of dimension $64$. We also use learned site embeddings and prefix-count features that encode the partial magnetization constraint. These embeddings are projected to width $96$, normalized with layer normalization, and passed through an $8$-layer causal transformer backbone. Each transformer block uses $6$ attention heads, two-dimensional axial RoPE with base $100$, residual connections, and a feedforward width of $4\times 96=384$.

From the final transformer representation, the model uses separate amplitude and phase heads. Both heads are two-layer MLPs with hidden dimension $192$ and GELU activations. The amplitude head outputs masked conditional log-probabilities through a log-softmax, ensuring exact normalization over feasible patch choices. The phase head outputs centered phase increments using a $\pi\tanh(\cdot)$ parameterization, with small initialization scale $10^{-3}$. This gives an exactly sampleable complex autoregressive neural quantum state adapted to the two-dimensional square lattice.

While the aim of this experiment is to compare optimizers and not to achieve the state-of-the-art variational energy (around -199.0536 \citep{rende2024simple-f44}), we note that the energy of the PWO-trained autoregressive transformer of 1.5M parameters is $-185$ after 30 mins, reaches $-195.6$ after 24 hours, and keeps decreasing. The run is on a single GPU and imposes no symmetries, except for zero-magnetization sampling.

\begin{table}[h]
\centering
\small
\begin{adjustbox}{max width=\linewidth}
\begin{tabular}{lc}
\toprule
\textbf{Hyperparameter} & \textbf{Value} \\
\midrule
Lattice size & $10\times 10$ \\
Number of spins & $100$ \\
Boundary conditions & Periodic \\
Hamiltonian couplings & $J_1=1,\ J_2=0.5$ \\
Magnetization sector & $S^z_{\mathrm{tot}}=0$ \\
\midrule
Patch size & $2\times 2$ \\
Patch vocabulary size & $16$ \\
Autoregressive tokens & $25$ \\
Token embedding dimension & $64$ \\
Transformer width & $96$ \\
Transformer depth & $8$ \\
Attention heads & $6$ \\
Transformer MLP hidden dimension & $384$ \\
RoPE type & 2D axial RoPE \\
RoPE base & $100$ \\
\midrule
Amplitude head & 2-layer MLP \\
Phase head & 2-layer MLP \\
Head hidden dimension & $192$ \\
Phase parameterization & $\pi\tanh(\cdot)$ \\
Phase initialization std. & $10^{-3}$ \\
Prefix-count features & Yes \\
Learned site embeddings & Yes \\
\bottomrule
\end{tabular}
\end{adjustbox}
\vspace{0.5em}
\caption{
Architecture used for the two-dimensional frustrated $J_1$--$J_2$ square-lattice experiment. The model is a complex patch-autoregressive transformer over $2\times 2$ spin patches, with two-dimensional axial RoPE and an exact zero-magnetization constraint enforced through autoregressive masking.
}
\label{tab:2d_j1j2_architecture}
\end{table}

\subsection{Scaling Experiment}
\label{app:scaling_experiment}

To study how performance scales with model capacity, we train three model sizes on the frustrated Heisenberg $J_1$--$J_2$ chain ($N=12$, $J_1=1$, $J_2=0.5$, periodic boundary conditions) using PWO, Adam, and minSR. All three sizes share the same autoregressive GRU architecture described in Appendix~\ref{app:architecture}; they differ only in the number of recurrent layers and hidden dimensions. Table~\ref{tab:scaling_sizes} summarises the sizes and the corresponding total parameter counts.

\begin{table}[h]
\centering
\small
\begin{tabular}{lcccc}
\toprule
\textbf{Size} & \textbf{GRU layers} & \textbf{RNN hidden} & \textbf{Head hidden} & \textbf{Parameters} \\
\midrule
Tiny   & 1 & 64  & 64  & $44{,}452$ \\
Small  & 2 & 128 & 128 & $269{,}156$ \\
Medium & 3 & 256 & 256 & $1{,}456{,}356$ \\
\bottomrule
\end{tabular}
\vspace{0.5em}
\caption{
Model sizes used in the scaling experiment. All models use an embedding dimension of $32$ and are evaluated on the frustrated Heisenberg $J_1$--$J_2$ chain with $N=12$ sites. Parameter counts include all weights and biases of the full autoregressive network (embedding, recurrent backbone, amplitude head, and phase head).
}
\label{tab:scaling_sizes}
\end{table}

The hyperparameters used for each optimization method in the scaling experiment are listed in Table~\ref{tab:scaling_hyperparams}. The schedule and clipping parameters are identical to those of the main $J_1$--$J_2$ experiments (see Tables~\ref{tab:j1j2_hyperparams} and~\ref{tab:architecture_hyperparams}); only the number of transition steps differs to account for the change in model size.

\begin{table}[h]
\centering
\small
\begin{adjustbox}{max width=\linewidth}
\begin{tabular}{lccc}
\toprule
\textbf{Hyperparameter} & \textbf{Adam} & \textbf{PWO} & \textbf{minSR} \\
\midrule
Optimizer / method & Adam & Adam + PWO & minSR \\
Learning rate & $10^{-5}$ & $10^{-5}$ & $10^{-3}$ \\
Peak learning rate & $3\times 10^{-4}$ & $10^{-4}$ & -- \\
Transition steps & $40{,}000$ & $40{,}000$ & -- \\
PPO epochs & -- & $4$ & -- \\
PPO clip $\epsilon$ & -- & $10^{-3}$ & -- \\
Advantage normalization & -- & Yes & -- \\
\midrule
Phase loss & -- & $\Delta\phi$ clip & -- \\
Phase coefficient & -- & $1.0$ & -- \\
Phase clip & -- & $0.3$ & -- \\
Center imaginary advantage & -- & Yes & -- \\
Normalize imaginary advantage & -- & Yes & -- \\
Phase Jacobian baseline & -- & Yes & -- \\
\midrule
SR diagonal shift & -- & -- & $10^{-2}$ \\
NTK / minSR mode & -- & -- & Yes \\
On-the-fly SR & -- & -- & Yes \\
\midrule
Training samples & \multicolumn{3}{c}{$2{,}048$} \\
Seeds & \multicolumn{3}{c}{$10$} \\
\bottomrule
\end{tabular}
\end{adjustbox}
\vspace{0.5em}
\caption{
Hyperparameters for the scaling experiment on the frustrated Heisenberg $J_1$--$J_2$ chain ($N=12$, $J_1=1$, $J_2=0.5$, periodic boundary conditions). All methods and sizes use the same optimization hyperparameters; only the model architecture varies across runs (see Table~\ref{tab:scaling_sizes}). Adam and PWO use a cosine one-cycle learning-rate schedule; minSR uses a constant learning rate.
}
\label{tab:scaling_hyperparams}
\end{table}
\subsection{RWKV-7 on Ising Model}

\begin{table}[h]
\centering
\small
\begin{adjustbox}{max width=\linewidth}
\begin{tabular}{lcc}
\toprule
\textbf{Hyperparameter} & \textbf{Adam} & \textbf{PWO} \\
\midrule
Optimizer / method & Adam & Adam + PWO \\
Model & RWKV-7 & RWKV-7 \\
Model size & $1.5$B & $1.5$B \\
Learning rate & $10^{-5}$ & $10^{-5}$ \\
Transition steps & $1{,}200$ & $4{,}800$ \\
Decay rate & $0.5$ & $0.5$ \\
PPO epochs & -- & $4$ \\
PPO clip $\epsilon$ & -- & $10^{-3}$ \\
Advantage normalization & -- & Yes \\
\midrule
Batch size & $150$ & $150$ \\
Machine power & $2$ & $2$ \\
\midrule
Evaluation samples & $4{,}096$ & $4{,}096$ \\
Evaluation batch size & $128$ & $128$ \\
Exact diagonalization & Yes & Yes \\
\bottomrule
\end{tabular}
\end{adjustbox}
\vspace{0.5em}
\caption{
Hyperparameters for the RWKV-7 fine-tuning experiments on the transverse-field Ising model. Both methods use an autoregressive RWKV-7 model with $1.5$B parameters, $N=12$, periodic boundary conditions, $J=1$, $h=1$, exact diagonalization for evaluation, and samples drawn exactly from the autoregressive Born distribution. Adam corresponds to the single-epoch first-order baseline, while PWO performs four proximal inner epochs per sampled batch.
}
\label{tab:rwkv7_hyperparams}
\end{table}
\clearpage
\section{Additional Figures}
\subsection{Individual Seed Plots for All Hamiltonias}
\label{app:individual_seeds_hams}
\begin{figure}[h]
    \centering
    \captionsetup{font=small,skip=3pt}

    \resizebox{0.99\textwidth}{!}{%
    \begin{minipage}{\textwidth}
    \centering

    {\small\textbf{Ising model}}

    \vspace{0.15em}

    \begin{subfigure}[t]{0.48\textwidth}
        \centering
        \includegraphics[width=\linewidth]{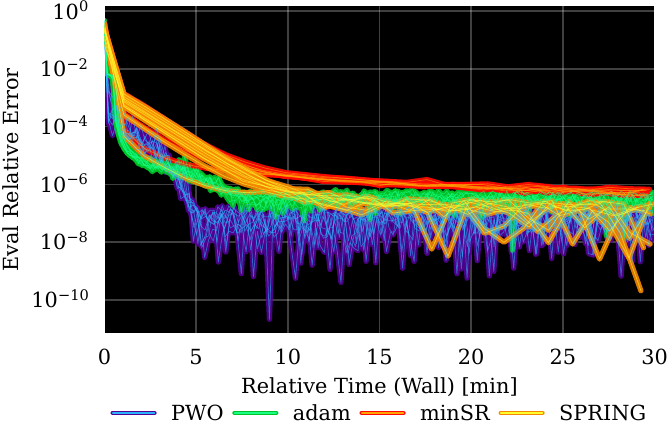}
    \end{subfigure}
    \hfill
    \begin{subfigure}[t]{0.48\textwidth}
        \centering
        \includegraphics[width=\linewidth]{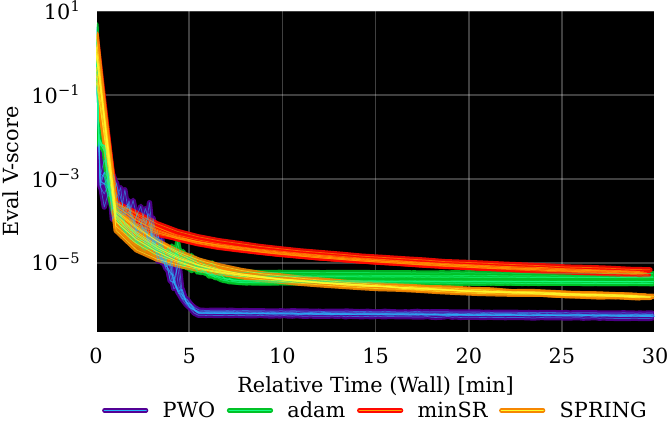}
    \end{subfigure}

    \vspace{0.15em}

    {\small\textbf{Heisenberg chain}}

    \vspace{0.15em}

    \begin{subfigure}[t]{0.48\textwidth}
        \centering
        \includegraphics[width=\linewidth]{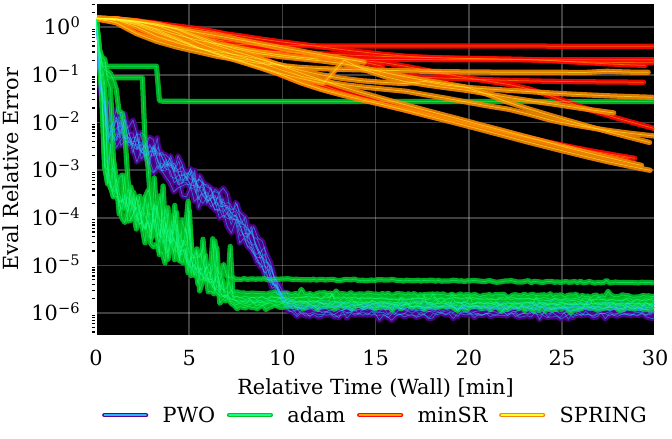}
    \end{subfigure}
    \hfill
    \begin{subfigure}[t]{0.48\textwidth}
        \centering
        \includegraphics[width=\linewidth]{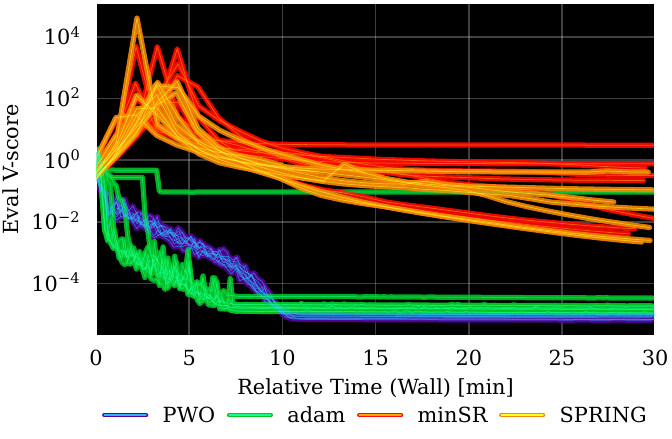}
    \end{subfigure}

    \vspace{0.15em}

    {\small\textbf{$J_1$--$J_2$ Heisenberg model}}

    \vspace{0.15em}

    \begin{subfigure}[t]{0.48\textwidth}
        \centering
        \includegraphics[width=\linewidth]{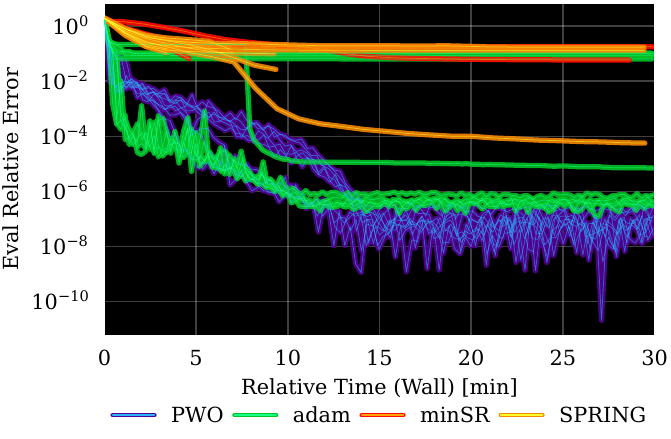}
    \end{subfigure}
    \hfill
    \begin{subfigure}[t]{0.48\textwidth}
        \centering
        \includegraphics[width=\linewidth]{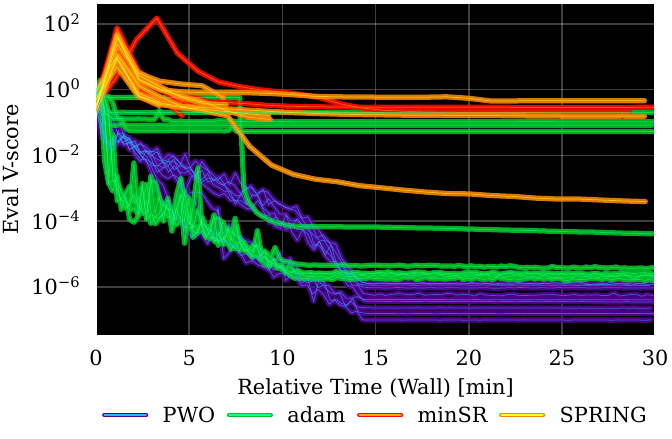}
    \end{subfigure}

    \vspace{0.15em}

    {\small\textbf{Square-lattice $J_1$--$J_2$ Heisenberg model}}

    \vspace{0.15em}

    \begin{subfigure}[t]{0.48\textwidth}
        \centering
        \includegraphics[width=\linewidth]{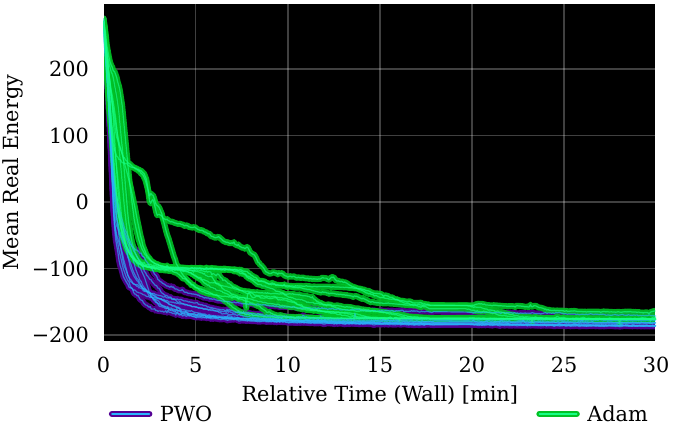}
    \end{subfigure}
    \hfill
    \begin{subfigure}[t]{0.48\textwidth}
        \centering
        \includegraphics[width=\linewidth]{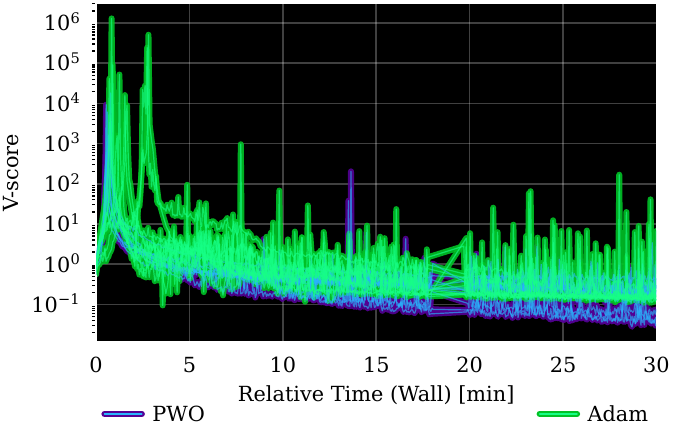}
    \end{subfigure}

    \end{minipage}
    }

    \caption{
    Individual-seed learning curves for all Hamiltonians. Each row corresponds to one Hamiltonian, and the right column reports the V-score.
    }
    \label{fig:appendix_seed_curves}
\end{figure}

\newpage

\subsection{Scaling Samples and System Sizes}
\label{app:additional_scaling}

\begin{figure}[h]
    \centering
    \includegraphics[width=\textwidth]{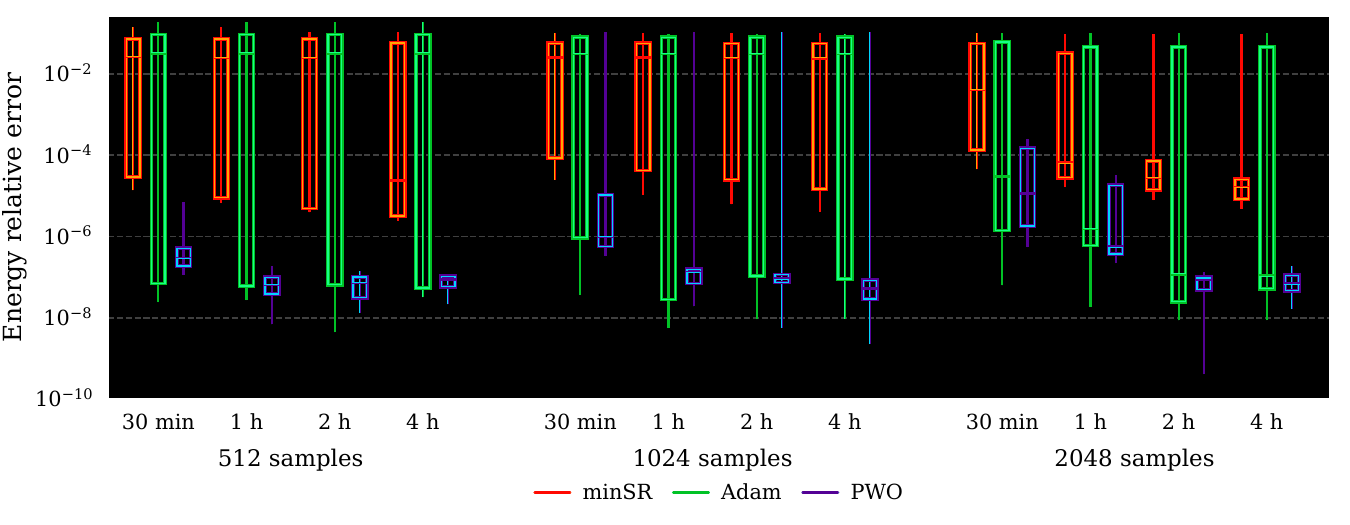}
    \caption{
        Wall-clock scaling comparison across number of samples and optimization methods.
        Boxplots show the interquartile mean relative error over seeds, with boxes indicating the interquartile range and lines indicating the min and max.
        Results are grouped by model size and wall-clock time, and run on a single NVIDIA A100 GPU.
    }
    \label{fig:scaling_plot_samples}
\end{figure}

\begin{figure}[h]
    \centering
    \includegraphics[width=\textwidth]{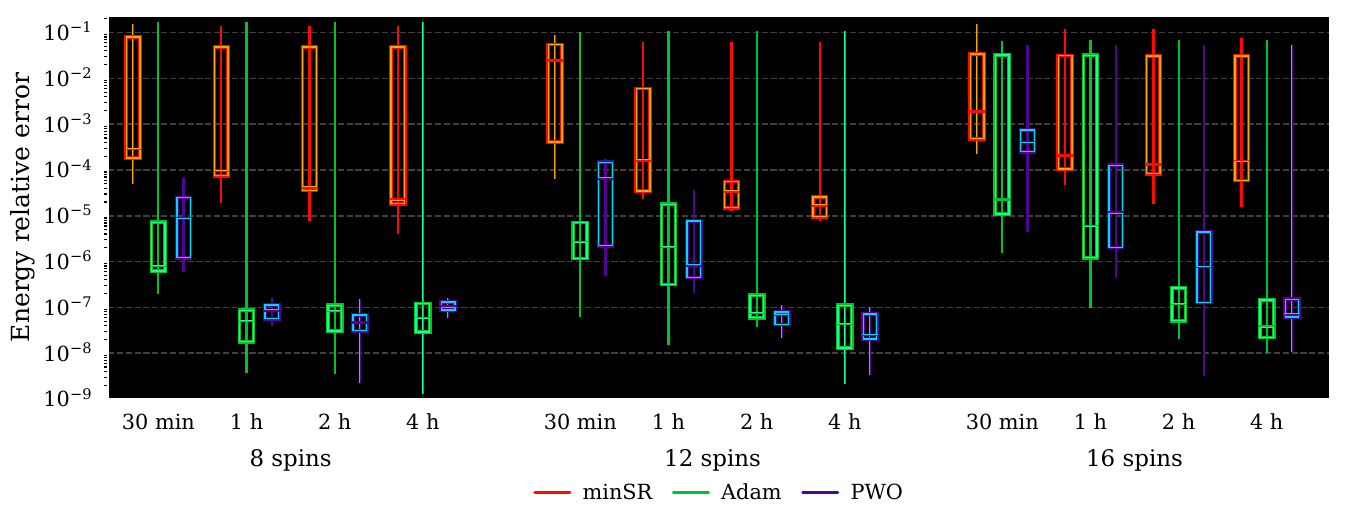}
    \caption{
        Wall-clock scaling comparison across system size and optimization methods.
        Boxplots show the interquartile mean relative error over seeds, with boxes indicating the interquartile range and lines indicating the min and max.
        Results are grouped by model size and wall-clock time, and run on a single NVIDIA A100 GPU.
    }
    \label{fig:scaling_plot_size}
\end{figure}

\newpage

\subsection{Individual Seed Plots for RWKV7 Fine-tuning}\label{app:individual_seeds_LLM}

\begin{figure}[h]
    \centering
    \begin{subfigure}[t]{0.49\linewidth}
        \centering
        \includegraphics[width=\linewidth]{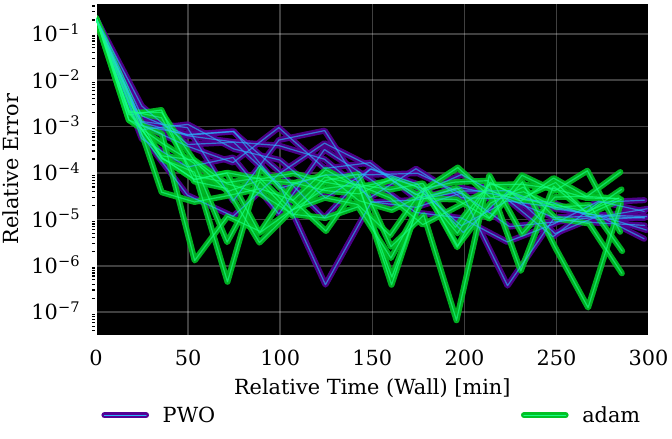}
    \end{subfigure}
    \hfill
    \begin{subfigure}[t]{0.49\linewidth}
        \centering
        \includegraphics[width=\linewidth]{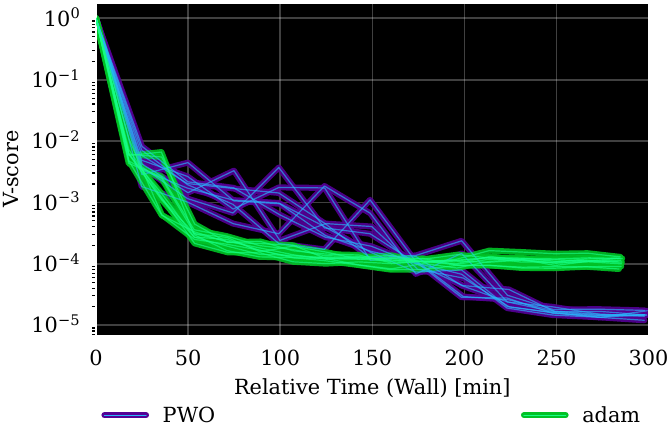}
    \end{subfigure}
    \caption{Individual-seed fine-tuning curves for the $1.5$B-parameter RWKV-7 neural quantum state on the transverse-field Ising model. Each curve corresponds to one random seed, with relative error shown on the left and V-score on the right. PWO consistently remains stable across seeds and reaches lower final error and variance than the Adam baseline, indicating that the proximal objective improves robustness even in the billion-parameter regime.}
\end{figure}

\end{document}